\newtheorem{theorem}{Theorem}[section]
\newtheorem{lemma}[theorem]{Lemma}
\newtheorem{remark}[theorem]{Remark}
\newtheorem{definition}[theorem]{Definition}
\newcommand{\real}{{\mathbb{R}}}
\newcommand{\scirc}{\raise1pt\hbox{$\,\scriptstyle\circ\,$}}
\newcommand{\PP}{\mathcal{P}}
\newcommand{\MM}{\mathcal{M}}
\newcommand{\EE}{\mathcal{E}}
\newcommand{\TT}{\mathcal{T}}
\newcommand{\VV}{\mathcal{V}} 
\newcommand{\interior}{\operatorname{int}}
\newcommand{\Gvis}[1]{ \mathcal{G}_{{\rm vis},#1} }  
\newcommand{\umax}{u_{\rm max}}             
\newcommand{\successor}{\operatorname{successor}}
\newcommand{\Tp}{{\TT}_{\PP}}     
\newcommand{\VVver}{\tilde{\mathcal{V}}} 
\newcommand{\LISTEN}{\textup{LISTEN}}
\newcommand{\PROCESS}{\textup{PROCESS}}
\newcommand{\SLEW}{\textup{SLEW}}
\newcommand{\BROADCAST}{\textup{BROADCAST}}
\newcommand\oprocendsymbol{\hbox{$\square$}}
\newcommand\oprocend{\relax\ifmmode\else\unskip\hfill\fi\oprocendsymbol}
\begin{document}\renewcommand{\thefootnote}{\fnsymbol{footnote}}



  
 
\title{\LARGE \bf Multi-Agent~Deployment~for Visibility~Coverage~in
  Polygonal~Environments~with~Holes \thanks{
    \noindent This version: \today.  This work has been supported in
    part by AFOSR MURI Award F49620-02-1-0325, NSF Award CMS-0626457,
    and a DoD SMART fellowship.  Thanks to Michael Schuresko (UCSC)
    and Antonio Franchi (Uni Roma) for helpful comments.}}
\author{
  Karl J. Obermeyer \thanks{Karl J. Obermeyer and Francesco Bull are
    with the Center for Control, Dynamical Systems, and Computation,
    University of California at Santa Barbara, Santa Barbara, CA
    93106, USA, \texttt{karl@engineering.ucsb.edu},
    \texttt{bullo@engineering.ucsb.edu}}
  \and \ \ \ \ \ \ \ \ \ \ \ \ \ \ Anurag Ganguli\thanks{Anurag
    Ganguli is with the UtopiaCompression Corporation, 11150 W. Olympic
    Blvd, Suite 820 Los Angeles, CA 90064,
    \texttt{anurag@utopiacompression.com}}
  \and \ \ \ \ \ \ \ \ \ \ \ \ \ \ Francesco
  Bullo
 }

\maketitle

\begin{abstract}
  This article presents a distributed algorithm for a group of robotic
  agents with omnidirectional vision to deploy into nonconvex
  polygonal environments with holes.  Agents begin deployment from a
  common point, possess no prior knowledge of the environment, and
  operate only under line-of-sight sensing and communication.  The
  objective of the deployment is for the agents to achieve full
  visibility coverage of the environment while maintaining
  line-of-sight connectivity with each other.  This is achieved by
  incrementally partitioning the environment into distinct regions,
  each completely visible from some agent.  Proofs are given of (i)
  convergence, (ii) upper bounds on the time and number of agents
  required, and (iii) bounds on the memory and communication
  complexity. Simulation results and description of robust extensions
  are also included.
\end{abstract}
 
\section{Introduction}
\label{sec:intro}

Robots are increasingly being used for surveillance missions too
dangerous for humans, or which require duty cycles beyond human
capacity.  In this article we design a distributed algorithm for
deploying a group of mobile robotic agents with omnidirectional vision
into nonconvex polygonal environments with holes, e.g., an urban or
building floor plan.  Agents are identical except for their unique
identifiers (UIDs), begin deployment from a common point, possess no
prior knowledge of the environment, and operate only under
line-of-sight sensing and communication.  The objective of the
deployment is for the agents to achieve full visibility coverage of
the environment while maintaining line-of-sight connectivity (at any
time the agents' visibility graph consists of a single connected
component).  We call this the \emph{Distributed Visibility-Based
  Deployment Problem with Connectivity}.  Once deployed, the agents
may supply surveillance information to an operator through the ad-hoc
line-of-sight communication network.  A graphical description of our
objective is given in Fig.~\ref{fig:dfcd_sim}.

\begin{figure}[thb!]
\begin{center}
\resizebox{0.3\linewidth}{!}{ \includegraphics{./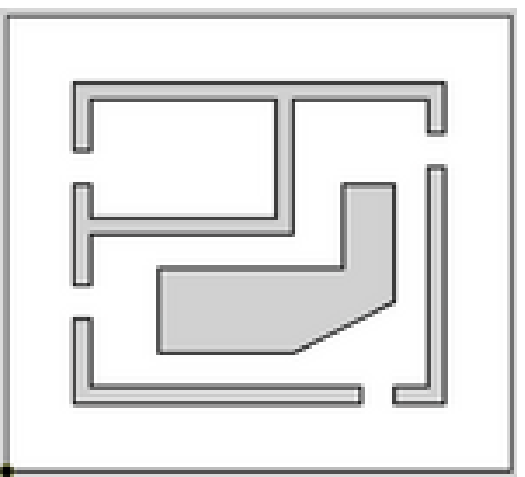} } 
\qquad \resizebox{0.3\linewidth}{!}{ \includegraphics{./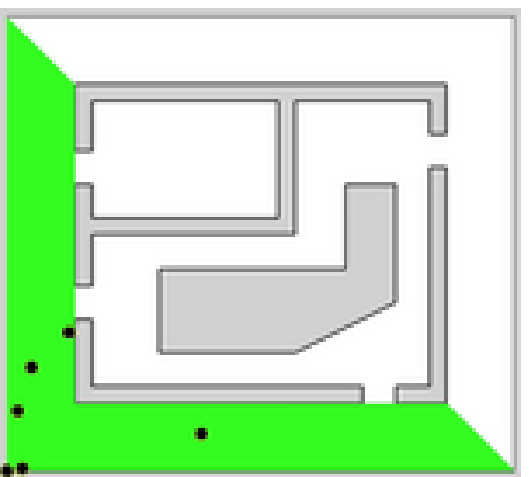} } 
\qquad \resizebox{0.3\linewidth}{!}{ \includegraphics{./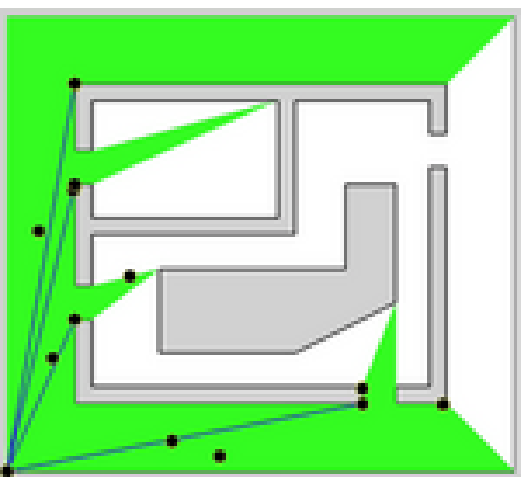} } 
\vspace{0.1em}

\resizebox{0.3\linewidth}{!}{ \includegraphics{./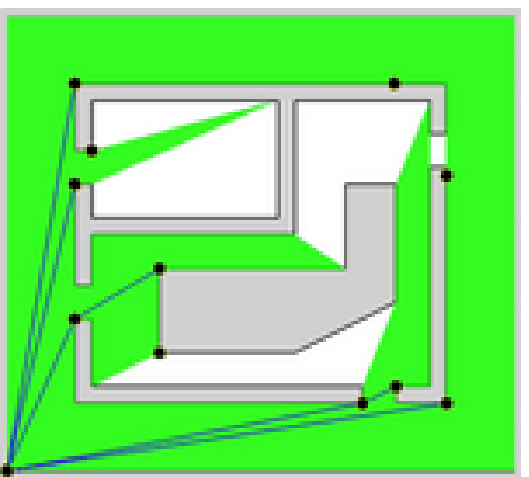} } 
\qquad \resizebox{0.3\linewidth}{!}{ \includegraphics{./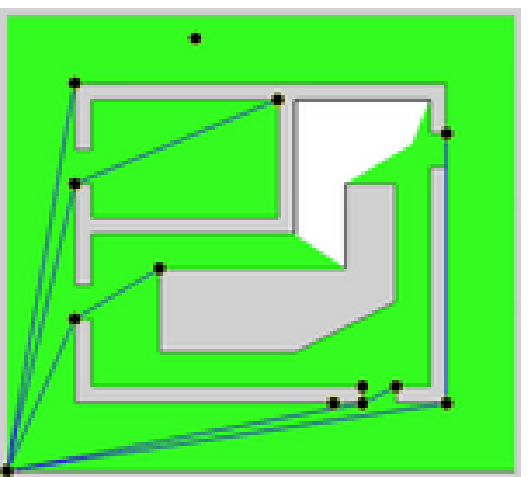} } 
\qquad \resizebox{0.3\linewidth}{!}{ \includegraphics{./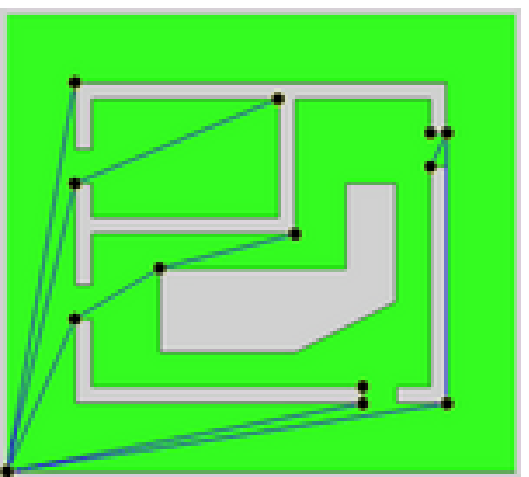} } 

\caption{\label{fig:dfcd_sim} This sequence (left to right, top to
  bottom) shows a simulation run of the distributed visibility-based
  deployment algorithm described in
  Sec.~\ref{sec:distributed_deployment}.  Agents (black disks)
  initially are colocated in the lower left corner of the environment.
  As the agents spread out, they claim areas of responsibility (green)
  which correspond to cells of the incremental partition tree $\Tp$.
  Blue lines show line-of-sight connections between agents responsible
  for neighboring vertices of $\Tp$.  Once agents have settled to
  their final positions, every point in the environment is visibile to
  some agent and the agents form a line-of-sight connected network.
  An animation of this simulation can be viewed at {\tt
    http://motion.me.ucsb.edu/$\sim$karl/movies/dwh.mov} .}
\end{center}
\end{figure}

Approaches to visibility coverage problems can be divided into two
categories: those where the environment is known a priori and those
where the environment must be discovered.
When the environment is known a priori, a well-known approach is the
\emph{Art Gallery Problem} in which one seeks the smallest set of
guards such that every point in a polygon is visible to some guard.
This problem has been shown both NP-hard~\cite{DTL-AKL:86} and
APX-hard~\cite{SE-CS-PW:01} in the number of vertices $n$ representing
the environment.  The best known approximation algorithms offer
solutions only within a factor of $O(\log g)$, where $g$ is the
optimum number of agents~\cite{AE-SHP:06}.  The \emph{Art Gallery
  Problem with Connectivity} is the same as the Art Gallery Problem,
but with the additional constraint that the guards' visibility graph
must consist of a single connected component, i.e., the guards must
form a connected network by line of sight.  This problem is also
NP-hard in $n$ \cite{BCL-NFH-RCTL:93}.  Many other variations on the
Art Gallery Problem are well surveyed in \cite{JU:00,JOR:87,TCS:92}.
The classical \emph{Art Gallery Theorem}, proven first in \cite{VC:75}
by induction and in \cite{SF:78} by a beautiful coloring argument,
states that $\lfloor \frac{n}{3} \rfloor$ vertex guards\footnote{A
  \emph{vertex guard} is a guard which is located at a vertex of the
  polygonal environment.}  are always sufficient and sometimes
necessary to cover a polygon with $n$ vertices and no holes.  The
\emph{Art Gallery Theorem with Holes}, later proven independently by
\cite{IBS-DS:95} and \cite{FH-MK-KK:91}, states that $\lfloor
\frac{n+h}{3} \rfloor$ point guards\footnote{A \emph{point guard} is a
  guard which may be located anywhere in the interior or on the
  boundary of a polygonal environment.} are always sufficient and
sometimes necessary to cover a polygon with $n$ vertices and $h$
holes.  If guard connectivity is required, \cite{GHP:94} proved by
induction and \cite{VP:03c} by a coloring argument, that $\lfloor
\frac{n-2}{2} \rfloor$ vertex guards are always sufficient and
occasionally necessary for polygons without holes.  We are not aware
of any such bound for connected coverage of polygons with holes.  For
polygonal environments with holes, centralized camera-placement
algorithms described in \cite{HGB-JCL:01} and \cite{ME-SS:06} take
into account practical imaging limitations such as camera range and
angle-of-incidence, but at the expense of being able to obtain
worst-case bounds as in the Art Gallery Theorems.  The constructive
proofs of the Art Gallery Theorems rely on global knowledge of the
environment and thus are not amenable to emulation by distributed
algorithms.

One approach to visibiliy coverage when the environment must be
discovered is to first use SLAM (Simultaneous Localization And
Mapping) techniques \cite{ST-WB-DF:05} to explore and build a map of
the entire environment, then use a centralized procedure to decide
where to send agents.  In~\cite{RS-DA-DF-RPG-KZH-DJM-MP-ST:00}, for
example, deployment locations are chosen by a human user after an
initial map has been built. Waiting for a complete map of the entire
environment to be built before placing agents may not be desirable.
In \cite{AH-MJM-GSS:02b} agents fuse sensor data to build only a map
of the portion of the environment covered so far, then heuristics are
used to deploy agents onto the frontier of the this map, thus
repeating this procedure incrementally expands the covered region.
For any techniques relying heavily on SLAM, however, synchronization
and data fusion can pose significant challenges under communication
bandwidth limitations.  In \cite{SS-EV-PW:08} agents discover and
achieve visibility coverage of an environment not by building a
geometric map, but instead by sharing only combinatorial information
about the environment, however, the strategy focuses on the
theoretical limits of what can be achieved with minimalistic sensing,
thus the amount of robot motion required becomes impractical.

Most relevant to and the inspiration for the present work are the
distributed visibility-based deployment algorithms, for polygonal
environments without holes, developed recently by Ganguli et al
\cite{AG-JC-FB:05z,AG-JC-FB:06r,AG:07}
.  These algorithms are simple, require only limited impact-based
communication, and offer worst-case optimal bounds on the number of
agents required.  The basic strategy is to incrementally construct a
so-called \emph{nagivation tree} through the environment.  To each
vertex in the navigation tree corresponds a region of the the
environment which is completely visible from that vertex.  As agents
move through the environment, they eventually settle on certain nodes
of the navigation tree such that the entire environment is covered.


The contribution of this article is the first distributed deployment
algorithm which solves, with provable performance, the Distributed
Visibility-Based Deployment Problem with Connectivity in polygonal
environments with holes.  Our algorithm operates using line-of-sight
communication and a so-called \emph{partition tree} data structure
similar to the \emph{navigation tree} used by Ganguli et al as
described above.  The algorithms of Ganguli et al fail in polygonal
environments with holes because branches of the navigation tree
conflict when they wrap around one or more holes.  Our algorithm,
however, is able to handle such ``branch conflicts''.  Given at least
$\lfloor \frac{n+2h-1}{2} \rfloor $ agents in an environment with $n$
vertices and $h$ holes, the deployment is guaranteed to achieve full
visibility coverage of the environment in time $\mathcal{O}(n^2+nh)$,
or time $\mathcal{O}(n+h)$ under certain technical conditions.  We
also prove bounds on the memory and communication complexity.  The
deployment behaves in simulations as predicted by the theory and can
be extended to achieve robustness to agent arrival, agent failure,
packet loss, removal of an environment edge (such as an opening door),
or deployment from multiple roots.

This article is organized as follows.  We begin with some technical
definitions in Sec.~\ref{sec:notation}, then a precise statement of
the problem and assumptions in Sec.~\ref{sec:problem}.  Details on the
agents' sensing, dynamics, and communication are given in
Sec.~\ref{sec:net_model}.  Algorithm descriptions, including
pseudocode and simulation results, are presented in
Sec.~\ref{sec:incremental_partition} and
Sec.~\ref{sec:distributed_deployment}.  We conclude in
Section~\ref{sec:conclusion}.

\section{Notation and Preliminaries}
\label{sec:notation}

We begin by introducing some basic notation. The real numbers are
represented by $\real$.  Given a set, say $A$, the interior of $A$ is
denoted by $\interior(A)$, the boundary by $\partial A$, and the
cardinality by $|A|$.  Two sets $A$ and $B$ are \emph{openly disjoint}
if $\interior(A) \cap \interior(B) = \emptyset$.  Given two points
$a,b \in \real^2$, $[a,b]$ is the \emph{closed segment} between $a$
and $b$.  Similarly, $]a,b[$ is the \emph{open segment} between $a$
and $b$.  The number of robotic agents is $N$ and each of these agents
has a unique identifier (UID) taking a value in $\{ 0,\ldots,N-1 \}$.
Agent positions are $P = (p^{[0]},\ldots, p^{[N-1]})$, a tuple of
points in $\real^2$.  Just as $p^{[i]}$ represents the position of
agent $i$, we use such superscripted square brackets with any variable
associated with agent $i$, e.g., as in Table~\ref{tab:dfcd_local_vars}.

We turn our attention to the environment, visibility, and graph
theoretic concepts.  The environment $\EE$ is polygonal with vertex
set $V_\EE$, edge set $E_\EE$, total vertex count $n=|V_\EE|=|E_\EE|$,
and hole count $h$.  Given any polygon $c \subset \EE$, the vertex set
of $c$ is $V_c$ and the edge set is $E_c$.  A segment $[a,b]$ is a
\emph{diagonal} of $\EE$ if (i) $a$ and $b$ are vertices of $\EE$, and
(ii) $]a,b[ \subset \interior(\EE)$.  Let $e$ be any point in $\EE$.
The point $e$ is \emph{visible from} another point $e'\in \EE$ if
$[e,e'] \subset \EE$.  The \emph{visibility polygon} $\VV(e)\subset
\EE$ of $e$ is the set of points in $\EE$ visible from $e$
(Fig.~\ref{fig:visibility_polygons}).  The \emph{vertex-limited
  visibility polygon} $\VVver(e) \subset \VV$ is the visibility
polygon $\VV(e)$ modified by deleting every vertex which does not
coincide with an environment vertex
(Fig.~\ref{fig:visibility_polygons}).  A \emph{gap edge} of $\VV(e)$
(resp. $\VVver(e)$) is defined as any line segment $[a,b]$ such that
$]a,b[ \subset \interior(\EE)$, $[a,b] \subset \partial \VV(e)$
(resp. $[a,b] \subset \partial \VVver(e)$), and it is maximal in the
sense that $a,b \in \partial \EE$.  Note that a gap edge of
$\VVver(e)$ is also a diagonal of $\EE$.  For short, we refer to the
gap edges of $\VV(e)$ as the \emph{visibility gaps} of $e$.
\begin{figure}[h!]
\begin{center}
  \resizebox{0.4 \linewidth}{!}{\includegraphics{./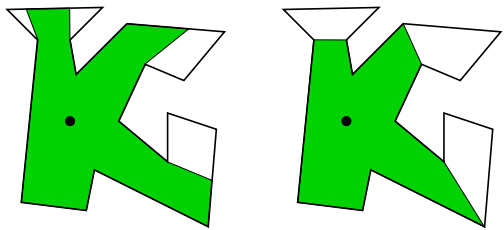}}
  \caption{\label{fig:visibility_polygons} In a simple nonconvex
    polygonal environment are shown examples of the visibility polygon
    (red, left) of a point observer (black disk), and the vertex-limited
    visibility polygon (red, right) of the same point.}
\end{center}
\end{figure}
A set $R \subset \EE$ is \emph{star-convex} if there exists a point $e
\in R $ such that $R \subset \VV(e)$.  The \emph{kernel} of a
star-convex set $R$, 
is the set $\{e \in \EE | R \subset \VV(e) \}$, i.e., all points in
$R$ from which all of $R$ is visible.  The \emph{visibility graph}
$\Gvis{\EE}(P)$ of a set of points $P$ in environment $\EE$ is the
undirected graph with $P$ as the set of vertices and an edge between
two vertices if and only if they are (mutually) visible.  A
\emph{tree} is a connected graph with no simple cycles.  A
\emph{rooted tree} is a tree with a special vertex designated as the
\emph{root}.  The \emph{depth} of a vertex in a rooted tree is the
minimum number of edges which must be treversed to reach the root from
that vertex. Given a tree $\TT$, $V_\TT$ is its set of vertices and
$E_\TT$ its set of edges.

\section{Problem Description and Assumptions}
\label{sec:problem}

The \emph{Distributed Visibility-Based Deployment Problem with
  Connectivity} which we solve in the present work is formally stated
as follows:
\begin{quote}
  Design a distributed algorithm for a network of autonomous robotic
  agents to deploy into an unmapped environment such that from their
  final positions every point in the environment is visible from some
  agent.  The agents begin deployment from a common point, their
  visibility graph $\Gvis{\EE}(P)$ is to remain connected, and they
  are to operate using only information from local sensing and
  line-of-sight communication.
\end{quote}
By local sensing we intend that each agent is able to sense its
visibility gaps and relative positions of objects within line of
sight.  Additionally, we make the following \emph{main assumptions}:
\begin{enumerate}
\vspace{0.2em}
\item The environment $\EE$ is static and consists of a simple
  polygonal outer boundary together with disjoint simple polygonal
  holes.  By simple we mean that each polygon has a single boundary
  component, its boundary does not intersect itself, and the number of
  edges is finite.  
\vspace{0.5em}
\item Agents are identical except for their UIDs ($0,\ldots,N-1$).
  \vspace{0.5em}
\item Agents do not obstruct visibility or movement of other agents.
  \vspace{0.5em}
\item Agents are able to locally establish a common reference frame.
  \vspace{0.5em}
\item There are no communication errors nor packet losses.  
  \vspace{0.5em}
\end{enumerate}

Later, in Sec.~\ref{subsec:extensions} we will describe how our
nominal deployment algorithm can be extended to relax some
assumptions.

\section{Network of Visually-Guided Agents}
\label{sec:net_model}



In this section we lay down the sensing, dynamic, and communication
model for the agents.  Each agent has ``omnidirectional vision''
meaning an agent possesses some device or combination of devices which
allows it to sense within line of sight (i) the relative position of
another agent, (ii) the relative position of a point on the boundary
of the environment, and (iii) the gap edges of its visibility polygon.

For simplicity, we model the agents as point masses with first order
dynamics, i.e., agent $i$ may move through $\EE$ according to the
continuous time control system
\begin{equation}
\label{eqn:agent_dynamics}
\dot{p}^{[i]} = u^{[i]},
\end{equation}
where the control $u^{[i]}$ is bounded in magnitude by $\umax$.  The
control action depends on time, values of variables stored in local
memory, and the information obtained from communication and sensing.
Although we present our algorithms using these first order dynamics,
the crucial property for convergence is only that an agent is able to
navigate along any (unobstructed) straight line segment between two
points in the environment $\EE$, thus the deployment algorithm we
describe is valid also for higher order dynamics.
%

The agents' communication graph is precisely their visibility graph
$\Gvis{\EE}(P)$, i.e., any \emph{visibility neighbors} (mutually
visible agents) may communicate with each other.  Agents may send
their messages using, e.g., UDP (User Datagram Protocol).  Each agent
($i=0,\ldots,N-1$) stores received messages in a FIFO
(First-In-First-Out) buffer In\_Buffer$^{[i]}$ until they can be
processed.  Messages are sent only upon the occurrence of certain
asynchronous events and the agents' processors need not be
synchronized, thus the agents form an \emph{event-driven asynchronous
  robotic network} similar to that described, e.g., in
\cite{FB-JC-SM:09}.
In order for two visibility neighbors to establish a common reference
frame, we assume agents are able to solve the \emph{correspondence
  problem}: the ability to associate the messages they receive with
the corresponding robots they can see.  This may be accomplished,
e.g., by the robots performing localization, however, as mentioned in
Sec.~\ref{sec:intro}, this might use up limited communication
bandwidth and processing power.  Simpler solutions include having
agents display different colors, ``license plates'', or periodic
patterns from LEDs \cite{DC-JM-BP-OAAO-YC-RF:07}.
%

\section{Incremental Partition Algorithm}
\label{sec:incremental_partition}

We introduce a centralized algorithm to incrementally partition the
environment $\EE$ into a finite set of openly disjoint star-convex
polygonal cells.  Roughly, the algorithm operates by choosing at each
step a new \emph{vantage point} on the frontier of the uncovered
region of the environment, then computing a cell to be covered by that
vantage point (each vantage point is in the kernel of its
corresponding cell).  The frontier is pushed as more and more vantage
point - cell pairs are added until eventually the entire environment
is covered.  The vantage point - cell pairs form a directed rooted
tree structure called the \emph{partition tree} $\Tp$.  This algorithm
is a variation and extension of an incremental partition algorithm
used in \cite{AG:07}, 
the main differences being that we have added a protocol for handling
holes and adapted the notation to better fit the added complexity of
handling holes.  The deployment algorithm to be described in
Sec.~\ref{sec:distributed_deployment} is a distributed emulation of
the centralized incremental partition algorithm we present here.

Before examining the precise pseudocode
Table~\ref{tab:incremental_partition}, we informally step through the
incremental partition algorithm for the simple example of
Fig.~\ref{fig:incremental_partition}a-f.  This sequence shows the
environment partition together with corresponding abstract
representations of the partition tree $\Tp$.  Each vertex of $\Tp$ is
a vantage point - cell pair and edges are based on cell adjacency.
Given any vertex of $\Tp$, say $(p_\xi, c_\xi)$, $\xi$ is the
\emph{PTVUID (Partition Tree Vertex Unique IDentifier)}.  The PTVUID
of a vertex at depth $d$ is a $d$-tuple, e.g., (1), (2,1), or (1,1,1).
The symbol $\emptyset$ is used as the root's PTVUID.  The algorithm
begins with the root vantage point $p_\emptyset$.  The cell of
$p_\emptyset$ is the grey shaded region $c_\emptyset$ in
Fig.~\ref{fig:incremental_partition}a, which is the vertex-limited
visibility polygon $\VVver(p_\emptyset)$.  According to certain
technical criteria, made precise later, child vantage points are
chosen on the endpoints of the unexplored gap edges.  In
Fig.~\ref{fig:incremental_partition}a, dashed lines show the
unexplored gap edges of $c_\emptyset$.  Selecting $p_{(1)}$ as the
next vantage point, the corresponding cell $c_{(1)}$ becomes the
portion of $\VVver(p_{(1)})$ which is across the parent gap edge and
extends away from the parent's cell.  The vantage point $p_{(2)}$ and
its cell $c_{(2)}$ are generated in the same way.  There are now three
vertices, $(p_\emptyset, c_\emptyset)$, $(p_{(1)}, c_{(1)})$, and
$(p_{(2)}, c_{(2)})$ in $\Tp$ (Fig.~\ref{fig:incremental_partition}b).
In a similar manner, two more vertices, $(p_{(2,1)}, c_{(2,1)})$ and
$(p_{(2,1,1)}, c_{(2,1,1)})$, have been added in
Fig.~\ref{fig:incremental_partition}c.  An intersection of positive
area is found between cell $c_{(2,1,1)}$ and the cell of another
branch of $\Tp$, namely $c_{(1)}$.  To solve this \emph{branch
  conflict}, the cell $c_{(2,1,1)}$ is discarded and a special marker
called a \emph{phantom wall} (thick dashed line in
Fig.~\ref{fig:incremental_partition}d) is placed where its parent gap
edge was.  A phantom wall serves to indicate that no branch of $\Tp$
should cross a particular gap edge. The vertex $(p_{(1,2)},
c_{(1,2)})$ added in Fig.~\ref{fig:incremental_partition}e thus can
have no children.  Finally, Fig.~\ref{fig:incremental_partition}f
shows the remaining vertices $(p_{(1,1)}, c_{(1,1)})$ and
$(p_{(1,1,1)}, c_{(1,1,1)})$ added to $\Tp$ so that the entire
environment is covered and the algorithm terminates.

\begin{figure}[t!]
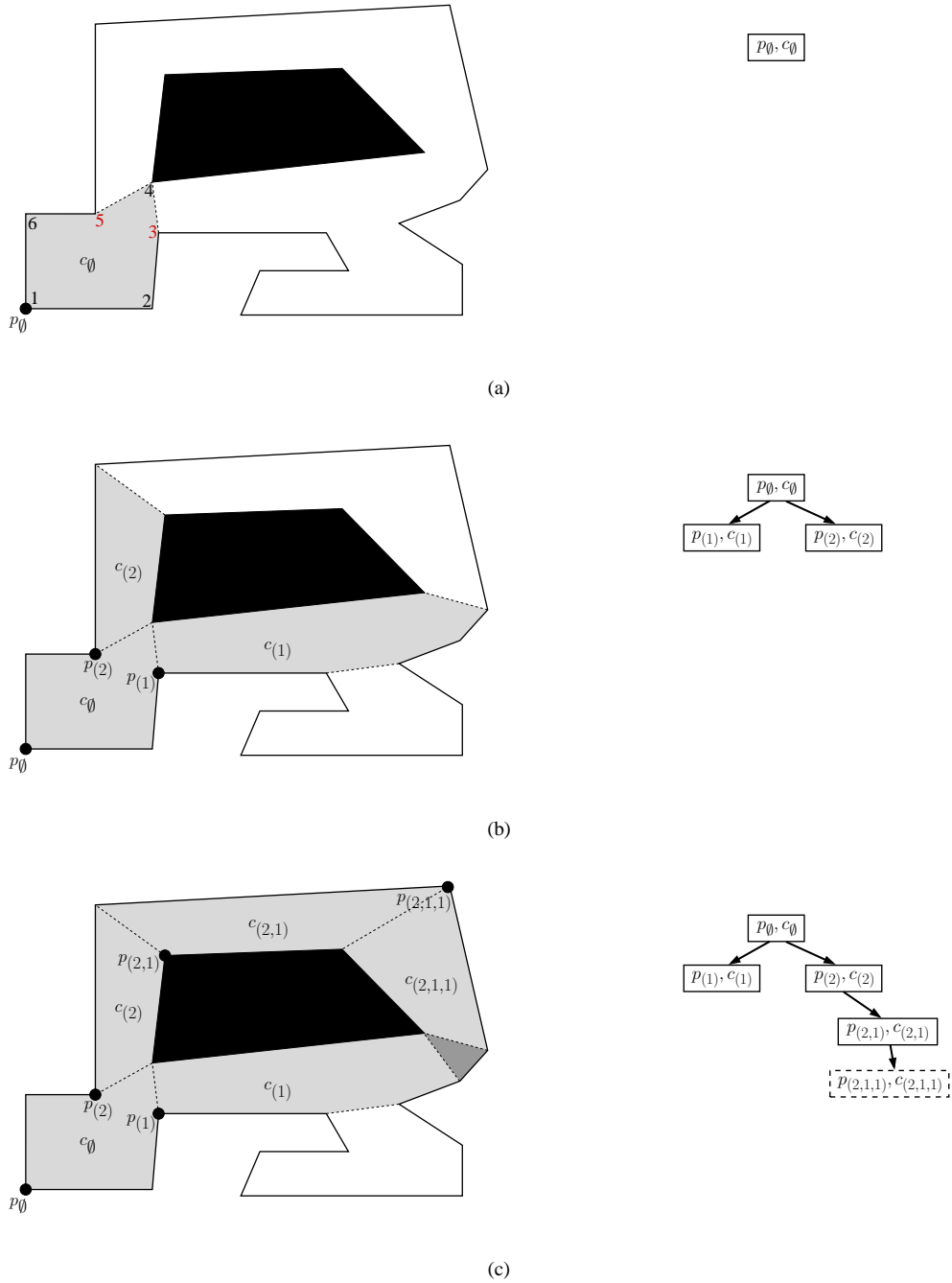

\begin{center}
   \subfigure[]{
    \resizebox{0.45\linewidth}{!}{\input{./fig/incremental_partition1.tex}} \hspace{2em}
    \resizebox{0.36\linewidth}{!}{\input{./fig/abstract_partition_tree1.tex}}
  }
   \subfigure[]{
    \resizebox{0.45\linewidth}{!}{\input{./fig/incremental_partition3.tex}} \hspace{2em}
    \resizebox{0.36\linewidth}{!}{\input{./fig/abstract_partition_tree3.tex}}
  }
   \subfigure[]{
    \resizebox{0.45\linewidth}{!}{\input{./fig/incremental_partition5a.tex}} \hspace{2em}
    \resizebox{0.36\linewidth}{!}{\input{./fig/abstract_partition_tree5a.tex}}
  }
  \caption{\label{fig:incremental_partition} This simple example shows
    how the incremental partition algorithm of
    Table~\ref{tab:incremental_partition} progresses (a)-(f).  Cell
    vantage points are shown by black disks.  The portion of the
    environment $\EE$ covered at each stage is shown in grey (left)
    along with a corresponding abstract depiction of the partition
    tree (right).  A phantom wall (thick dashed line), shown first in
    (d), comes about when there is a \emph{branch conflict}, i.e.,
    when cells from different branches of the partition tree $\Tp$ are
    not openly disjoint.  The final partition can be used to
    triangulate the environment as shown in
    Fig.~\ref{fig:triangulation}.}
\end{center}
\end{figure}

\begin{figure}[t!]
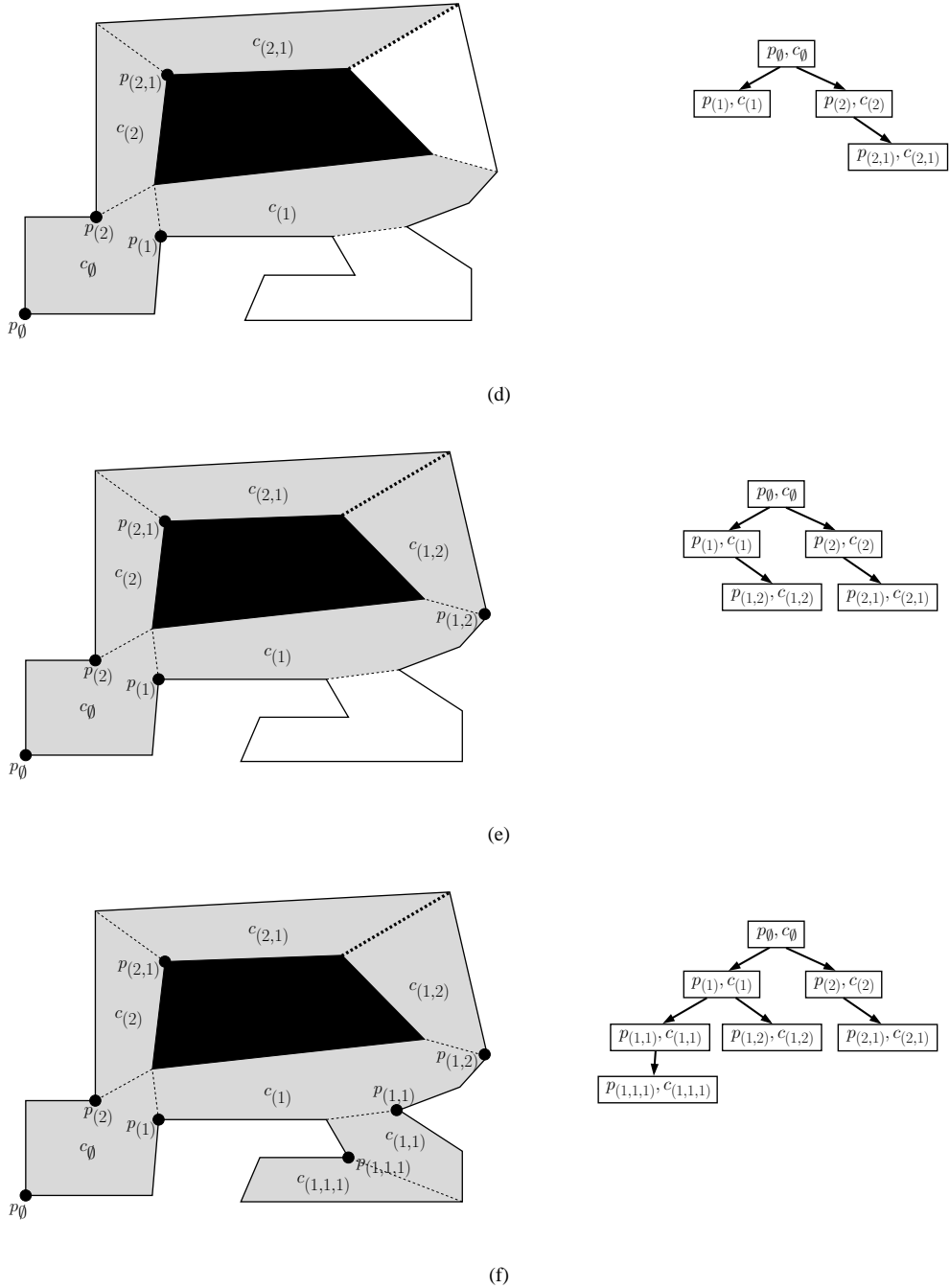

\addtocounter{figure}{-1}
\begin{center}
   \subfigure[]{
     \addtocounter{subfigure}{3}
    \resizebox{0.46\linewidth}{!}{\input{./fig/incremental_partition5b.tex}} \hspace{2em}
    \resizebox{0.36\linewidth}{!}{\input{./fig/abstract_partition_tree5b.tex}}
  }
   \subfigure[]{
    \resizebox{0.45\linewidth}{!}{\input{./fig/incremental_partition5c.tex}} \hspace{2em}
    \resizebox{0.36\linewidth}{!}{\input{./fig/abstract_partition_tree5c.tex}}
  }
   \subfigure[]{
    \resizebox{0.45\linewidth}{!}{\input{./fig/incremental_partition7.tex}} \hspace{2em}
    \resizebox{0.36\linewidth}{!}{\input{./fig/abstract_partition_tree7.tex}}
  }
  \caption{ (continuation) }
\end{center}
\end{figure}

\begin{table}
  \caption{\label{tab:incremental_partition} Centralized Incremental Partition Algorithm }
  \vspace{-3em}

\begin{quote}
{\small
      
{\rule[0em]{\linewidth}{1pt}}
INCREMENTAL\_PARTITION$(\EE, p_{\emptyset})$

\begin{algorithmic}[1]

\STATE \COMMENT{Compute and Insert Root Vertex into $\Tp$}
\STATE $c_\emptyset \leftarrow \VVver( p_\emptyset )$;
\FOR{each gap edge $g$ of $c_\emptyset$}
     \STATE label $g$ as {\tt unexplored} in $c_\emptyset$;
\ENDFOR
\STATE insert $(p_\emptyset, c_\emptyset)$ into $\Tp$;

\STATE \COMMENT{Main Loop}
\WHILE{ any cell in $\Tp$ has {\tt unexplored} gap edges }

     \STATE $c_\zeta \leftarrow$ any cell in $\Tp$ with {\tt unexplored} gap edges;
     \STATE $g \leftarrow$ any {\tt unexplored} gap edge of $c_\zeta$;
     \STATE $(p_\xi, c_\xi) \leftarrow$ CHILD$( \EE, \Tp, \zeta, g )$; \COMMENT{See Tab.~\ref{tab:incremental_partition_child}}

     \STATE \COMMENT{Check for Branch Conflicts}
     \IF{ there exists any cell $c_{\xi'}$ in $\Tp$ which is in \emph{branch conflict} with $c_{\xi}$ }
          \STATE discard $(p_\xi, c_\xi)$;
          \STATE label $g$ as {\tt phantom\_wall} in $c_\zeta$;
     \ELSE
          \STATE insert $( p_\xi, c_\xi )$ into $\Tp$;
          \STATE label $g$ as {\tt child} in $c_\zeta$;
     \ENDIF

\ENDWHILE

\STATE {\bf return} $\Tp$;

\end{algorithmic}
\vspace{-0.5em}

{\rule[0.3em]{\linewidth}{0.5pt}}

}
\end{quote}
\end{table}
\begin{table}
  \caption{\label{tab:incremental_partition_child} Incremental Partition Subroutine }
  \vspace{-3em}

\begin{quote}
{\small
      
{\rule[0em]{\linewidth}{1pt}}
CHILD$(\EE, \Tp, \zeta, g)$

\begin{algorithmic}[1]

\STATE $\xi \leftarrow \successor(\zeta,i)$, where $g$ is the $i$th nonparent gap edge of $c_\zeta$ counterclockwise from  $p_\zeta$;

\IF{ $|V_{c_\xi}| > 3$} 
     \STATE enumerate $c_\zeta$'s vertices $1,2,3,\ldots$ counterclockwise from $p_\zeta$;
\ELSE 
     \STATE enumerate $c_\zeta$'s vertices so that $p_\zeta$ is assigned $1$ and the remaining vertices of $c_\zeta$ are assigned $2$ and $3$ 
            \\ such that the vertex assigned $3$ is on the {\tt parent} gap edge of $c_\zeta$;
\ENDIF

\STATE $p_{\rm \xi} \leftarrow$ vertex on $g$ assigned an odd integer in the enumeration;
\STATE $c_\xi \leftarrow \VVver(p_\xi)$;
\STATE truncate $c_\xi$ at $g$ such that only the portion remains which is across $g$ from $p_\zeta$; 
\STATE delete from $c_\xi$ any vertices which lie across a phantom wall from $p_\xi$;

\FOR{ each gap edge $g'$ of $c_\xi$ }
     \IF{ $g' == g$ }
          \STATE label $g'$ as {\tt parent} in $c_\xi$;
     \ELSIF{ $g'$ coincides with an existing phantom wall }
          \STATE label $g'$ as {\tt phantom\_wall} in $c_\xi$;
     \ELSE
          \STATE label $g'$ as {\tt unexplored} in $c_\xi$;
     \ENDIF
\ENDFOR

\STATE {\bf return} $(p_\xi, c_\xi)$;

\end{algorithmic}
\vspace{-0.5em}

{\rule[0.3em]{\linewidth}{0.5pt}}

}
\end{quote}
\end{table}

\begin{figure}[t!]
\begin{center}
  \resizebox{0.6\linewidth}{!}{ \includegraphics{./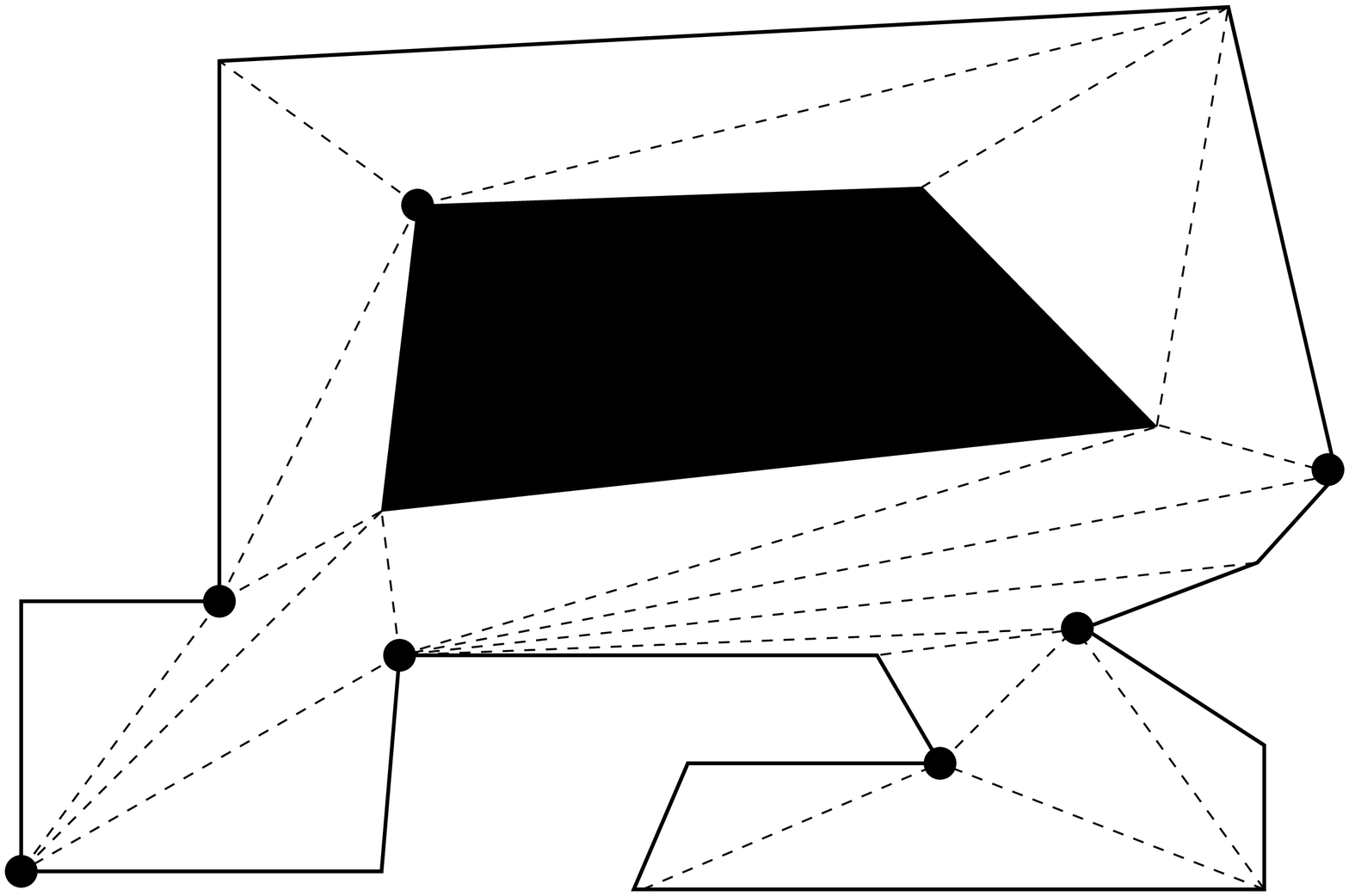}}
  \caption{\label{fig:triangulation} The partition tree produced by
    the centralized incremental partition algorithm of
    Table~\ref{tab:incremental_partition} or the distributed deployment
    algorithm of Table~\ref{tab:dfcd} can be used to triangulate an
    environment, as shown here for the simple example of
    Fig.~\ref{fig:incremental_partition}.  The triangulation is
    constructed by drawing diagonals (dashed lines) from each vantage
    point (black disks) to the visible environment vertices in its
    cell.}
\end{center}
\end{figure}

Now we turn our attention to the pseudocode
Table~\ref{tab:incremental_partition} for a precise description of the
algorithm.  The input is the environment $\EE$ and a single point
$p_\emptyset \in V_\EE$.  The output is the partition tree $\Tp$.  We
have seen that each vertex of the partition tree is a vantage point -
cell pair.  In particular, a cell is a data structure which stores not
only a polygonal boundary, but also a label on each of the polygon's
gap edges.  A gap edge label takes one of four possible values: {\tt
  parent}, {\tt child}, {\tt unexplored}, or {\tt phantom wall}.
These labels allow the following exact definition of the partition
tree.

\begin{definition}[Partition Tree $\Tp$]
\label{defn:partition_tree}
The directed rooted partition tree $\Tp$ has
\begin{enumerate}
\item vertex set consisting of vantage point - cell pairs produced by
  the incremental partition algorithm of
  Table~\ref{tab:incremental_partition}, and
\item a directed edge from vertex $(p_\zeta, c_\zeta)$ to vertex
  $(p_\xi, c_\xi)$ if and only if $c_\zeta$ has a {\tt child} gap edge
  which coincides with a {\tt parent} gap edge of $c_\xi$.
\end{enumerate}
\end{definition}

\noindent Stepping through the pseudocode
Table~\ref{tab:incremental_partition}, lines 1-5 compute and insert
the root vertex $(p_\emptyset, c_\emptyset)$ into $\Tp$.  Upon
entering the main loop at line 7, line 8 selects a cell $c_\zeta$
arbitrarily from the set of cells in $\Tp$ which have {\tt unexplored}
gap edges.  Line 9 selects an arbitrary {\tt unexplored} gap edge $g$
of $c_\zeta$.  The next vantage point candidate will be placed on an
endpoint of $g$ by a call on line 10 to the CHILD function of
Table~\ref{tab:incremental_partition_child}.  The PTVUID $\xi$ is
computed by the successor function on line 1 of
Table~\ref{tab:incremental_partition_child}.  For any $d$-tuple
$\zeta$ and positive integer $i$, $\successor(\zeta,i)$ is simply the
$(d+1)$-tuple which is the concatenation of $\zeta$ and $i$, e.g.,
$\successor((2,1),1)) = (2,1,1)$.  The CHILD function constructs a
candidate vantage point $p_\xi$ and cell $c_\xi$ as follows.  In the
typical case, when the parent cell $c_\zeta$ has more than three
edges, $c_\zeta$'s vertices are enumerated counterclockwise from
$p_\zeta$, e.g., as $c_\emptyset$'s vertices in
Fig.~\ref{fig:incremental_partition}a or Fig.~\ref{fig:special_case}.
In the special case of $c_\zeta$ being a triangle, e.g., as the
triangular cells in Fig.~\ref{fig:special_case}, $c_\zeta$'s vertices
are enumerated such that the $3$ lands on $c_\zeta$'s parent gap edge.
The vertex of $g$ which is odd in the enumeration is selected as
$p_\xi$.  Occasionally there may be \emph{double vantage points}
(colocated), e.g., as $p_{(2)}$ and $p_{(3)}$ in
Fig.~\ref{fig:special_case}.  We will see in
Sec.~\ref{subsec:sparse_vantage_point_set} that this
\emph{parity-based vantage point selection scheme} is important for
obtaining a special subset of the vantage points called the
\emph{sparse vantage point set}.  Returning to
Table~\ref{tab:incremental_partition}, the final portion of the main
loop, lines 11-17, checks whether $c_\xi$ is in \emph{branch conflict}
or $(p_\xi, c_\xi)$ should be added permanently to $\Tp$.  A cell
$c_\xi$ is in branch conflict with another cell $c_{\xi'}$ if and only
if $c_\xi$ and $c_{\xi'}$ are not openly disjoint (see
Fig.~\ref{fig:branch_conflict}).  The main algorithm terminates when
there are no more unexplored gap edges in $\Tp$.

\begin{figure}[t!]
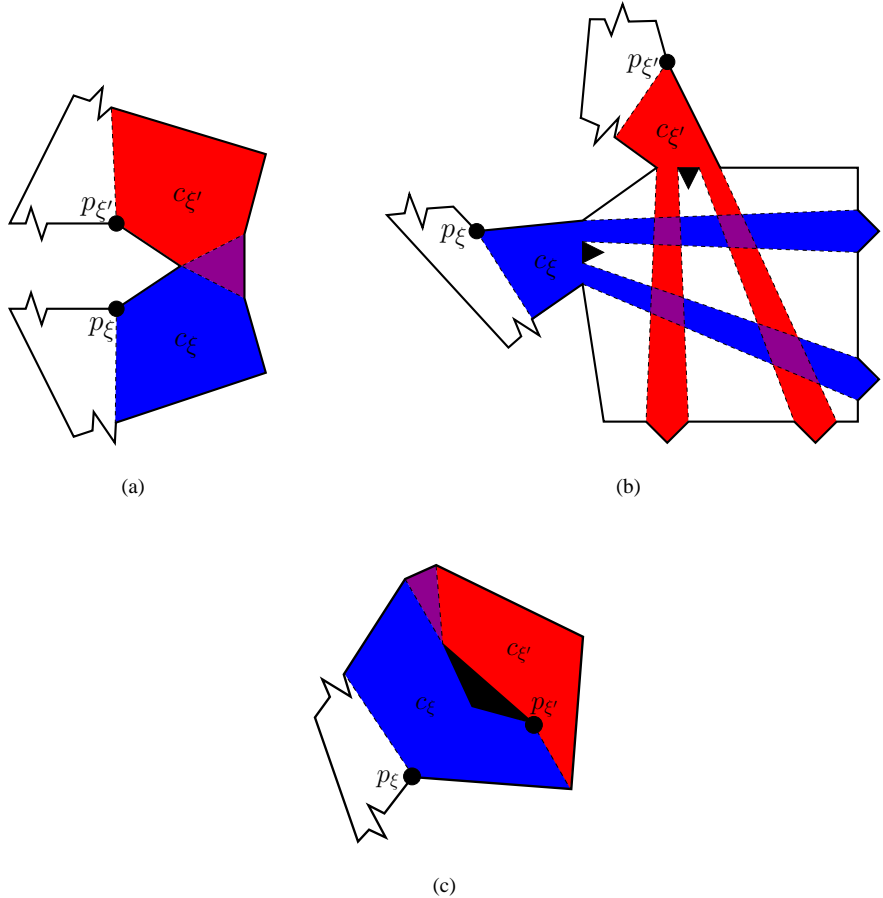

\begin{center}
  \subfigure[]{ \resizebox{0.21\linewidth}{!}{\input{./fig/deconflict1.tex}} } \hspace{3em}  
  \subfigure[]{ \resizebox{0.40\linewidth}{!}{\input{./fig/deconflict2.tex}} }
  \vspace{1.5em}

  \subfigure[]{ \resizebox{0.22\linewidth}{!}{\input{./fig/deconflict4.tex}} }
  \caption{\label{fig:branch_conflict} The incremental partition
    algorithm of Table~\ref{tab:incremental_partition} and distributed
    deployment algorithm of Table~\ref{tab:dfcd} may discard a cell
    $c_\xi$ if it is in \emph{branch conflict} with another cell
    $c_{\xi'}$ already in the partition tree, i.e., when $c_\xi$ and
    $c_{\xi'}$ and are not openly disjoint.  In these three examples,
    blue represents one cell $c_\xi$, red another cell $c_{\xi'}$, and
    purple their intersection $c_\xi \cap c_{\xi'}$.  A cell can even
    conflict with it's own parent if they enclose a hole as in (c).}
\end{center}
\end{figure}

An important difference between our incremental partition algorithm
and that of Ganguli et al \cite{AG:07} is that the set of cells
computed by our incremental partition is not unique.  This is because
the freedom in choosing cell $c_\zeta$ and gap $g$ on lines 8-9 of
Table~\ref{tab:incremental_partition} allows different executions of
the algorithm to fill the same part of the environment with different
branches of $\Tp$.  This may result in different sets of phantom walls
as well.  A phantom wall is only created on line 14 of
Table~\ref{tab:incremental_partition} when there is a branch conflict.
This discarding may seem computationally wasteful because the
environment could just be made simply connected by choosing $h$
phantom walls (one for each hole) prior to executing the algorithm.
Such an approach, however, would not be amenable to distributed
emulation without a priori knowledge of the environment.

\begin{figure}[t!]
\begin{center}
  \resizebox{0.85\linewidth}{!}{\input{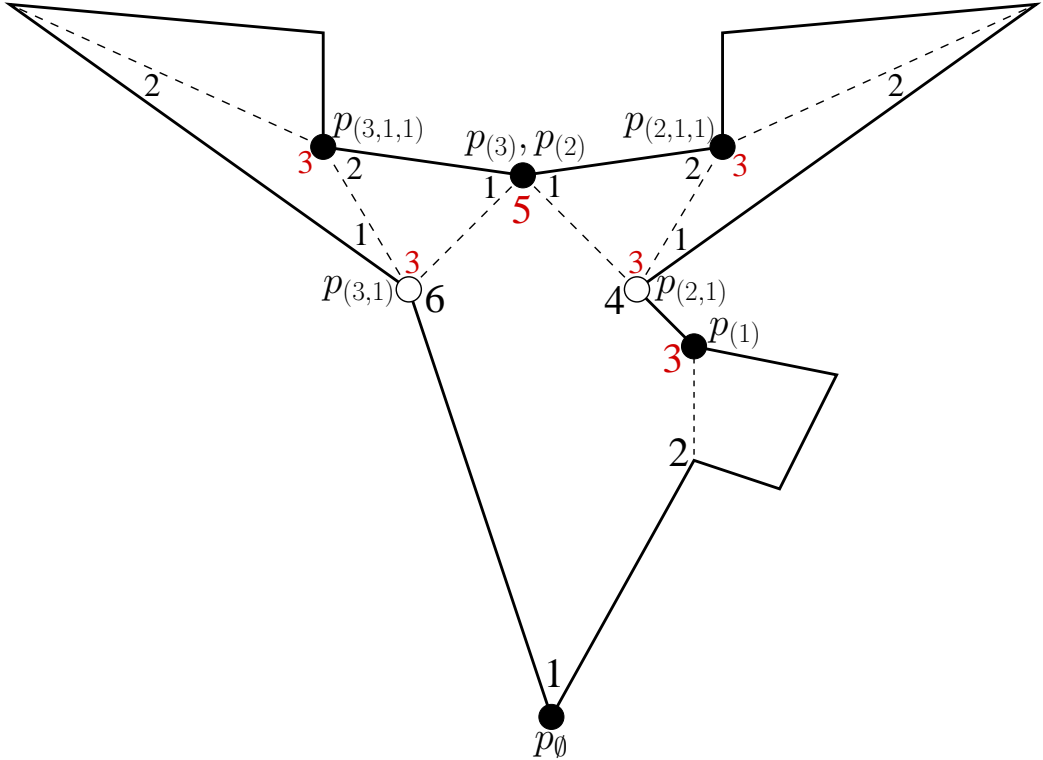}}
  \caption{\label{fig:special_case} The example used in
    Fig.~\ref{fig:incremental_partition} showed a typical incremental
    partition in which there were neither double vantage points nor
    any triangular cells.  This example, on the other hand, shows
    these special cases.  Disks, black or white, show vantage points
    produced by the incremental partition algorithm of
    Table~\ref{tab:incremental_partition}. Integers show enumerations
    of the cells used for the \emph{parity-based vantage point
      selection scheme}.  The double vantage points $p_{(2)}$ and
    $p_{(3)}$ are colocated.  The cells $c_{(2)}$, $c_{(3)}$,
    $c_{(2,1)}$, $c_{(3.1)}$, $c_{(2,1,1)}$, and $c_{(3,1,1)}$ are
    triangular.  The vantage points colored black are the \emph{sparse
      vantage points} found by the postprocessing algorithm of
    Table~\ref{tab:label_vantage_points}.  Under the distributed
    deployment algorithm of Table~\ref{tab:dfcd}, robotic agents
    position themselves at sparse vantage points.}
\end{center}
\end{figure}

The following important properties we prove for the incremental
partition algorithm are similar to properties we obtain for the
distributed deployment algorithm in
Sec.~\ref{sec:distributed_deployment}.

\begin{lemma}[Star-Convexity of Partition Cells]
\label{lm:star-convex_cells}
Any partition tree vertex $(p_\xi, c_\xi)$ constructed by the
incremental partition algorithm of
Table~\ref{tab:incremental_partition}, has the properties that
\begin{enumerate}
\item the cell $c_\xi$ is star-convex, and
\item the vantage point $p_\xi$ is in the kernel of $c_\xi$.
\end{enumerate}
\end{lemma}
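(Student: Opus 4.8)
The plan is to observe that, by the definitions of \emph{star-convex} set and \emph{kernel} given in Section~\ref{sec:notation}, both claims collapse to the single containment
\[
p_\xi \in c_\xi \subseteq \VV(p_\xi).
\]
Indeed, $c_\xi \subseteq \VV(p_\xi)$ says precisely that $p_\xi \in \kernel(c_\xi)$, which is claim (ii); and since the kernel is then nonempty, $c_\xi$ is star-convex with witness $p_\xi$, which is claim (i). So (i) needs no separate argument. Crucially, no induction on the tree $\Tp$ is required either: each cell $c_\xi$ is produced by a single call to the subroutine CHILD (Table~\ref{tab:incremental_partition_child}) \emph{directly} from $\VVver(p_\xi)$, the parent cell $c_\zeta$ entering only through the gap edge $g$, so I can treat each vertex in isolation.

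For the containment $c_\xi \subseteq \VV(p_\xi)$, I would start from the definitional fact, recorded in Section~\ref{sec:notation}, that $\VVver(p_\xi) \subseteq \VV(p_\xi)$ (deleting the non-environment vertices only pulls the gap edges inward to diagonals, removing points). For the root this already closes the inclusion, since $c_\emptyset = \VVver(p_\emptyset)$. For a non-root vertex, CHILD obtains $c_\xi$ from $\VVver(p_\xi)$ by two further operations: truncation at the parent gap edge $g$, and deletion of the vertices lying across a phantom wall from $p_\xi$. I would argue that each operation is the intersection of the current polygon with a closed half-plane and therefore can only remove points, so that $c_\xi \subseteq \VVver(p_\xi) \subseteq \VV(p_\xi)$; this inclusion then says exactly that every point of $c_\xi$ is visible from $p_\xi$.

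For $p_\xi \in c_\xi$: the vantage point $p_\xi$ is an endpoint of the gap edge $g$, which is a diagonal of $\EE$, hence $p_\xi$ is an environment vertex and $p_\xi \in \VVver(p_\xi)$. It survives the truncation at $g$ because it lies on the line through $g$, i.e., on the boundary of the retained closed half-plane (here I use that the truncation keeps the side of $g$ \emph{away} from $p_\zeta$, and $p_\xi \in g$). It survives the phantom-wall step because, by construction, that step removes only the vertices lying on the side of the wall opposite $p_\xi$.

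The main obstacle is verifying that the phantom-wall deletion genuinely shrinks the polygon, i.e., that it really coincides with intersecting the (already truncated) cell with the closed half-plane bounded by the line through the phantom wall and containing $p_\xi$. This needs care because deleting a vertex from a polygon can \emph{enlarge} it when the deleted vertex is reflex; the truncation at $g$, by contrast, is transparently a half-plane intersection. To settle the phantom-wall case I would use that the wall is a diagonal $[a,b]$ whose environment-vertex endpoints are retained, while the deleted chain of boundary vertices lies entirely on the far side, so the boundary reconnects along $[a,b]$ and the far region is cleanly excised rather than bridged outward. Once this half-plane description is in hand, the two inclusions above combine to give $p_\xi \in c_\xi \subseteq \VV(p_\xi)$, and the lemma follows.
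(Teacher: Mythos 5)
Your proof is correct, and it reaches the conclusion by a noticeably more economical route than the paper. The paper's proof establishes a small general lemma: truncating a star-convex set $S$ at a chord yields a set $S'$ whose kernel contains $K\cap S'$, the point being that the truncating segment cannot block line of sight from a surviving kernel point to a surviving point of $S'$ (``otherwise $p$ would have been truncated''); it then applies this along the chain $\VV(p_\xi)\to\VVver(p_\xi)\to c_\xi$. You observe instead that, under the definitions of Section~\ref{sec:notation} --- where both star-convexity and the kernel are phrased via $\VV(\cdot)$, i.e., visibility \emph{in $\EE$} rather than within the cell --- the entire lemma collapses to $p_\xi\in c_\xi\subseteq\VV(p_\xi)$, and the containment follows from bare monotonicity once one checks that every operation in CHILD only removes points. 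This makes the paper's ``chord cannot block visibility'' step unnecessary, and you correctly isolate the one place where monotonicity is not transparent (the phantom-wall vertex deletion, where deleting vertices could in principle enlarge a polygon) and resolve it by noting the deleted chain is excised along a diagonal whose endpoints are retained --- a point the paper glosses over by calling this step a ``truncation.'' The trade-off: the paper's argument, read charitably, delivers the stronger intrinsic statement that segments from $p_\xi$ stay \emph{inside} $c_\xi$, which is the property one would want if cells were later used as navigation regions; your version proves exactly what the stated definitions require and no more, which suffices for every downstream use in the paper (coverage in Theorem~\ref{thm:incremental_partition_convergence}(i), connectivity in (ii), and Lemma~\ref{lm:child_vantage_point_of_triangular_cell}).
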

\begin{proof}
  Given a star-convex set, say $S$, let $K$ be the kernel of $S$.
  Suppose that we obtain a new set $S'$ by truncating $S$ at a single
  line segment $l$ who's endpoints lie on the boundary $\partial
  S$. It is easy so see that the kernel of $S'$ contains $K \cap S'$,
  thus $S'$ must be star-convex if $K \cap S'$ is nonempty.  Indeed
  $l$ could not possibly block line of sight from any point in $K \cap
  S'$ to any point $p$ in $S'$, otherwise $p$ would have been
  truncated.  Inductively, we can obtain a set $S'$ by truncating the
  set $S$ at any finite number of line segments and the kernel of $S'$
  will be a superset of $S' \cap K$.  Now consider a partition tree
  vertex $(p_\xi, c_\xi)$.  By definition, the visibility polygon
  $\VV(p_\xi)$ is star-convex and $p_\xi$ is in the kernel.  By the
  above reasoning, the vertex-limited visibility polygon
  $\VVver(p_\xi)$ is also star-convex and has $p_\xi$ in its kernel
  because $\VVver(p_\xi)$ can be obtained from $\VV(p_\xi)$ by a
  finite number of line segment truncations (lines 8 and 9 of
  Table~\ref{tab:incremental_partition_child}).  Likewise, $c_\xi$ must
  be star-convex with $p_\xi$ in its kernel because $c_\xi$ is
  obtained from $\VVver(p_\xi)$ by a finite number of line segment
  truncations at the parent gap edge and phantom walls.
\end{proof}

\begin{theorem}[Properties of the Incremental Partition Algorithm]
  \label{thm:incremental_partition_convergence}
  Suppose the incremental partition algorithm of
  Table~\ref{tab:incremental_partition} is executed on an environment
  $\EE$ with $n$ vertices and $h$ holes.  Then
  \begin{enumerate} 
  \item the algorithm returns in finite time a partition tree $\Tp$
    such that every point in the environment is visible to some
    vantage point,
  \item the visibility graph of the vantage points $\Gvis{\EE}(\{
    p_\xi | (p_\xi, c_\xi) \in \Tp \})$ consists of a single connected
    component,
  \item the final number of vertices in $\Tp$ (and thus the total
    number of vantage points) is no greater than $n+2h-2$,
  \item there exist environments where the final number of vertices in
    $\Tp$ is equal to the upper bound $n+2h-2$, and
  \item the final number of phantom walls is precisely $h$.
  \end{enumerate}
\end{theorem}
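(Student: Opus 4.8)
The plan is to split the five claims into the qualitative properties (i)--(ii), which follow from the local geometry of how each child cell is attached, and the quantitative properties (iii)--(v), which I would all derive from one underlying fact: the final cells form a partition of $\EE$ that is compatible with a triangulation. For termination in (i), I note that by the remark preceding the theorem every gap edge of $\VVver$ is a diagonal of $\EE$, so each vantage point sits at an environment vertex and each gap edge is a diagonal; since the main loop of Table~\ref{tab:incremental_partition} resolves one \texttt{unexplored} gap edge into a \texttt{child} or a \texttt{phantom\_wall} per iteration and never re-labels a resolved edge, and since the cells stay openly disjoint (branch conflicts are rejected) with each cell anchored across one diagonal of which there are finitely many, only finitely many cells are ever created and the loop halts. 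For coverage I argue by contradiction: at termination the frontier between $\bigcup_\xi c_\xi$ and the rest of $\EE$ consists of diagonals, each a gap edge of some cell; such an edge cannot be \texttt{unexplored} (the loop would not have exited), and a \texttt{child} edge has a covered cell across it, so it must be a \texttt{phantom\_wall}. The geometric lemma I would have to establish here is that the region immediately across any phantom wall is covered by the conflicting branch that created it, which closes off the frontier. Connectivity (ii) is then immediate: $p_\xi$ is an endpoint of the parent gap edge $g$, and $g \subset \partial \VV(p_\zeta)$, so $p_\xi$ is visible from $p_\zeta$; thus every tree edge of $\Tp$ is a visibility edge and, $\Tp$ being connected, the visibility graph of the vantage points contains a spanning connected subgraph.

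For (iii) I would use the triangulation of Fig.~\ref{fig:triangulation}. By Lemma~\ref{lm:star-convex_cells} each cell $c_\xi$ is star-convex with $p_\xi$ in its kernel, so fanning diagonals from $p_\xi$ to the remaining environment vertices of $c_\xi$ cuts it into exactly $|V_{c_\xi}| - 2 \geq 1$ triangles. Because the cells are openly disjoint and cover $\EE$ by (i), and because every cell vertex and every gap-edge endpoint is an environment vertex, these per-cell fans glue along the shared diagonals into a genuine triangulation of $\EE$ that introduces no Steiner points. A standard Euler count shows any triangulation of a polygon with $n$ vertices and $h$ holes has exactly $n + 2h - 2$ triangles, so summing over cells gives $|V_\Tp| = \sum_\xi 1 \leq \sum_\xi (|V_{c_\xi}| - 2) = n + 2h - 2$, which is (iii). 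For the matching lower bound (iv) I would exhibit an explicit family forcing every cell to be a triangle, e.g. a thin-toothed comb polygon augmented with holes, so that each cell contributes exactly one triangle and the inequality is saturated.

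For (v) I would apply Euler's formula to the cell partition itself rather than to its triangulation. Writing $C = |V_\Tp|$ and $W$ for the number of phantom walls, the $0$-cells are the $n$ environment vertices, the $1$-cells are the $n$ boundary edges together with the interior diagonals, and at termination every interior diagonal is either one of the $C - 1$ \texttt{parent}/\texttt{child} tree edges or one of the $W$ phantom walls. Hence $V = n$, $E = n + (C-1) + W$, $F = C$, and since $\EE$ is a disk with $h$ holes, comparing $\chi(\EE) = V - E + F = 1 - W$ with $\chi(\EE) = 1 - h$ yields $W = h$. I expect the main obstacles to be two: first, the coverage half of (i), namely proving rigorously that the area immediately beyond every phantom wall is already covered by the branch it conflicted with so that no open frontier survives; and second, making the Euler counts in (iii) and (v) airtight in the presence of double (colocated) vantage points and phantom walls that may border more than two cells, which requires checking that these degeneracies leave the vertex, edge, and face tallies unchanged.
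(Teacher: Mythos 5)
Your treatment of (i)--(iv) is essentially the paper's own: termination via the finiteness of diagonals and the one-way relabeling of gap edges, coverage by contradiction on the surviving frontier, connectivity from the fact that each vantage point lies on its parent's cell boundary (hence $\Tp$ itself is a spanning connected subgraph of the visibility graph), the bound in (iii) via the per-cell fan triangulation against the $n+2h-2$ triangle count, and (iv) by an explicit example. The one substantive difference in (i) is how the frontier is closed off: you defer this to a lemma that the region across each phantom wall is covered by the conflicting branch, whereas the paper closes it more cheaply by observing that a phantom wall is \emph{only ever} created upon a branch conflict, so an uncovered region topologically isolated by phantom walls would force a phantom wall at the parent gap edge of a candidate that conflicted with nothing --- a contradiction. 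Your proposed lemma is stronger than needed (the conflicting cell need only meet the far side in positive area, not cover it immediately); the paper's formulation is the easier one to make rigorous, and you should adopt it.

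For (v) you take a genuinely different route. The paper argues directly and topologically in both directions: fewer than $h$ walls would leave two branches sharing an unseparated gap edge that the algorithm would necessarily have converted to a phantom wall, and more than $h$ walls would topologically isolate some cell, which is impossible since walls are never created at parent--child gap edges. Your Euler-characteristic computation ($V=n$, $E=n+(C-1)+W$, $F=C$, so $\chi(\EE)=1-W=1-h$) is an attractive alternative that yields the exact count in one stroke, and the degeneracies you worry about are manageable: maximality of gap edges (their endpoints lie on $\partial\EE$ while their interiors lie in $\interior(\EE)$) rules out partial collinear overlaps, so every interior $1$-cell is a full gap edge of exactly the two cells flanking it, and colocated vantage points do not perturb the face count since the cells remain distinct. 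What your approach buys is a single uniform identity in place of two separate contradiction arguments; what it costs is the burden of verifying that the final cells form a genuine CW decomposition of $\EE$ with no Steiner points, a verification the paper's combinatorial argument avoids entirely. Both are sound; neither subsumes the other.
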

\begin{proof}
  We prove the statements in order. The algorithm processes {\tt
    unexplored} gap edges one by one and terminates when there are no
  more {\tt unexplored} gap edges.  Once an {\tt unexplored} gap edge
  has been processed, it is never processed again because its label
  changes to {\tt phantom\_ wall} or {\tt child}.  Gap edges of cells
  are diagonals of the environment and there are no more than $
  \binom{n}{2}= \frac{n^2-n}{2}$ possible diagonals, which is finite,
  therefore the algorithm must terminate in finite time.
  Lemma~\ref{lm:star-convex_cells} guarantees that if the entire
  environment is covered by cells of $\Tp$, then every point is
  visible to some vantage point.  Suppose the final set of cells does
  not cover the entire environment.  Then there must be a portion of
  the environment which is topologically isolated from the rest of the
  environment by phantom walls, otherwise an {\tt unexplored} gap edge
  would have expanded into that region.  However, this would mean that
  a phantom wall was created at the {\tt parent} gap edge of a
  candidate cell which was not in branch conflict.  This is not
  possible because a phantom wall is only ever created if there is a
  branch conflict (lines 12-14 Table~\ref{tab:incremental_partition}).
  This completes the proof of statement (i).
  
  Statement (ii) follows from Lemma~\ref{lm:star-convex_cells}
  together with the fact that every vantage point is placed on the
  boundary of its parent's cell.  Given two vantage points in $\Tp$,
  say $p_\xi$ and $p_{\xi'}$, a path through $\Gvis{\EE}(\{ p_\xi |
  (p_\xi, c_\xi) \in \Tp \})$ from $p_\xi$ to $p_{\xi'}$ can be
  constructed as follows.  Follow parent-child visibility links up to
  the root vantage point $p_\emptyset$, then follow parent-child
  visibility links from $p_\emptyset$ down to $p_{\xi'}$.  Since such
  a path can always be constructed between any pair of vantage points,
  $\Gvis{\EE}(\{ p_\xi | (p_\xi, c_\xi) \in \Tp \})$ must consist of a
  single connected component.

  For statement (iii), we triangulate $\EE$ by triangulating the cells
  of $\Tp$ individually as in Fig.~\ref{fig:triangulation}.  Each cell
  $c_\xi$ is triangulated by drawing diagonals from $p_\xi$ to the
  vertices of $c_\xi$.  The total number of triangles in any
  triangulation of a polygonal environment with holes is $n+2h-2$
  (Lemma 5.2 in \cite{JOR:87}).  Since there is at least one triangle
  per cell and at most one vantage point per cell, the number of
  vantage points cannot exceed the maximum number of
  triangles $n+2h-2$.

  Statement (iv) is proven by the example in
  Fig.~\ref{fig:worst_cases}a.

  For statement (v), we argue topologically.  Suppose the final number
  of phantom walls were less than $h$.  Then somewhere two branches of
  the parition tree must share a gap edge with no phantom wall
  separating them.  If this shared gap edge is not a phantom wall, it
  must be either (1) a child in branch conflict, or (2) unexplored.
  Either way, the algorithm would have tried to create a cell there
  but then deleted it and created a phantom wall; a contradiction.
  Now suppose there were more than $h$ phantom walls.  Then a cell
  would be topologically isolated by phantom walls from the rest of
  the environment.  This is not possible because phantom walls can
  never be created at the parent-child gap edge between two cells.
  Since the final number of phantom walls can be neither less nor
  greater than $h$, it must be $h$.

  
\end{proof}

\begin{figure}[t!]
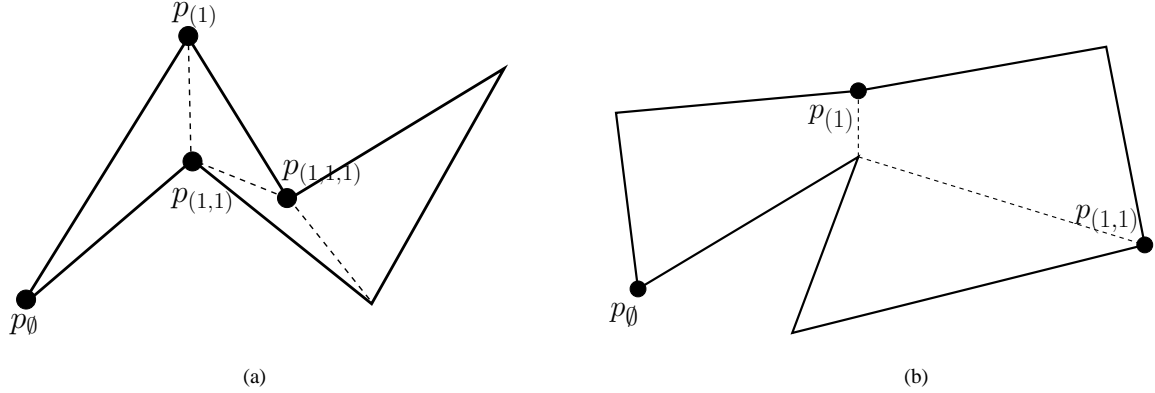

\begin{center}
  \subfigure[]{ \resizebox{0.4\linewidth}{!}{\input{./fig/incremental_partition_worst_case.tex}} } \hspace{2.5em}
  \subfigure[]{ \resizebox{0.5\linewidth}{!}{\input{./fig/sparse_vantage_point_worst_case.tex}} }
  \caption{\label{fig:worst_cases} 
(a) An example of when the final number of vantage points in $\Tp$ 
    is equal to the upper bound $n+2h-2$ given in
    Theorem~\ref{thm:incremental_partition_convergence}.  
(b) An example of when the number of points in $\real^2$ where at least one sparse 
    vantage point is located is equal to the upper bound 
    $\left \lfloor \frac{n+2h-1}{2} \right \rfloor$ given in
    Theorems~\ref{thm:sparse_vantage_point_bound} and
    \ref{thm:dfcd_convergence}.
  }
\end{center}
\end{figure}

\subsection{A Sparse Vantage Point Set}
\label{subsec:sparse_vantage_point_set}

Suppose we were to deploy robotic agents onto the vantage points
produced by the incremental partition algorithm (one agent per vantage
point).  Then, as Theorem~\ref{thm:incremental_partition_convergence}
guarantees, we would achieve our goal of complete visibility coverage
with connectivity.  The number of agents required would be no greater
than the number of vantage points, namely $n+2h-2$.  This upper bound,
however, can be greatly improved upon.  In order to reduce the number
of vantage points agents must deploy to, the postprocessing algorithm
in Table~\ref{tab:label_vantage_points} takes the partition tree output
by the incremental partition algorithm and labels a subset of the
vantage points called the \emph{sparse vantage point set}.  Starting
at the leaves of the partition tree and working towards the root,
vantage points are labeled either {\tt nonsparse} or {\tt sparse}
according to criterion on line 2 of
Table~\ref{tab:label_vantage_points}. As proven in
Theorem~\ref{thm:sparse_vantage_point_bound} below, the sparse vantage
points are suitable for the coverage task and their cardinality has a
much better upper bound than the full set of vantage points.  All the
vantage points in the example of Fig.~\ref{fig:incremental_partition}
are sparse.  Fig.~\ref{fig:special_case} shows an example of when only
a proper subset of the vantage points is sparse.

\begin{table}
  \caption{\label{tab:label_vantage_points} Postprocessing of Partition Tree }
  \vspace{-3em}

\begin{quote}
{\small
      
{\rule[0em]{\linewidth}{1pt}}
LABEL\_VANTAGE\_POINTS$(\EE, \Tp)$

\begin{algorithmic}[1]

\WHILE{ there exists a vantage point $p_\xi$ in $\Tp$ such that $p_\xi$ has not yet been labeled
        \\ \ \ \ \ \ \ \ \ \ {\bf and} $\bigl ($ $p_\xi$ is at a leaf {\bf or} all child vantage points of $p_\xi$ have been labeled $\bigr )$ }
     \IF{ $|V_{c_\xi}| == 3$ {\bf and} $p_\xi$ has exactly one child vantage point labeled {\tt sparse} }
          \STATE label $p_\xi$ as {\tt nonsparse};
     \ELSE
          \STATE label $p_\xi$ as {\tt sparse};
     \ENDIF
\ENDWHILE


\end{algorithmic}
\vspace{-0.5em}

{\rule[0.3em]{\linewidth}{0.5pt}}

}
\end{quote}
\end{table}

\begin{lemma}[Properties of a Child Vantage Point of a Triangular Cell]
\label{lm:child_vantage_point_of_triangular_cell}
Let $(p_\xi, c_\xi)$ be a partition tree vertex constructed by the
incremental partition algorithm of
Table~\ref{tab:incremental_partition} and suppose $c_\xi$ has a parent
cell $c_\zeta$ which is a triangle.  Then $p_\xi$ is in the kernel of
$p_\zeta$.  Furthermore, if $p_\zeta$ has a parent vantage point
$p_{\zeta'}$ (the grandparent of $p_\xi$), then $p_\xi$ is visible to
$p_{\zeta'}$.
\end{lemma}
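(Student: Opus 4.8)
The plan is to reduce both assertions to the parity-based vertex enumeration carried out by the CHILD subroutine (Table~\ref{tab:incremental_partition_child}) in its triangular special case, and then to invoke the star-convexity established in Lemma~\ref{lm:star-convex_cells}. Since the kernel is only defined for star-convex sets, I interpret the first assertion as placing $p_\xi$ in the kernel of the cell $c_\zeta$.

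First I would dispose of the first claim by pure convexity. A triangle is convex, so the kernel of $c_\zeta$ equals $c_\zeta$ itself. By line 7 of Table~\ref{tab:incremental_partition_child} the vantage point $p_\xi$ is a vertex of $c_\zeta$ lying on the explored gap edge $g$, hence $p_\xi \in \partial c_\zeta \subset c_\zeta$, so $p_\xi$ lies in the kernel of $c_\zeta$ and the first claim is immediate.

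The heart of the argument is a short combinatorial bookkeeping step pinning down where $p_\xi$ can land. In the triangular case the enumeration assigns index $1$ to $p_\zeta$ and index $3$ to the vertex lying on $c_\zeta$'s parent gap edge. Because $p_\zeta$ was itself placed on that very gap edge when $c_\zeta$ was created (the CHILD construction always puts the new vantage point on the gap edge across which the cell is built), \emph{both} vertices $1$ and $3$ lie on the parent gap edge; since a gap edge is a segment with exactly two endpoints, the parent gap edge is precisely the segment joining vertices $1$ and $3$, and the remaining vertex $2$ is off it. The edge $g$ being explored is, by construction, a \emph{non-parent} gap edge, hence it is either the edge $[1,2]$ or the edge $[2,3]$; the odd-indexed endpoint selected as $p_\xi$ is then vertex $1$ or vertex $3$ respectively. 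In either case $p_\xi$ coincides with an endpoint of the parent gap edge of $c_\zeta$ (the degenerate subcase $p_\xi = p_\zeta$ being exactly the double vantage point phenomenon illustrated in Fig.~\ref{fig:special_case}).

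To finish the second claim I would observe that the parent gap edge of $c_\zeta$ coincides with a child gap edge of the grandparent cell $c_{\zeta'}$, hence lies in $\partial c_{\zeta'} \subset c_{\zeta'}$. By Lemma~\ref{lm:star-convex_cells}, $p_{\zeta'}$ is in the kernel of $c_{\zeta'}$, i.e.\ $c_{\zeta'} \subset \VV(p_{\zeta'})$, so $p_{\zeta'}$ sees every point of $c_{\zeta'}$ and in particular both endpoints of that parent gap edge. Since the previous paragraph shows $p_\xi$ is one of those two endpoints, the segment $[p_\xi, p_{\zeta'}]$ lies in $\EE$, so $p_\xi$ is visible to $p_{\zeta'}$. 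The main obstacle, and the only place where genuine care is needed, is the combinatorial step verifying that the parity rule forces $p_\xi$ onto the parent gap edge of the triangle; once that is secured, both claims follow from convexity and a direct appeal to Lemma~\ref{lm:star-convex_cells}.
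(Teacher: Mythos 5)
Your proposal is correct and follows essentially the same route as the paper's proof: convexity of the triangle handles the kernel claim, and the visibility claim comes from showing $p_\xi$ lands at a point common to $c_{\zeta'}$ and $c_\zeta$ and then invoking Lemma~\ref{lm:star-convex_cells} for $p_{\zeta'}$. The only difference is that you spell out the combinatorial step (indices $1$ and $3$ both lying on the parent gap edge, so the odd-indexed endpoint of any non-parent gap edge is an endpoint of that shared edge), which the paper simply asserts as a consequence of the parity-based selection scheme.
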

\begin{proof}
  The kernel of a triangular (and thus convex) cell $c_\zeta$ is all
  of $c_\zeta$.  By Lemma~\ref{lm:star-convex_cells}, $p_{\zeta'}$ is
  in the kernel of $c_{\zeta'}$.  According to the parity-based
  vantage point selection scheme (line 5 of
  Table~\ref{tab:incremental_partition_child}), $p_\xi$ is located at a
  point common to $c_{\zeta'}$, $c_\zeta$, and $c_\xi$, therefore
  $p_\xi$ is in the kernel of $c_\zeta$ and visible to $c_{\zeta'}$.
\end{proof}

\begin{theorem}[Properties of the Sparse Vantage Point Set]
  \label{thm:sparse_vantage_point_bound}
  Suppose the incremental partition algorithm of
  Table~\ref{tab:incremental_partition} is executed to completion on an
  environment $\EE$ with $n$ vertices and $h$ holes and the vantage
  points of the resulting partition tree are labeled by the algorithm
  in Table~\ref{tab:label_vantage_points}.  Then
\begin{enumerate} 
  \item every point in the environment is visible to some sparse
        vantage point,
  \item the visibility graph of the sparse vantage points
        $\Gvis{\EE}(\{ p_\xi | (p_\xi, c_\xi) \in \Tp \})$ consists of
        a single connected component,
      \item the number of points in $\real^2$ where at least one
          sparse vantage point is located is no greater than $\left
          \lfloor \frac{n+2h-1}{2} \right \rfloor$, and
  \item there exist environments where the upper bound $\left
        \lfloor \frac{n+2h-1}{2} \right \rfloor$ in (iii) is met.
  \end{enumerate}
\end{theorem}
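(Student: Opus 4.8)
The plan is to prove the four statements in turn, leaning on the coverage and connectivity already established for the full vantage point set in Theorem~\ref{thm:incremental_partition_convergence} and on the geometric content of Lemma~\ref{lm:child_vantage_point_of_triangular_cell}.

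For statement (i) I would argue that every nonsparse vantage point is redundant. Given any point $e \in \EE$, Theorem~\ref{thm:incremental_partition_convergence}(i) puts $e$ in some cell $c_\xi$, so $e$ is visible to $p_\xi$. If $p_\xi$ is sparse we are done. If $p_\xi$ is nonsparse, then by the labeling rule of Table~\ref{tab:label_vantage_points} its cell $c_\xi$ is a triangle and it has exactly one sparse child $p_\eta$. Applying Lemma~\ref{lm:child_vantage_point_of_triangular_cell} to $p_\eta$ and its triangular parent cell $c_\xi$ shows $p_\eta \in \kernel(c_\xi)$, so $p_\eta$ sees all of $c_\xi$ and in particular $e$. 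Thus $e$ is seen by the sparse point $p_\eta$, and one step suffices since a nonsparse node has a sparse child by definition.

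For statement (ii) I would reconnect the sparse points through the tree. For a sparse $p_\xi$ whose parent $p_\zeta$ is itself sparse, the parent--child link (via $p_\zeta \in \kernel(c_\zeta)$ from Lemma~\ref{lm:star-convex_cells}) already lies in the sparse visibility graph. When $p_\zeta$ is nonsparse, $c_\zeta$ is a triangle and $p_\xi$ is its unique sparse child, so the \emph{furthermore} clause of Lemma~\ref{lm:child_vantage_point_of_triangular_cell} gives that $p_\xi$ is visible to the grandparent $p_{\zeta'}$. The subtlety is a maximal chain of consecutive nonsparse (hence triangular) ancestors; here I would use the parity-based selection of Table~\ref{tab:incremental_partition_child}, which forces a child of a triangular cell to sit either at the cell's own vantage point (a double, colocated vantage point) or at the far endpoint of its gap edge, so that a colocated sparse point inherits the visibility of the node on which it sits. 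Iterating these two cases up a finite chain produces, for every sparse point, a visible sparse ancestor, and hence connectivity of $\Gvis{\EE}$ restricted to the sparse points.

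Statement (iii) is the main obstacle, and I would prove it by a triangle-charging argument. Since every vantage point and every cell vertex is an environment vertex, triangulating each cell from its vantage point as in Fig.~\ref{fig:triangulation} yields a triangulation of $\EE$ with exactly $T = n+2h-2$ triangles, so $\sum_\xi (|V_{c_\xi}|-2) = n+2h-2$. I then assign each node's triangles as follows: a sparse node donates its $|V_{c_\xi}|-2$ triangles to its own location, and each nonsparse node, being triangular and thus owning a single triangle, donates that triangle to the location of its unique sparse child; this map is injective because a node has only one parent. One checks that every location carrying a sparse point is charged at least two triangles: a non-triangular sparse cell already supplies two, two sparse points sharing a location supply one each, and a lone sparse triangular point receives the extra triangle from its nonsparse parent. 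The parity rule rules out the remaining case, a lone sparse triangular point with a sparse parent and no colocated sparse partner, except at the single root-side location, which may be charged only one. Hence the number of distinct sparse locations is at most $\lceil T/2 \rceil = \lfloor (n+2h-1)/2 \rfloor$. The delicate points, which I expect to be the hard part, are verifying $T=n+2h-2$ for this particular triangulation and confirming that the single-triangle case occurs at most once, both of which hinge on the colocation behaviour of double vantage points. Finally, statement (iv) I would establish by exhibiting the environment of Fig.~\ref{fig:worst_cases}b, a zig-zag corridor in which the sparse and nonsparse labels alternate along the branch so that the bound is attained with equality.
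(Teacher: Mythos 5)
Your treatment of statements (i), (ii), and (iv) is sound and in fact more explicit than the paper, which simply asserts that (i) and (ii) ``follow directly'' from Lemma~\ref{lm:child_vantage_point_of_triangular_cell} and Theorem~\ref{thm:incremental_partition_convergence}; your handling of chains of consecutive nonsparse ancestors via colocation is a legitimate filling-in of that gap. For (iii) you correctly identify the paper's overall strategy (triangulate each cell from its vantage point, get $n+2h-2$ triangles total, and charge two triangles to every sparse location except one), but your charging scheme is different from the paper's and it breaks at exactly the point you flag as delicate.

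The false step is the claim that the parity rule excludes, except at one root-side location, a lone sparse vantage point with triangular cell whose parent is sparse. It does not. Take a root cell $c_\emptyset$ that is a hexagon (so $p_\emptyset$ is sparse because $|V_{c_\emptyset}|\neq 3$) with two gap edges $[2,3]$ and $[4,5]$ in the counterclockwise enumeration from $p_\emptyset$, each leading to a small triangular alcove that is a leaf of $\Tp$. The parity rule places the two children at the odd vertices $3$ and $5$; both are leaves, hence sparse, neither is colocated with the other or with $p_\emptyset$, and both have a sparse parent. Under your scheme the root location receives all four triangles of the hexagon and each leaf location receives only the single triangle of its own cell, so \emph{two} locations are charged exactly one triangle, contradicting your case analysis (the total still happens to exceed $2N_{\rm sparse}-1$ here only because the root hoards a surplus, which your argument does not exploit). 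The paper's charging avoids this: every non-root sparse location of $p_\xi$ is assigned the triangle of the \emph{parent} cell formed by $p_\xi$'s parent gap edge together with the parent's vantage point --- regardless of whether the parent is sparse --- plus one triangle of $c_\xi$ itself; one then verifies each cell keeps a triangle for its own location by counting that a cell with $2m$ edges has $2m-2$ triangles but only $m-1$ odd-parity vertices at which child locations can sit. You would need to replace your sparse/nonsparse donation rule with this parent-gap-edge assignment (or otherwise route the parent's surplus down to such children) for the bound to follow.
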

\begin{proof}

  Statements (i) and (ii) follow directly from
  Lemma~\ref{lm:child_vantage_point_of_triangular_cell} together with
  statements (i) and (ii) of
  Theorem~\ref{thm:incremental_partition_convergence}.
  
  For statement (iii) we use a triangulation argument similar to that
  used in \cite{AG:07} for environments without holes.  We use the
  same triangulation as in the proof of
  Theorem~\ref{thm:incremental_partition_convergence}
  (Fig.~\ref{fig:triangulation}).  The total number of triangles in
  any triangulation of a polygonal environment with holes is $n+2h-2$
  (Lemma 5.2 in \cite{JOR:87}).  Suppose we can assign at least one
  unique triangle to $p_\emptyset$ whenever $p_\emptyset$ is sparse
  and at least two unique triangles to all other sparse vantage point
  locations.  Let $N_{\rm sparse}$ be the number of sparse vantage
  point locations.  Setting $2 (N_{\rm sparse}-1) + 1 = 2N_{\rm
    sparse}-1$ to be less or equal to the total number of triangles
  $n+2h-2$ and solving for $N_{\rm sparse}$ gives the desired bound
  \[
  N_{\rm sparse} \leq
  \left \lfloor \frac{(n+2h-2)+1}{2} \right \rfloor = \left \lfloor
    \frac{n+2h-1}{2} \right \rfloor.
  \]
  Indeed we can make such an assignment of triangles to sparse vantage
  point locations.  Our argument relies on the parity-based vantage
  point selection scheme and the criterion for labeling a vantage
  point as {\tt sparse} on line 2 of
  Table~\ref{tab:label_vantage_points}.  To any sparse vantage point
  location, say of $p_\xi$ other than the root, we assign one triangle
  in the parent cell. The triangle in the parent cell is the triangle
  formed by its parent gap edge together with its parent's vantage
  point.  To each sparse vantage point location, say of $p_\xi$,
  including the root, we assign additionally one triangle in the cell
  $c_\xi$.  If $c_\xi$ has no children, then any triangle in $c_\xi$
  can be assigned to $p_\xi$.  If $c_\xi$ has children (in which case
  it must have greater than one triangle) we need to check that it has
  more triangles than child vantage point locations with odd parity.
  Suppose $c_\xi$ has an even number of edges.  Then this number of
  edges can be written $2m$ where $m \geq 2$.  The number of triangles
  in $c_\xi$ is $2m-2$ and the number of odd parity vertices in
  $c_\xi$ where child vantage points could be placed is $m-1$.  This
  means at most $m-1$ triangles in $c_\xi$ are assigned to odd parity
  child vantage point locations, which leaves $(2m-2)-(m-1) = m-1 \geq
  1$ triangles to be assigned to the location of $p_\xi$.  The case of
  $c_\xi$ having an odd number of edges is proven analogously.

  Statement (iv) is proven by the example in
  Fig.~\ref{fig:worst_cases}.

\end{proof}


\section{Distributed Deployment Algorithm}
\label{sec:distributed_deployment}

\begin{figure}[t!]
  \begin{center}
    \subfigure[]{ \resizebox{0.3\linewidth}{!}{ \includegraphics{./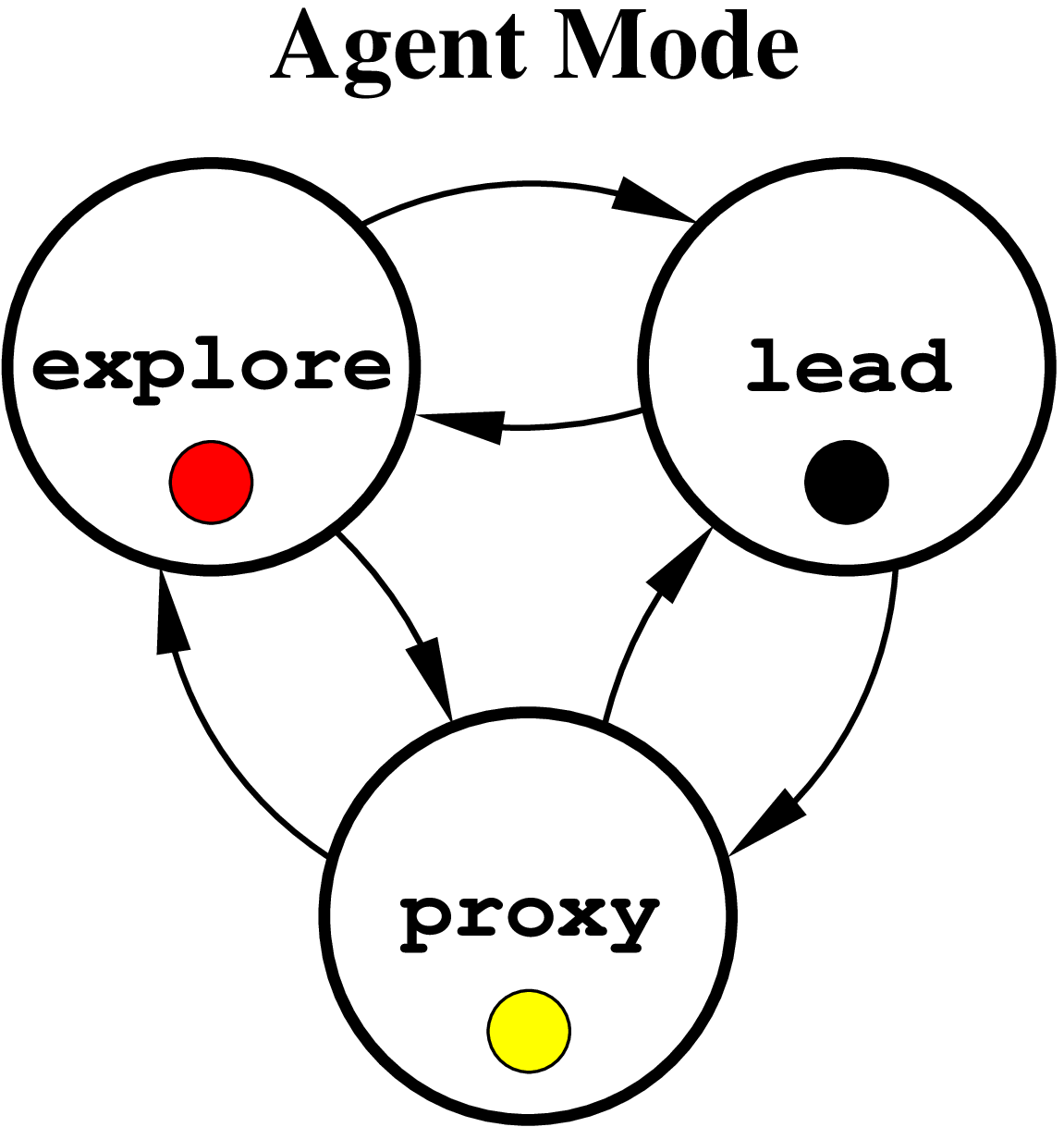} } } \hspace{3cm} 
    \subfigure[]{ \resizebox{0.4\linewidth}{!}{ \input{./fig/proxy_venn_diagram1.tex} } }
    \caption{\label{fig:agent_modes_and_abstract_proxy} (a) In the
      distributed deployment algorithm of Table~\ref{tab:dfcd}, each
      agent may switch between {\tt lead}, {\tt proxy}, and {\tt
        explore} mode based on certain asynchronous events.  Leader
      agents are responsible for maintaining a distributed
      representation of the partition tree $\Tp$, proxies help
      establish communication for solving branch conflicts, and
      explorers systematically navigate through $\Tp$ in search of
      opportunities to become a leader or
      proxy. 
      The agent mode color code is used also in
      Fig.~\ref{fig:df_ordering_micro} and \ref{fig:minimal_example}.
      (b) Even if a pair of leader agents (black) are not mutually
      visible, their cells ($c_\xi$ and $c_{\xi'}$) may intersect as
      in Fig.~\ref{fig:branch_conflict}, shown here abstractly by a
      Venn diagram.  Sending a proxy agent (yellow), on a \emph{proxy
        tour} around one of the cell boundaries guarantees it will
      enter the cells' intersection so that communication between
      leaders can be proxied.  The leaders can then establish a local
      common reference frame and compare cell boundaries in order to
      solve branch conflicts. }
  \end{center}
\end{figure}

\begin{figure}[t!]
\begin{center}
 \subfigure[]{ \resizebox{0.42\linewidth}{!}{ \includegraphics{./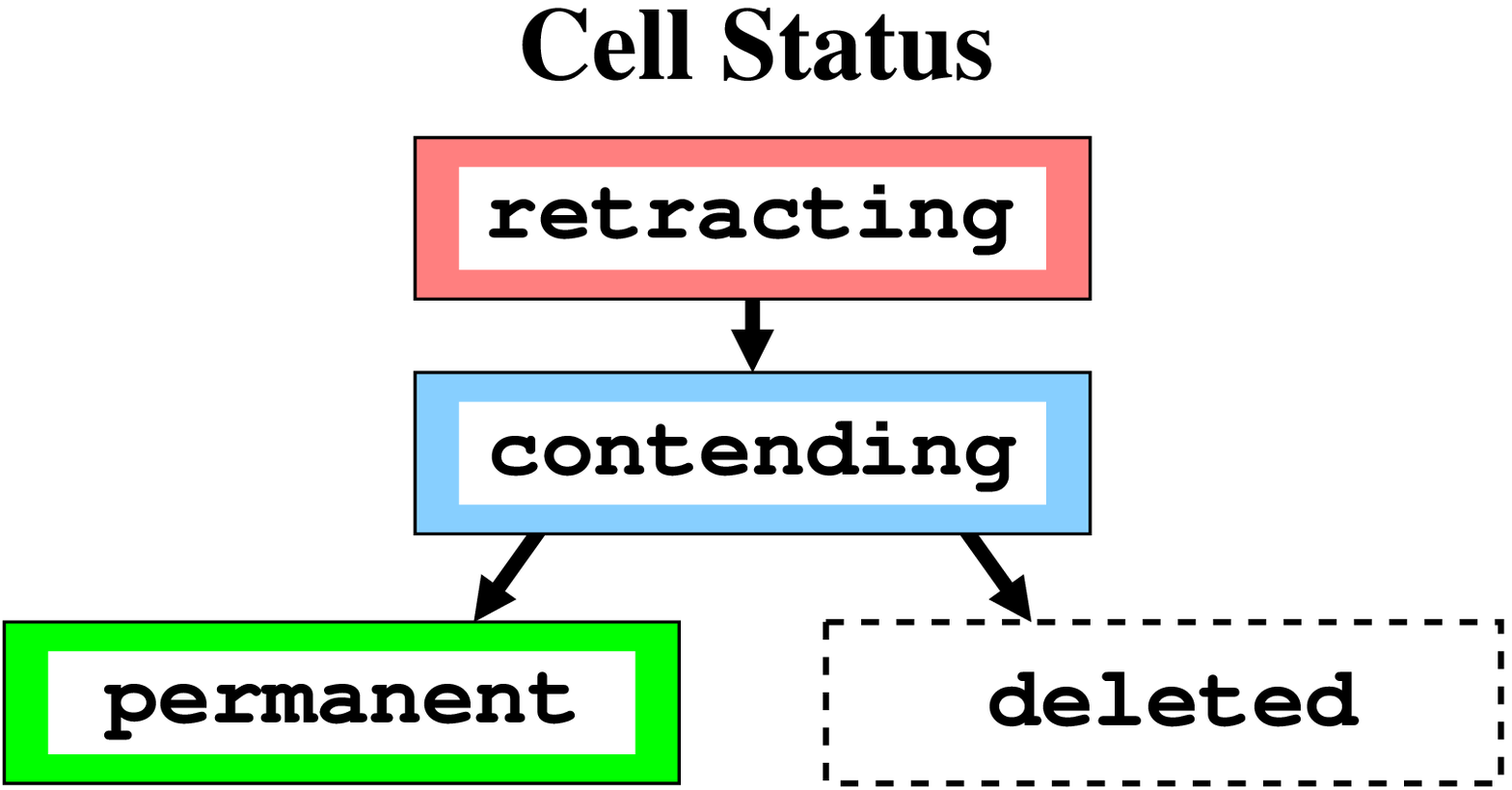} } }

 \subfigure[]{ \resizebox{0.21\linewidth}{!}{ \includegraphics{./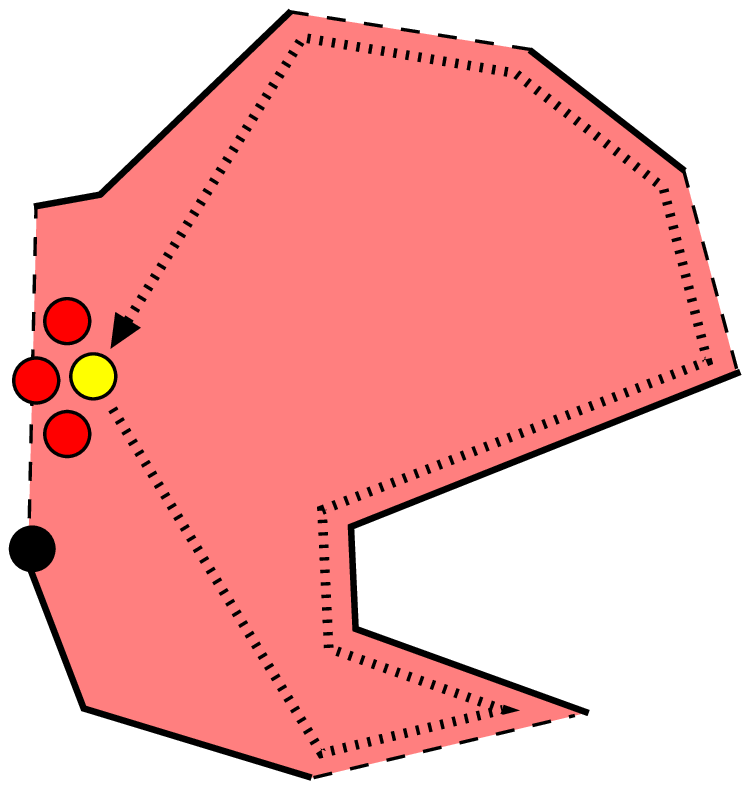} } }
 \subfigure[]{ \resizebox{0.21\linewidth}{!}{ \includegraphics{./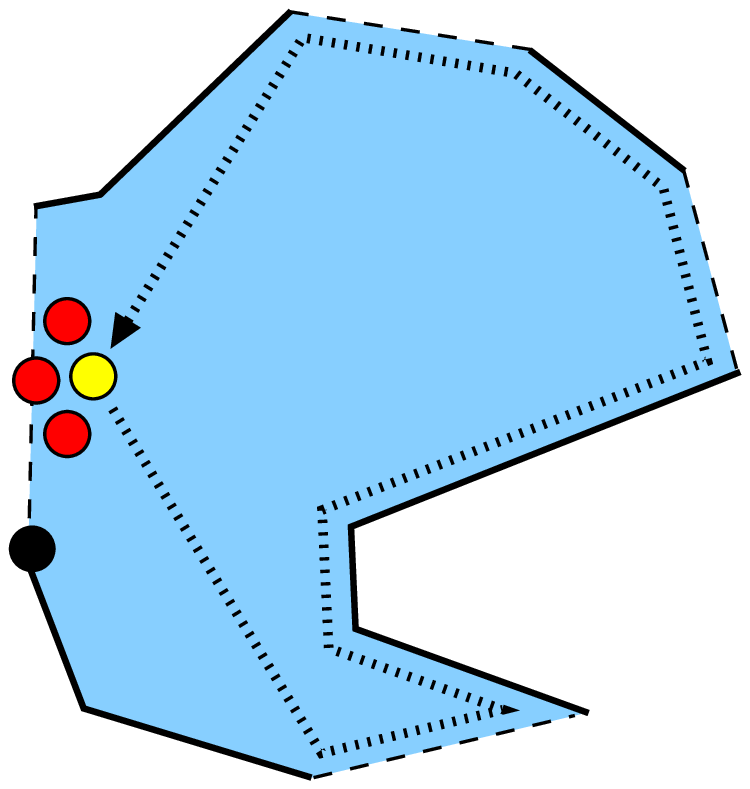} } }
 \subfigure[]{ \resizebox{0.21\linewidth}{!}{ \includegraphics{./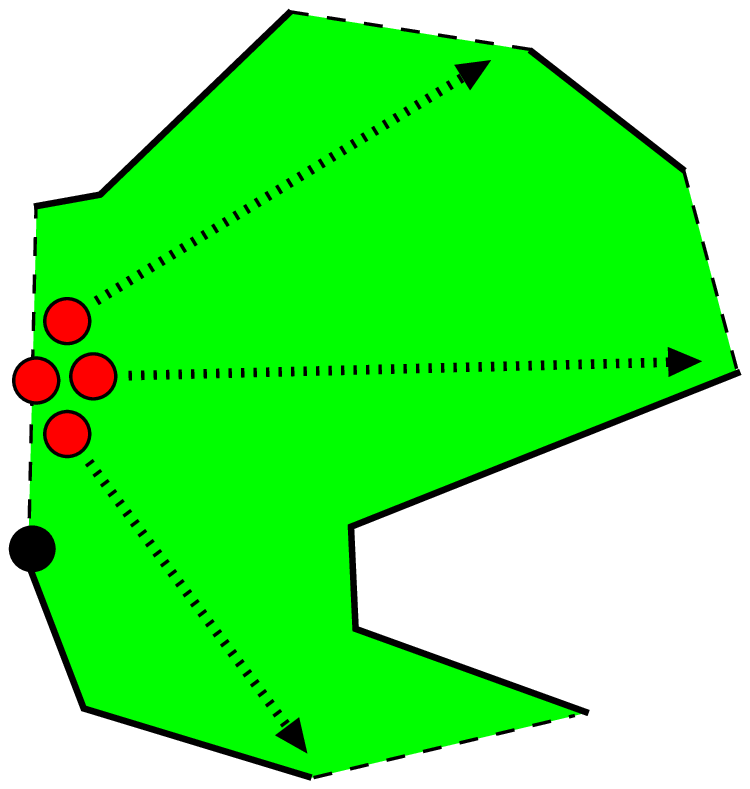} } }
 \caption{\label{fig:cell_statuses} (a) In the distributed deployment
   algorithm of Table~\ref{tab:dfcd}, any cell in a leader's memory
   has a status which takes the value {\tt retracting}, {\tt
     contending}, or {\tt permanent}.  (b) Each cell status is
   initially {\tt retracting}.  The status of a retracting cell is
   advanced to {\tt contending} after the execution of a proxy tour in
   which the cell is truncated as necessary to ensure no branch
   conflict with any permanent cells.  (c) In a second proxy tour, a
   contending cell is deleted if it is found to be in branch conflict
   with another contending cell of smaller PTVUID (according to total
   ordering Def.~\ref{defn:ptvuid_total_ordering}), otherwise its
   status is advanced to {\tt permanent}.  (d) Only when a cell has
   attained status {\tt permanent} can any child cells be added at its
   unexplored gap edges (continued in
   Fig.~\ref{fig:df_ordering_micro}).  The cell status color code is
   used in Fig.~\ref{fig:df_ordering_micro} as well as
   \ref{fig:minimal_example}.}
\end{center}
\end{figure}

\begin{figure}[t!]
\begin{center}
  %
  %
  %
  \subfigure[]{ \resizebox{0.21\linewidth}{!}{ \includegraphics{./fig/cell_life_cycle4.eps} } }
  \subfigure[]{ \resizebox{0.21\linewidth}{!}{ \includegraphics{./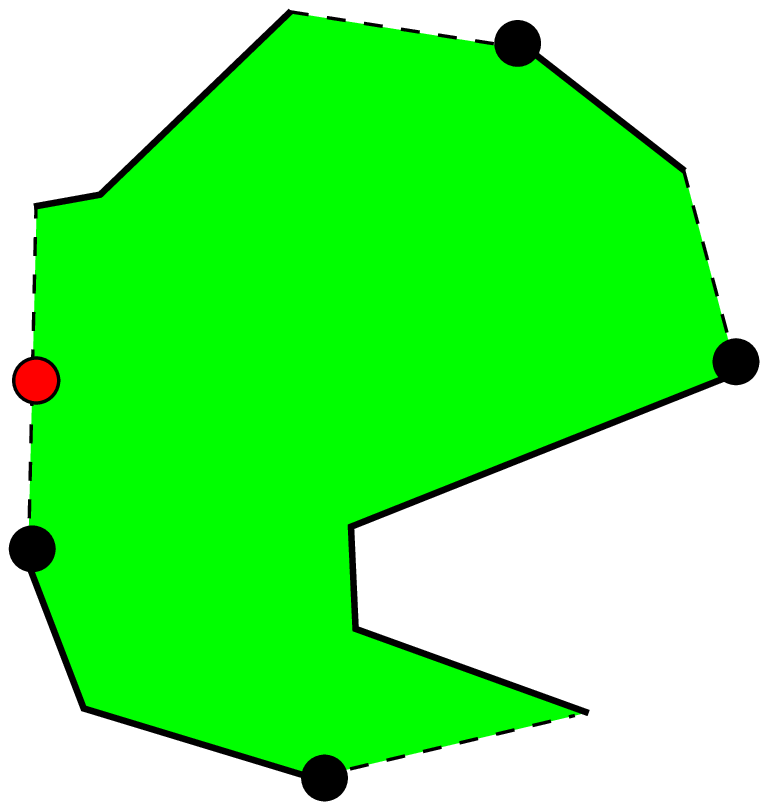} } }
  \subfigure[]{ \resizebox{0.21\linewidth}{!}{ \includegraphics{./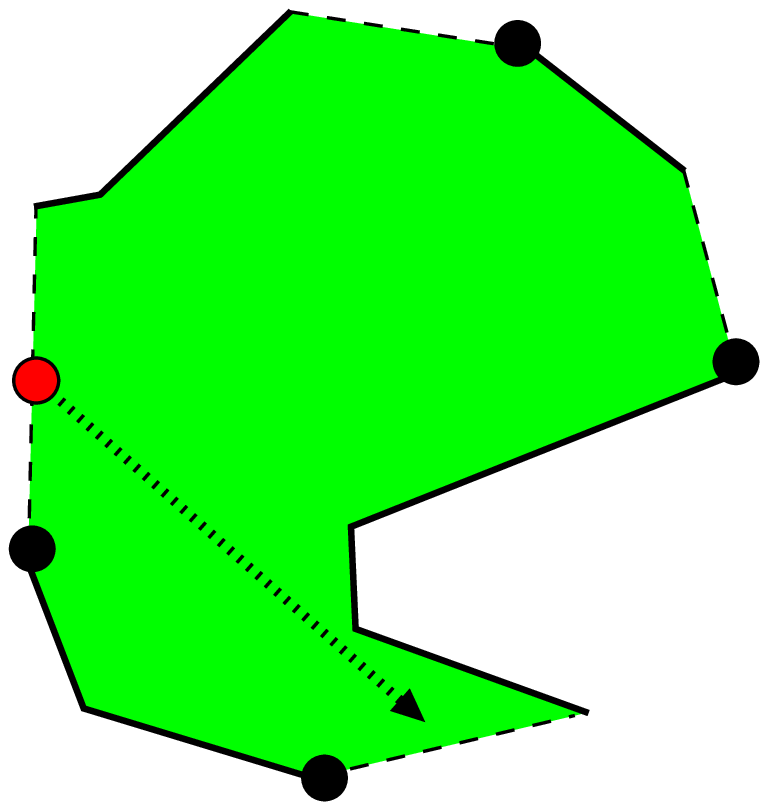} } }

  \subfigure[]{ \resizebox{0.21\linewidth}{!}{ \includegraphics{./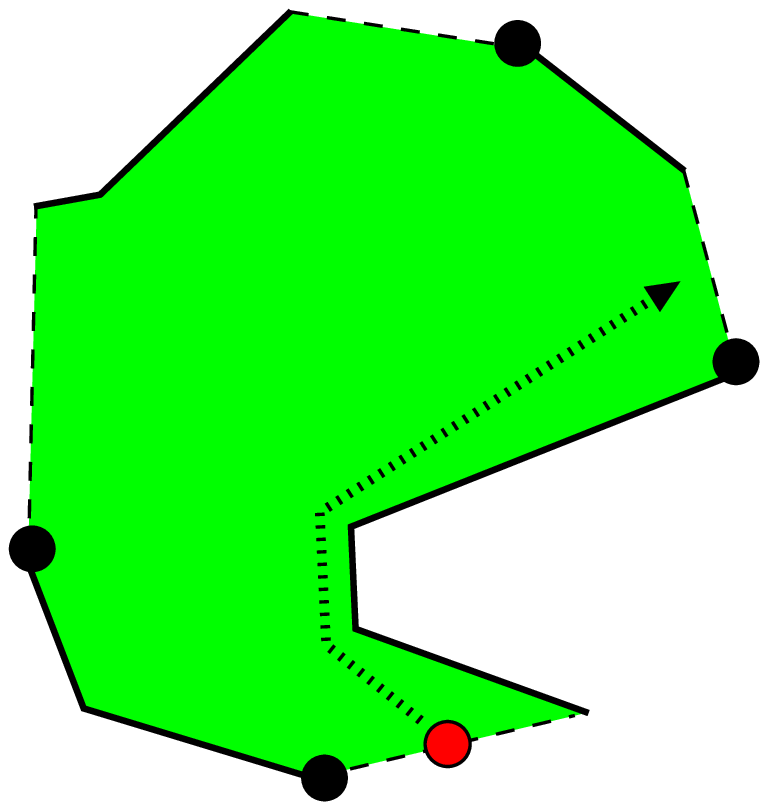} } }
  \subfigure[]{ \resizebox{0.21\linewidth}{!}{ \includegraphics{./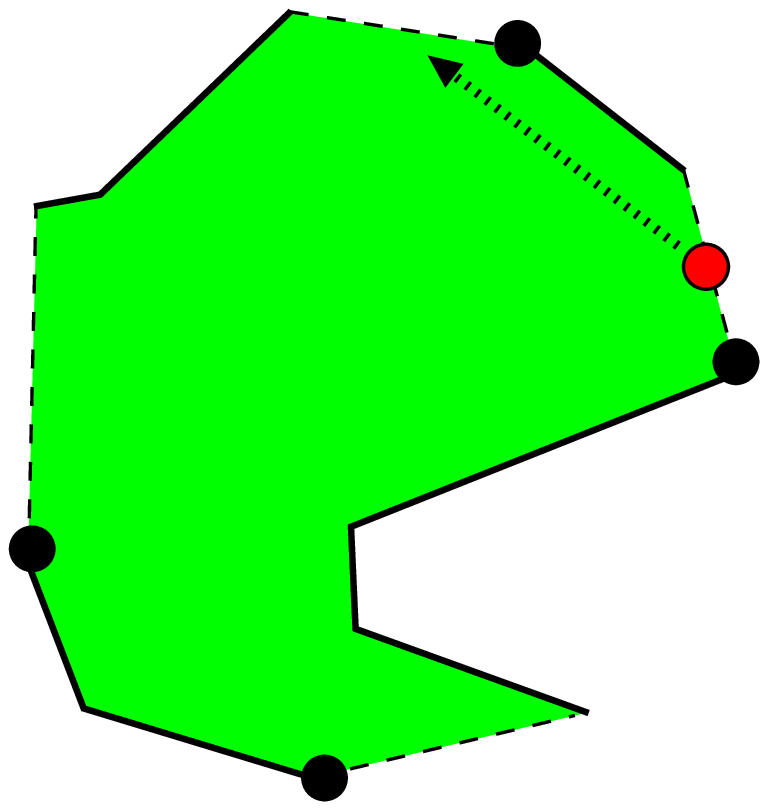} } }
  \subfigure[]{ \resizebox{0.21\linewidth}{!}{ \includegraphics{./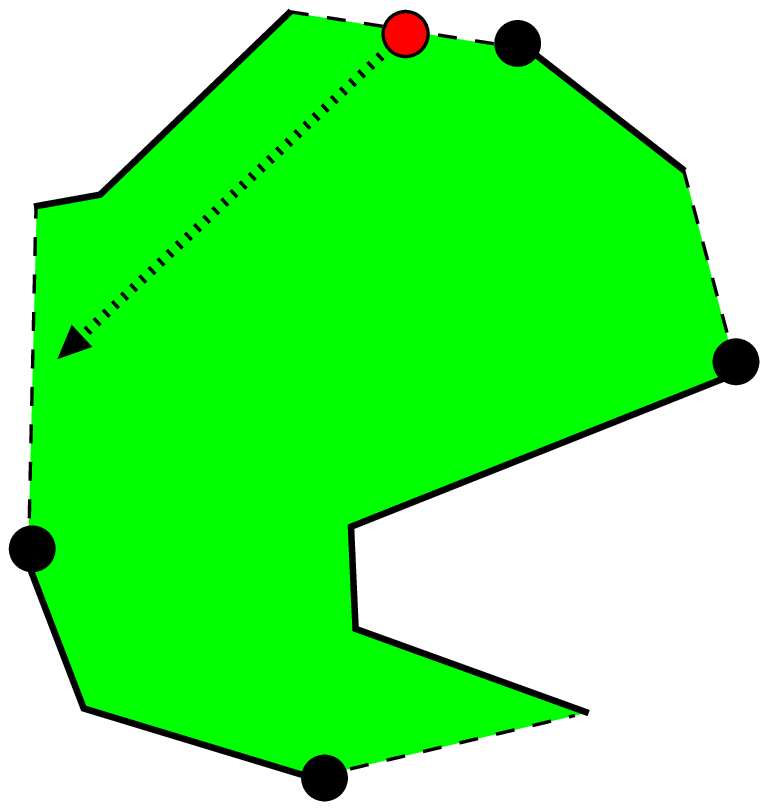} } }

  \caption{\label{fig:df_ordering_micro} Color codes correspond to
    those in Fig.~\ref{fig:agent_modes_and_abstract_proxy} and
    \ref{fig:cell_statuses}.  (a,b) Once a cell has status {\tt
      permanent}, arriving explorer agents can be sent to become
    leaders at child gap edges. (c-f) Any remaining explorer agents
    continue systematically navigating the partition tree in search of
    leader or proxy tasks they could perform .}
\end{center}
\end{figure}

\begin{figure}[t!]
  \begin{center}
    \resizebox{0.36\linewidth}{!}{ \includegraphics{./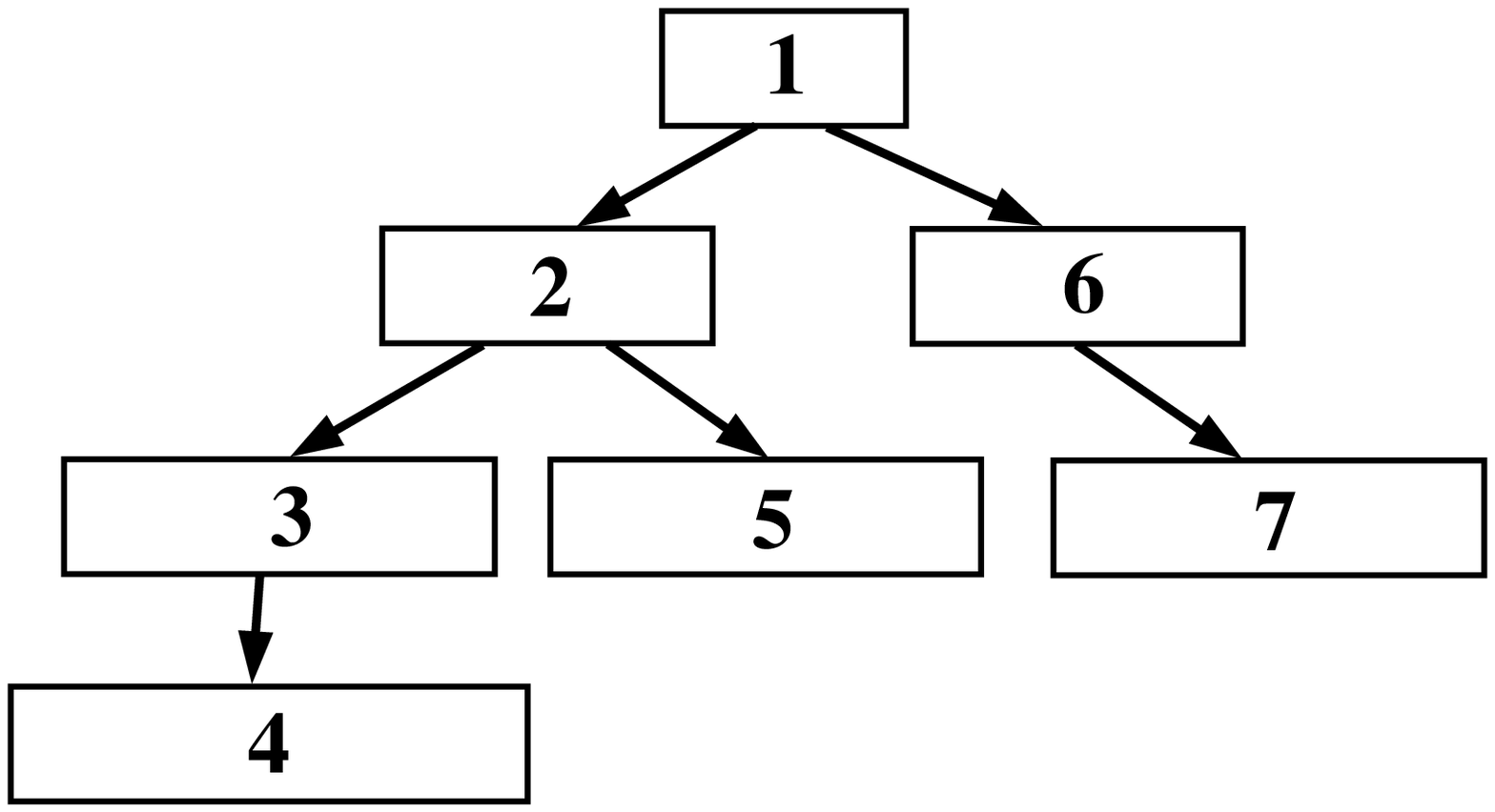} } \hspace{1cm} 
    \resizebox{0.36\linewidth}{!}{ \includegraphics{./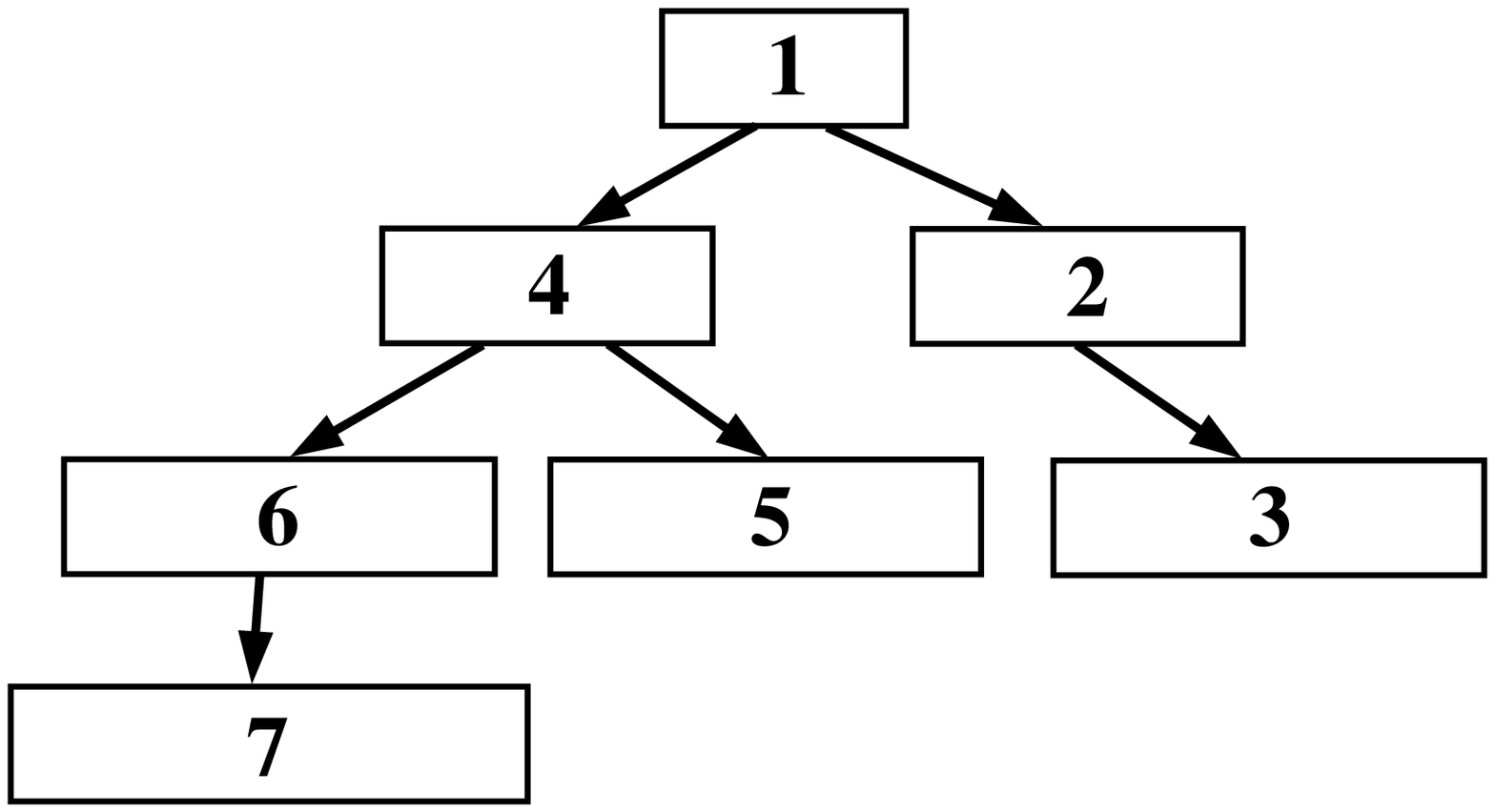} }
    \vspace{-6.0em}
    
    \caption{\label{fig:df_ordering_macro} In the distributed
      deployment algorithm of Table~\ref{tab:dfcd}, explorer agents
      search the partition tree $\Tp$ depth-first for leader or proxy
      tasks they could perform.  An agent in a cell, say $c_\xi$, can
      always order the gap edges of $c_\xi$, e.g., counterclockwise
      from the parent gap edge.  The depth-first search progresses by
      the explorer agent always moving to the next unvisited child or
      unexplored gap edge in that ordering.  The agent thus moves from
      cell to cell deeper and deeper until a leaf (a vertex with no
      children) is found.  Once at a leaf, the agent backtracks to the
      most recent vertex with unvisited child or unexplored gap edges
      and the process continues.  As an example, (left) integers (not
      to be confused with PTVUIDs) shows the depth-first order an
      agent would visit the vertices of $\Tp$ in
      Fig.~\ref{fig:incremental_partition}f if the gap edges in each
      cell were ordered couterclockwise from the parent gap edge.  If
      the agent instead uses a gap edge ordering cyclically shifted by
      one, then (right) shows the different resulting depth-first
      order.  If each agent uses a different gap edge ordering, e.g.,
      cyclically shifted by their UID, then different branches of
      $\Tp$ are explored in parallel and the deployment tends to cover
      the environment more quickly.
      Cf. Fig.~\ref{fig:df_ordering_micro}.}
  \end{center}
\end{figure}

\begin{figure}[t!]
\begin{center}
  \subfigure[]{ \resizebox{0.25\linewidth}{!}{ \includegraphics{./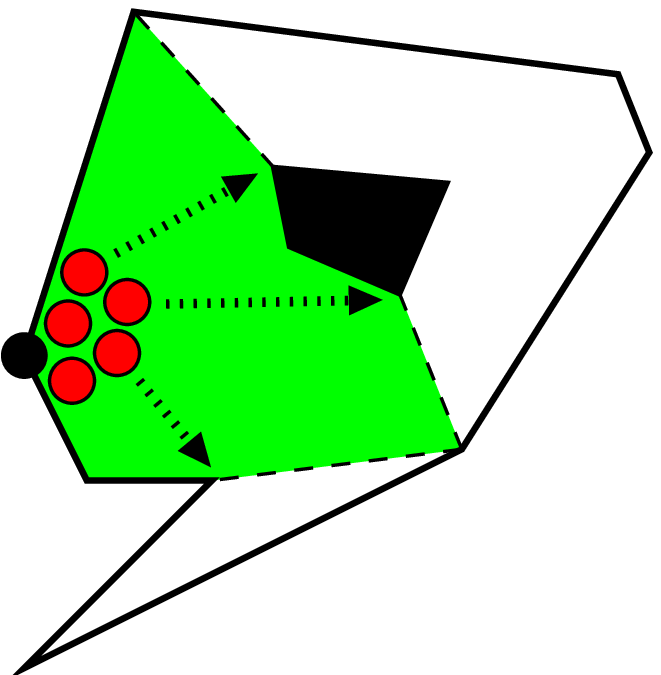} } }
  \subfigure[]{ \resizebox{0.25\linewidth}{!}{ \includegraphics{./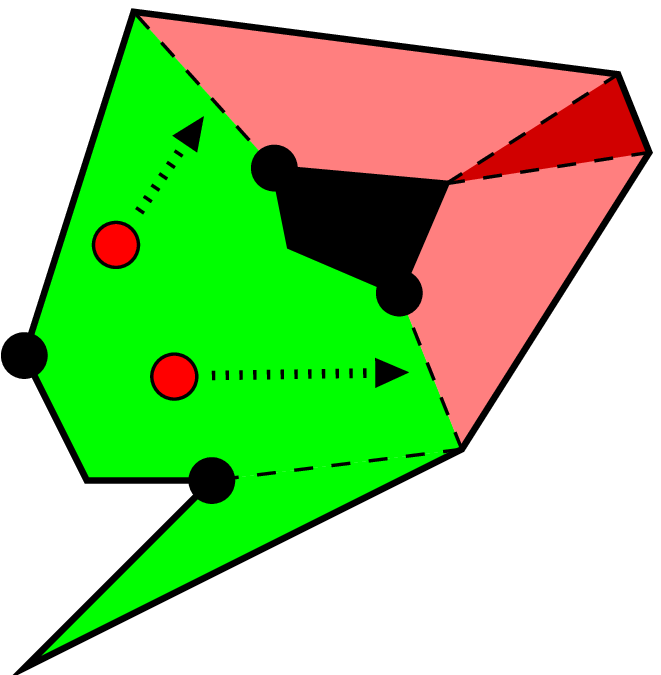} } }
  \subfigure[]{ \resizebox{0.25\linewidth}{!}{ \includegraphics{./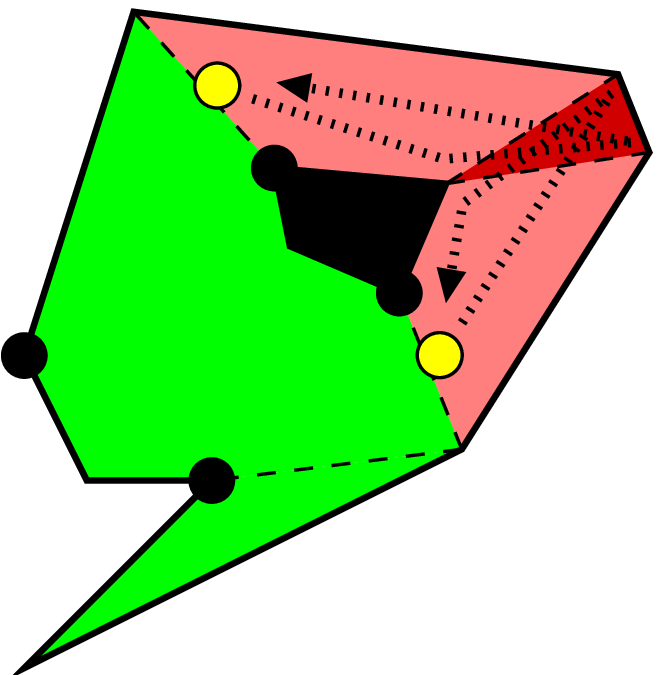} } }
  \vspace{0.5em}

  \subfigure[]{ \resizebox{0.25\linewidth}{!}{ \includegraphics{./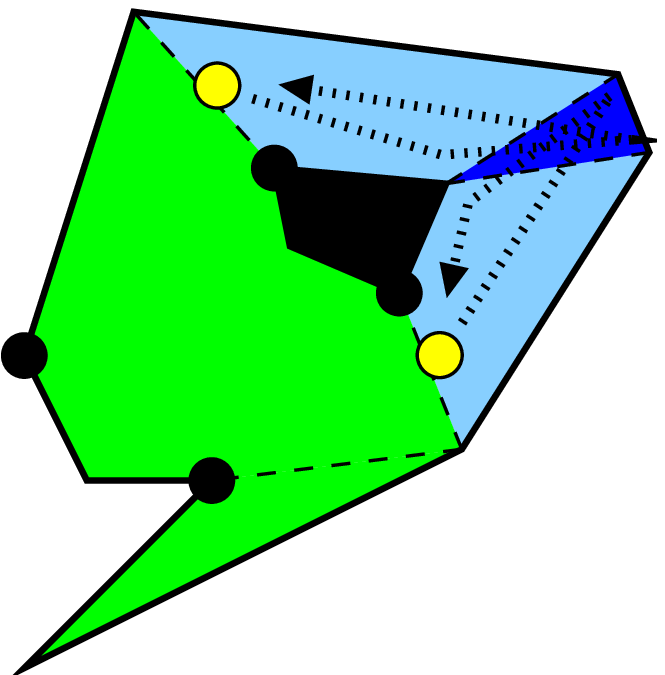} } }
  \subfigure[]{ \resizebox{0.25\linewidth}{!}{ \includegraphics{./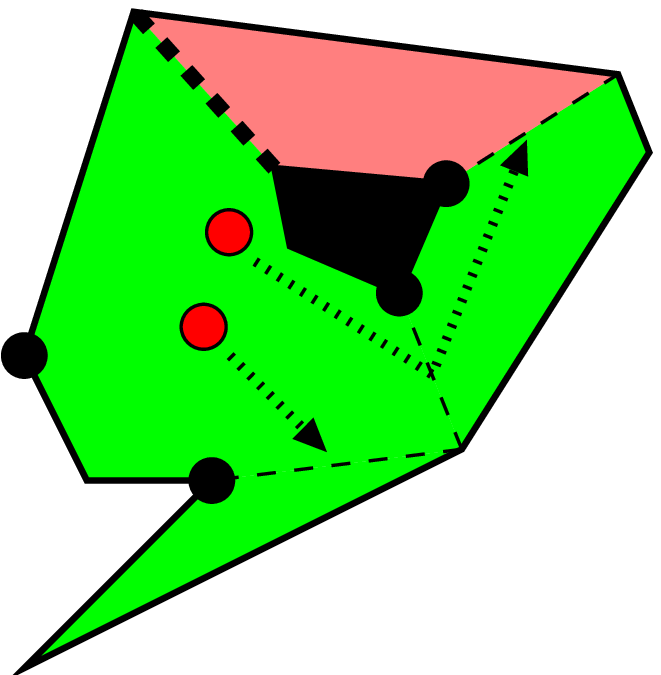} } }
  \subfigure[]{ \resizebox{0.25\linewidth}{!}{ \includegraphics{./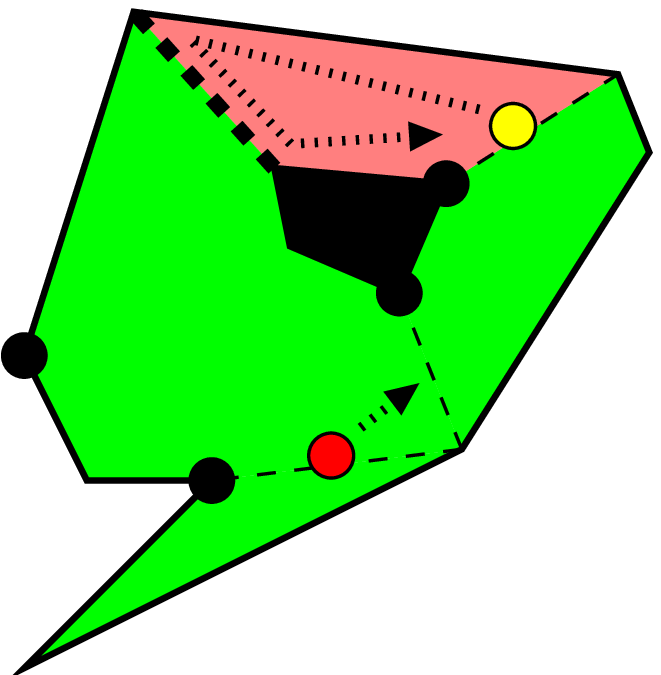} } }
  \vspace{0.5em}

  \subfigure[]{ \resizebox{0.25\linewidth}{!}{ \includegraphics{./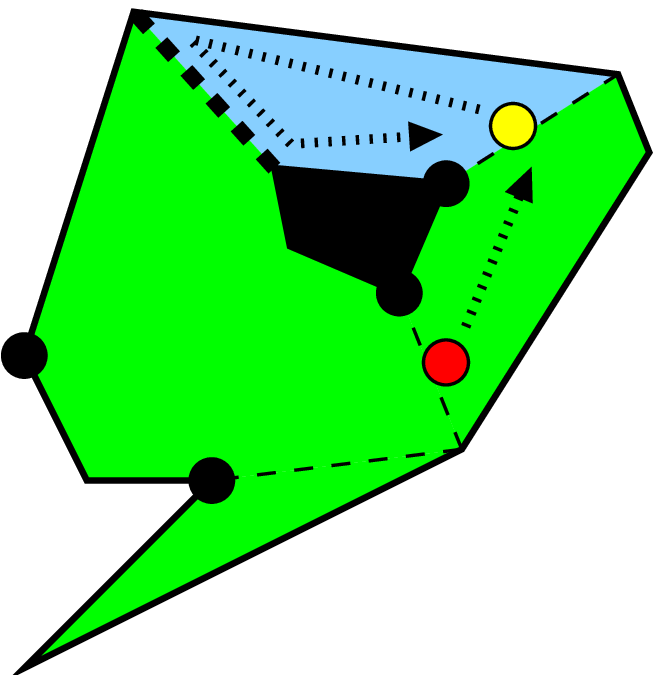} } }
  \subfigure[]{ \resizebox{0.25\linewidth}{!}{ \includegraphics{./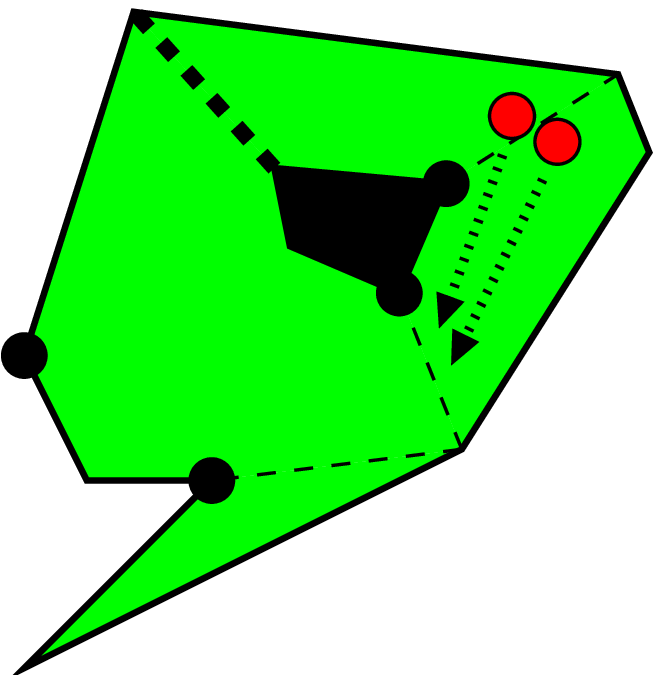} } }
  \caption{\label{fig:minimal_example} With color codes from
    Fig.~\ref{fig:agent_modes_and_abstract_proxy} and
    \ref{fig:cell_statuses}, here is a simple example of agents
    executing the distributed deployment algorithm of
    Table~\ref{tab:dfcd}.  (a) Agents enter the environment and the
    leader initializes the root cell to status {\tt permanent} because
    no branch conflicts could possibly exist yet.  Explorer agents
    move out to become leaders of child cells.  (b) The lower child
    cell is initialized with status {\tt permanent} because it has no
    gap edges and thus cannot be in branch conflict.  The upper two
    child cells are initialized to {\tt retracting} because they could
    be in branch conflict at unexplored gap edges; indeed there is a
    branch conflict at the dark red overlap region.  The remaining
    explorer agents continue moving out to the new cells.  (c) Once
    the explorers reach the retracting cells, they become proxies and
    run tours around the cells to check for branch conflict with
    permanent cells.  (d) After the first proxy tours, the child
    cells' statuses are advanced to {\tt contending} and each proxy
    run a second tour.  (e) During the second proxy tours, the branch
    conflict is detected between contending cells and the cell with
    higher PTVUID is deleted.  The agents that were in the deleted
    cell move back up the partition tree and continue exploring
    depth-first.  The other proxy becomes a leader of a new child cell
    initialized to {\tt retracting}.  (f) One of the explorers arrives
    at the retracting cell and begins a proxy tour to advance the cell
    to {\tt contending}.  (g) The proxy runs a second tour and
    advances the cell to {\tt permanent} and the partition is
    completed.  (h) Remaining explorers continue navigating the
    partition tree depth-first in search of tasks; this adds
    robustness because they will be able to fill in anywhere an agent
    may fail or a door may open.}
\end{center}
\end{figure}

In this section we describe how a group of mobile robotic agents can
distributedly emulate the incremental partition and vantage point
labeling algorithms of Sec.~\ref{sec:incremental_partition}, thus
solving the Distributed Visibility-Based Deployment Problem with
Connectivity.  We first give a rough overview of the algorithm, called
DISTRIBUTED\_DEPLOYMENT(), and later address details with aid of the
pseudocode in Table~\ref{tab:dfcd}.  Each agent $i$ has a local
variable mode$^{[i]}$, among others, which takes a value {\tt lead},
{\tt proxy}, or {\tt explore}.  For short, we call an agent in {\tt
  lead} mode a \emph{leader}, an agent in {\tt proxy} mode a
\emph{proxy}, and an agent in {\tt explore} mode an \emph{explorer}.
Agents may switch between modes (see
Fig.~\ref{fig:agent_modes_and_abstract_proxy}a) based on certain
asynchronous events.  Leaders settle at sparse vantage points and are
responsible for maintaining in their memory a distributed
representation of the partition tree $\Tp$ consistent with
Definition~\ref{defn:partition_tree}.  By distributed representation
we mean that each leader $i$ retains in its memory up to two
\emph{vertices of responsibility}, $(p_1^{[i]}, c_1^{[i]})$ and
$(p_2^{[i]}, c_2^{[i]})$, and it knows which gap edges of those
vertices lead to the parent and child vertices in $\Tp$.\footnote{The
  subscripts of a leader agent's \emph{vertices of responsibility} are
  not to be confused with PTVUIDs, i.e., $(p_1^{[i]}, c_1^{[i]})$ and
  $(p_2^{[i]}, c_2^{[i]})$ are not in general the same as $(p_{(1)},
  c_{(1)})$ and $(p_{(2)}, c_{(2)})$.}  We call $(p_1^{[i]},
c_1^{[i]})$ the \emph{primary vertex} of agent $i$ and $(p_2^{[i]},
c_2^{[i]})$ the \emph{secondary vertex}.  A leader typically has only
a primary vertex in its memory and may have also a secondary only if
it is either positioned (1) at a double vantage point, or (2) at a
sparse vantage point adjacent to a nonsparse vantage point.
%
%
Each cell in a leader's memory has a status which takes the value {\tt
  retracting}, {\tt contending}, or {\tt permanent} (see
Fig.~\ref{fig:cell_statuses}).  Only when a cell has attained status
{\tt permanent} can any child $\Tp$ vertices be added at its
unexplored gap edges.  

\begin{remark}[3 Cell Statuses]
\label{rm:3_cell_statuses}
In our system of three cell statuses, a cell must go through two steps
before attaining status {\tt permanent}.  Intuitively, the need for
two steps arises from the fact that an agent must first determine the
boundary of its cell before it can even know what other cells are in
branch conflict or place children according to the parity-based
vantage point selection scheme.  Hence, the first proxy tour allows
truncation of the cell boundary at all permanent cells.  Only after
that, when the boundary is known, is the second proxy tour run and the
cell deconflicted with other contending cells.  Note that even in the
centralized incremental partition algorithm two steps had to be taken
by a newly constructed cell: the cell had to be (1) truncated at
existing phantom walls, and then (2) deleted if it was in branch
conflict.\footnote{We did attempt to simplify the distributed
  deployment alogrithm and make the cells only go through a single
  step, i.e., a single proxy tour to become permanent, however, there
  seem to be other difficulties with such an approach, particularly
  with time complexity bounds.}
\end{remark}

\noindent The job of a proxy agent is to assist leaders in advancing
the status of their cells towards {\tt permanent} by proxying
communication with other leaders (see
Fig~\ref{fig:agent_modes_and_abstract_proxy}b).  Any agent which is
not a leader or proxy is an explorer.  Explorers merely move in
depth-first order systematically about $\Tp$ in search of opportunity
to serve as a proxy or leader (see Fig.~\ref{fig:df_ordering_micro}
and \ref{fig:df_ordering_macro}).
To simplify the presentation, let us assume for now that, as in the
examples Fig.~\ref{fig:incremental_partition} and
Fig.~\ref{fig:minimal_example}, no double vantage points or triangular
cells occur.  Under this assumption, each leader will be responsible
for only one $\Tp$ vertex, its primary vertex, and all vantage points
will be sparse.  The deployment begins with all agents colocated at
the first vantage point $p_\emptyset$.  One agent, say agent $0$, is
initialized to {\tt lead} mode with the first cell
$c_{\xi_1}^{[0]}=c_\emptyset=\VVver(p_\emptyset)$ in its memory.  All
other agents are initialized to {\tt explore} mode.  Agent $0$ can
immediately advance the status of $c_\emptyset$ to {\tt permanent}
because it cannot possibly be in branch conflict (no other cells even
exist yet); in general, however, cells can only transition between
statuses when a proxy tour is executed.  Agent $0$ sees all the
explorers in its cell and assigns as many as necessary to become
leaders so that there will be one new leader positioned on each
unexplored gap edge of $c_\emptyset$.  The new leader agents move
concurrently to their new respective vantage points while all
remaining explorer agents move towards the next cell in their
depth-first ordering.  When a leader first arrives at its vantage
point, say $p_\xi$, of the cell $c_\xi$, it initializes $c_\xi$ to
have status {\tt retracting} and boundary equal to the portion of
$\VVver(p_\xi)$ which is across the parent gap edge and extends away
from the parent's cell.  When an explorer agent comes to such a newly
created retracting cell, the leader assigns that explorer to become a
proxy and follow a proxy tour which traverses all the gap edges of
$c_\xi$.  During the proxy tour, the proxy agent is able to
communicate with any leader of a permanent cell that might be in
branch conflict with the $c_\xi$.  The cell $c_\xi$ is thus truncated
as necessary to ensure it is not in branch conflict with any {\tt
  permanent} cell.  When this first proxy tour is complete, the status
of $c_\xi$ is advanced to {\tt contending}.  The leader of $c_\xi$
then assigns a second proxy tour which again traverses all the gap
edges of $c_\xi$.  During this second proxy tour, the leader
communicates, via proxy, with all leaders of contending cells which
come into line of sight of the proxy.  If a branch conflict is
detected between $c_\xi$ and another contending cell, the agents have
a \emph{shoot-out}: they compare PTVUIDs of the cells and agree to
delete the one which is larger according to the following total
ordering.

\begin{definition}[PTVUID Total Ordering]
\label{defn:ptvuid_total_ordering}
Let $\xi_1$ and $\xi_2$ be distinct PTVUIDs.  If $\xi_1$ and $\xi_2$
do not have equal depth, then $\xi_1 < \xi_2$ if and only if the depth
of $\xi_1$ is less than the depth of $\xi_2$.  If $\xi_1$ and $\xi_2$
do have equal depth, then $\xi_1 < \xi_2$ if and only if $\xi_1$ is
lexicographically smaller than $\xi_2$.\footnote{ For example, $(1) <
  (2)$ and $(1,3) < (3,2)$, but $(3,2) < (1,3,1)$.}
\end{definition}


\noindent When a cell $c_\xi$ with parent $c_\zeta$ is deleted, two
things happen: (1) The leader of $c_\zeta$ marks a phantom wall at its
child gap edge leading to $c_\xi$, and (2) all agents that were in
$c_\xi$ become explorers, move back into $c_\zeta$, and resume
depth-first searching for new tasks as in
Fig.~\ref{fig:minimal_example}e.  If the second proxy tour of a cell
$c_\xi$ is completed without $c_\xi$ being deleted, then the status of
$c_\xi$ is advanced to {\tt permanent} and its leader may then assign
explorers to become leaders of child $\Tp$ vertices at $c_\xi$'s
unexplored gap edges.  Agents in different branches of $\Tp$ create
new cells in parallel and run proxy tours in an effort to advance
those cells to status {\tt permanent}.  New $\Tp$ vertices can in turn
be created at the unexplored gap edges of the new permanent cells and
the process continues until, provided there are enough agents, the
entire environment is covered and the deployment is complete.

%
%

We now turn our attention to pseudocode Table~\ref{tab:dfcd} to
describe DISTRIBUTED\_DEPLOYMENT() more precisely.  The algorithm
consists of three threads which run concurrently in each agent:
communication (lines 1-6), navigation (lines 7-13), and internal state
transition (lines 14-26).  An outline of the local variables used for
these threads is shown in Tables~\ref{tab:dfcd_local_vars}
and~\ref{tab:dfcd_cell_fields}.  The communication thread tracks the
internal states of all an agent's visibility neighbors.  One could
design a custom communication protocol for the deployment which would
make more efficient use of communication bandwidth, however, we find
it simplifies the presentation to assume agents have direct access to
their visibility neighbors' internal states via the data structure
Neighbor\_Data$^{[i]}$.  The navigation thread has the agent follow,
at maximum velocity $u_{\rm max}$, a queue of waypoints called
Route$^{[i]}$ as long as the internal state component $c_{\xi_{\rm
    proxied}}^{[i]}$.Wait\_Set is empty (it is only ever nonempty for
a proxy agent and its meaning is discussed further in
Section~\ref{subsec:dfcd_proxy}).  The waypoints can be represented in
a local coordinate system established by the agent every time it
enters a new cell, e.g., a polar coordinate system with origin at the
cell's vantage point.  In the internal state transition thread, an
agent switches between {\tt lead}, {\tt proxy}, and {\tt explore}
modes.  The agent reacts to different asynchronous events depending on
what mode it is in.  We treat the details of the different mode
behaviors and corresponding subroutines in the following
Sections~\ref{subsec:dfcd_lead}, \ref{subsec:dfcd_proxy}, and
\ref{subsec:dfcd_explore}.

\begin{table}
\caption{\label{tab:dfcd_local_vars} Agent Local Variables for Distributed Deployment }
\begin{center}
\vspace{-1em}
\renewcommand{\arraystretch}{1.5}


\begin{tabular}{|c|c|l|}
\hline
Use           & Name                                  & Brief Description \\ \hline \hline

\multirow{5}{*}{Communication}  &UID$^{[i]}$ := $i$                     & agent Unique IDentifier\rule[-0.2cm]{0cm}{0.2cm} \\

 &In\_Buffer$^{[i]}$                     & FIFO queue of messages received from other agents\rule[-0.2cm]{0cm}{0.2cm} \\

 &Neighbor\_Data$^{[i]}$                 & \parbox[t]{9cm}{data structure which tracks relevant state information of visibility neighbors\rule[-0.2cm]{0cm}{0.2cm}} \\

 &state\_change\_interrupt$^{[i]}$       & \parbox[t]{9cm}{boolean, {\tt true} if and only if internal state has changed between the last and current iteration of the communication thread\rule[-0.2cm]{0cm}{0.2cm}} \\

 &new\_visible\_agent\_interrupt$^{[i]}$ & \parbox[t]{9cm}{boolean, {\tt true} if and only if a new agent became visible between the last and current iteration of the communication thread\rule[-0.2cm]{0cm}{0.2cm}} \\ \hline

\multirow{2}{*}{Navigation} &Route$^{[i]}$   & FIFO queue of waypoints\rule[-0.2cm]{0cm}{0.2cm} \\
 &$p^{[i]}$, $\dot{p}^{[i]}$, $u$            & position, velocity, and velocity input \\ \hline

\multirow{5}{*}{Internal State} &mode$^{[i]}$                        & \parbox[t]{9cm}{agent mode takes a value {\tt lead}, {\tt proxy}, or {\tt explore}\rule[-0.2cm]{0cm}{0.2cm}} \\ 

&Vantage\_Points$^{[i]}$ := $( p_{\xi_1}^{[i]}, p_{\xi_2}^{[i]} )$      & \parbox[t]{9cm}{ vantage points used in {\tt lead} mode for distributed representation of $\Tp$; may have size 0, 1, or 2; each $p_\xi$ may be labeled either {\tt sparse} or {\tt nonsparse}\rule[-0.2cm]{0cm}{0.2cm}} \\

&Cells$^{[i]}$ := $( c_{\xi_1}^{[i]}, c_{\xi_2}^{[i]} )$                & \parbox[t]{9cm}{cells used in {\tt lead} mode for distributed representation of $\Tp$; may have size 0, 1, or 2\rule[-0.2cm]{0cm}{0.2cm}; cell fields shown in Tab.~\ref{tab:dfcd_cell_fields}} \\


&$c_{\xi_{\rm proxied}}^{[i]}$                                         & \parbox[t]{9cm}{used in {\tt proxy} mode as local copy of cell being proxied\rule[-0.2cm]{0cm}{0.2cm}} \\

&$\xi_{\rm current}^{[i]}$, $\xi_{\rm last}^{[i]}$                      & \parbox[t]{9cm}{PTVUIDs of current and last $\Tp$ vertices visited in depth-first search; used in {\tt explore} mode to navigate $\Tp$ \rule[-0.2cm]{0cm}{0.2cm}} \\ \hline


\end{tabular}


\renewcommand{\arraystretch}{1.0}
\end{center}
\end{table}
\begin{table}
\caption{\label{tab:dfcd_cell_fields} Cell Data Fields for Distributed Deployment }
\begin{center}
\vspace{-1em}
\renewcommand{\arraystretch}{1.5}
\begin{tabular}{|c|l|}
\hline
Name                        & Brief Description \\ \hline \hline

$\xi$                       & PTVUID (Partition Tree Vertex Unique IDentifier)\rule[-0.2cm]{0cm}{0.2cm} \\ 

$c_\xi$.Boundary             & \parbox[t]{9cm}{polygonal boundary with each gap edge labeled either as
                                                    {\tt parent}, {\tt child}, {\tt unexplored}, or {\tt phantom\_wall}; 
                                                    child gap edges may be additionally labeled with an agent UID if that agent has been assigned as leader of that gap edge\rule[-0.2cm]{0cm}{0.2cm}} \\

$c_\xi$.status               & \parbox[t]{9cm}{cell status may take a value {\tt retracting}, {\tt contending}, or {\tt permanent} \rule[-0.2cm]{0cm}{0.2cm}} \\

$c_\xi$.proxy\_uid           & \parbox[t]{9cm}{UID of agent assigned to proxy $c_\xi$; takes value $\emptyset$ if no proxy has been assigned\rule[-0.2cm]{0cm}{0.2cm}} \\

$c_\xi$.Wait\_Set            & \parbox[t]{9cm}{set of PTVUIDs used by proxy agents to decide when they should wait for another cell's proxy tour to complete before deconfliction can occur, thus preventing race conditions\rule[-0.2cm]{0cm}{0.2cm}} \\ \hline



\end{tabular}
\renewcommand{\arraystretch}{1.0}
\end{center}
\end{table}
\begin{table}
\caption{\label{tab:dfcd} Distributed Deployment Algorithm }
\vspace{-3em}

\begin{quote}
{\small
      
{\rule[0em]{\linewidth}{1pt}}
DISTRIBUTED\_DEPLOYMENT()

\begin{algorithmic}[1]

\vspace{0.3em} 

%

\STATE \COMMENT{ Communication Thread }
\WHILE{ {\tt true} }     
     \STATE in\_message $\leftarrow$ In\_Buffer$^{[i]}$.PopFirst();
     \STATE update Neighbor\_Data$^{[i]}$ according to in\_message;
     \IF{ state\_change\_interrupt$^{[i]}$ {\bf or} visible\_agent\_interrupt$^{[i]}$ }
          \STATE broadcast internal state information;
     \ENDIF
\ENDWHILE

\vspace{0.5em} 

\STATE \COMMENT{ Navigation Thread }
\WHILE{ {\tt true} }
     \WHILE{ Route$^{[i]}$ is nonempty {\bf and} $p^{[i]}$ $\neq$ Route$^{[i]}$.First() {\bf and} $c_{\xi_{\rm proxied}}^{[i]}$.Wait\_Set is empty }
          \STATE $u^{[i]} \leftarrow$ velocity with magnitude $\umax$ and direction towards Route$^{[i]}$.First();
     \ENDWHILE
     \STATE $u^{[i]} \leftarrow 0$;
     \IF{ $p^{[i]}$ == Route$^{[i]}$.First() }
         \STATE Route$^{[i]}$.PopFirst();
     \ENDIF
\ENDWHILE

\vspace{0.5em} 

\STATE \COMMENT{ Internal State Transition Thread }
\WHILE{ {\tt true} }   
     \IF{ mode$^{[i]}$ == {\tt lead} }
          \STATE ATTEMPT\_CELL\_CONSTRUCTION(); \COMMENT{ See Tab.~\ref{tab:dfcd_attempt_cell_construction} }
          \STATE LEAD(); \COMMENT{ See Tab.~\ref{tab:dfcd_lead} }
          \STATE PROPAGATE\_SPARSE\_VANTAGE\_POINT\_INFORMATION(); \COMMENT{ See Tab.~\ref{tab:dfcd_propagate_sparse_vantage_point_information} }
     \ELSIF{ mode$^{[i]}$ == {\tt proxy} }
          \IF{ $c_{\rm proxied}$.status == {\tt retracting} }
               \STATE PROXY\_RETRACTING\_CELL(); \COMMENT{ See Tab.~\ref{tab:dfcd_proxy_retracting} }
          \ELSIF{ $c_{\rm proxied}$.status == {\tt contending} }
               \STATE PROXY\_CONTENDING\_CELL(); \COMMENT{ See Tab.~\ref{tab:dfcd_proxy_contending} }
          \ENDIF
     \ELSIF{ mode$^{[i]}$ == {\tt explore} }
          \STATE EXPLORE(); \COMMENT{ See Tab.~\ref{tab:dfcd_explore} }  
     \ENDIF

\ENDWHILE

\end{algorithmic}
\vspace{-0.5em}

{\rule[0.3em]{\linewidth}{0.5pt}}

}
\end{quote}

\end{table}
%
\begin{table}
\caption{\label{tab:dfcd_attempt_cell_construction} Distributed Deployment Subroutine }
\vspace{-3em}

\begin{quote}
{\small
      
{\rule[0em]{\linewidth}{1pt}}
ATTEMPT\_CELL\_CONSTRUCTION()

\begin{algorithmic}[1]


\IF{ there is a vantage point $p_\xi$ in Vantage\_Points$^{[i]}$ for which no cell in Cells$^{[i]}$ has yet been constructed
    \\ \ \ \ \ {\bf and} $p^{[i]} == p_\xi$ }
     \IF{ Neighbor\_Data$^{[i]}$ shows a cell $c_{\xi'}$ such that $c_{\xi'}$.proxy\_uid == $i$  }
          \STATE \COMMENT{ Proxy for another leader }
          \STATE mode$^{[i]} \leftarrow$ {\tt proxy}; Route$^{[i]} \leftarrow$ tour which traverses all gap edges of $c_\xi'$ and returns to $p_\xi$;

     \ELSIF{ Neighbor\_Data$^{[i]}$ shows any contending or permanent cell $c_{\xi'}$ which contains the gap edge associated with $\xi$
             \\ \ \ \ \ \ \ \ \ {\bf and} $\xi'$ is not the parent PTVUID of $\xi$ }
          \STATE \COMMENT{ Delete partition tree vertex if there is not at least one unique triangle }
          \STATE delete $(p_\xi, c_\xi)$;
          \IF{ Cells$^{[i]}$ is empty }
               \STATE mode$^{[i]} \leftarrow$ {\tt explore}; swap $\xi^{[i]}_{\rm last}$ and $\xi^{[i]}_{\rm current}$;
          \ELSIF{ Cells$^{[i]}$ contains exactly one cell }
               \STATE Route$^{[i]} \leftarrow$ straight path to $p^{[i]}_{\xi_1}$;
          \ENDIF

     \ELSIF{ Neighbor\_Data$^{[i]}$ shows no other agent constructing a cell $c_{\xi'}$ where $\xi' < \xi$ }
          \STATE \COMMENT{ Compute initial cell }
          \STATE $c_\xi \leftarrow \VVver(p_\xi)$;
          \STATE truncate $c_\xi$ such that only the portion remains which is across its parent gap edge; 
          \FOR{ each gap edge $g'$ of $c_\xi$ }     
               \IF{ $g'$ is the parent gap edge }
                    \STATE label $g'$ as {\tt parent} in $c_\xi$;
               \ELSE
                    \STATE label $g'$ as {\tt unexplored} in $c_\xi$;
               \ENDIF
          \ENDFOR
          \STATE insert $c_\xi$ into Cells$^{[i]}$;
     \ENDIF

\ENDIF

\end{algorithmic}
\vspace{-0.5em}

{\rule[0.3em]{\linewidth}{0.5pt}}

}
\end{quote}

\end{table}
\begin{table}
\caption{\label{tab:dfcd_lead} Distributed Deployment Subroutine }
\vspace{-3em}

\begin{quote}
{\small
      
{\rule[0em]{\linewidth}{1pt}}
LEAD()

\begin{algorithmic}[1]


\COMMENT{ Task assignments }

\IF{ Cells$^{[i]}$ contains only a single permanent cell $c^{[i]}_{\xi_1}$
\\ \ \ \ \ {\bf and} $c^{[i]}_{\xi_1}$ is triangle with one unexplored gap edge $g$
\\ \ \ \ \ {\bf and} $g$ has not been assigned a leader }
     \STATE \COMMENT{ Assign self a secondary vertex at child of primary vertex }
     \STATE $p^{[i]}_{\xi_2} \leftarrow p_\xi$;
     \STATE Route$^{[i]}$ $\leftarrow$ straight line path to $p_\xi$;
     \STATE label $g$ on $c^{[i]}_{\xi_1}$ as {\tt child} and as having leader $i$;

\ELSIF{ Cells$^{[i]}$ contains cell $c_\zeta$ with double child vantage point $p_\xi = p_{\xi'}$ where $\xi < \xi'$
\\ \ \ \ \ {\bf and} Neighbor\_Data$^{[i]}$ contains a leader agent $j$ with $c_\xi$ in Cells$^{[j]}$
\\ \ \ \ \ {\bf and} $p_\xi$ is labeled {\tt sparse} 
\\ \ \ \ \ {\bf and} gap edge $g$ associated with $p_{\xi'}$ is unexplored }
     \STATE \COMMENT{ Assign other leader a secondary vertex at double vantage point }
     \STATE label $g$ on $c_\zeta$ as {\tt child} and having leader $j$;

\ELSIF{ Neighbor\_Data$^{[i]}$ shows explorer agent $j$ such that $c_\xi = c_{\xi^{[j]}_{\rm current}}$ is permanent in Cells$^{[i]}$ }
     \STATE $\xi'$ $\leftarrow$ PTVUID of next vertex in \emph{depth-first ordering};
     \IF{ there is an unexplored gap edge $g$ of $c_\xi$ 
     \\ \ \ \ \  {\bf and} $\bigl ($ vantage point $p_{\xi'}$ associated with $g$ is single vantage point 
     \\ \ \ \ \ \ \ \ \ {\bf or} double vantage point with colocated vantage point {\tt nonsparse} in Neighbor\_Data$^{[i]}$ $\bigr )$ }
          \STATE \COMMENT{ Assign explorer to become leader of child vertex }
          \STATE label $g$ in $c_\xi$ as {\tt child} and having leader $j$;
     \ENDIF
\ENDIF

\IF{ Neighbor\_Data$^{[i]}$ contains an explorer agent $j$
\\ \ \ \ \ {\bf and} Cells$^{[i]}$ contains a cell $c_\xi = c_{\xi^{[j]}_{\rm current}}$ with $c_\xi$.status $\neq$ {\tt permanent}
\\ \ \ \ \ {\bf and} $c_\xi$.proxy\_uid == $\emptyset$ }
     \STATE \COMMENT{ Assign explorer as proxy }
     \STATE $c_\xi$.proxy\_uid $\leftarrow$ $j$;
\ELSIF{ Neighbor\_Data$^{[i]}$ contains a leader agent $j$ with Cells$^{[j]}$ empty
\\ \ \ \ \ {\bf and} Cells$^{[i]}$ contains a retracting cell $c_\xi$ {\bf and} $c_\xi$.proxy\_uid == $\emptyset$ }
     \STATE \COMMENT{ Assign leader as proxy }
     \STATE $c_\xi$.proxy\_uid $\leftarrow$ $j$;
\ENDIF


\IF{ Neighbor\_Data$^{[i]}$ contains a child gap edge $g$ with agent $i$ labeled as its leader
     \\ \ \ \ \ {\bf and} the associated vantage point $p_\xi$ is not in Vantage\_Points$^{[i]}$ }
     \STATE \COMMENT{ Accept leadership of second cell at double vantage point }
     \STATE $p^{[i]}_{\xi_2} \leftarrow p_\xi$;
\ENDIF

\end{algorithmic}

(continued)

}
\end{quote}

\end{table}
\begin{table}
\vspace{-3em}

\begin{quote}
{\small
      
{(continuation)}

\begin{algorithmic}[1]
\addtocounter{ALC@line}{22}


\STATE \COMMENT{ React to deconfliction events }

\IF{ a cell $c_\xi$ in Cells$^{[i]}$ corresponds to a cell $c_{\xi_{\rm proxied}}^{[j]}$ in Neighbor\_Data$^{[i]}$ }  
     \IF{ $c_{\xi_{\rm proxied}}^{[j]}$ has been truncated at a permanent cell }
          \STATE perform the same truncation on $c_\xi$;
     \ENDIF
     \IF{ $c_\xi$.Wait\_Set $\neq$ $c_{\xi_{\rm proxied}}^{[j]}$.Wait\_Set }
          \STATE $c_\xi$.Wait\_Set $\leftarrow$ $c_{\xi_{\rm proxied}}^{[j]}$.Wait\_Set;
     \ENDIF
\ENDIF

\IF{ Neighbor\_Data$^{[i]}$ shows a proxy has deleted a cell corresponding to $c_\xi$ in Cells$^{[i]}$ {\bf or} $\bigl ($ Neighbor\_Data$^{[i]}$ shows contending cell $c_{\xi_{\rm proxied}}^{[j]}$ in \emph{branch conflict} with contending cell $c_\xi$ in Cells$^{[i]}$ {\bf and} $\xi_{\rm proxied}^{[j]} < \xi$ $\bigr )$ }
     \IF{ Cells$^{[i]}$ contains exactly one cell }
          \STATE delete $(p^{[i]}_{\xi_1}, c^{[i]}_{\xi_1})$; mode$^{[i]} \leftarrow$ {\tt explore};
     \ELSIF{ Cells$^{[i]}$ contains two cells }
          \STATE delete $(p^{[i]}_{\xi_2}, c^{[i]}_{\xi_2})$; Route$^{[i]} \leftarrow$ straight path to $p^{[i]}_{\xi_1}$;
     \ENDIF

\ENDIF

\IF{ Neighbor\_Data$^{[i]}$ shows a cell was deleted at gap edge $g$ of cell $c_\xi$ in Cells$^{[i]}$ }
     \STATE label $g$ as {\tt phantom\_wall} in $c_\xi$;
\ENDIF

\IF{ Neighbor\_Data$^{[i]}$ shows a proxy tour was successfully completed without deletion for a cell $c_\xi$ in Cells$^{[i]}$ }
     \STATE advance $c_\xi$.status; $c_\xi$.proxy\_uid $\leftarrow \emptyset$;
\ENDIF

\end{algorithmic}
\vspace{-0.5em}

{\rule[0.3em]{\linewidth}{0.5pt}}

}
\end{quote}

\end{table}
\begin{table}
\caption{\label{tab:dfcd_propagate_sparse_vantage_point_information} Distributed Deployment Subroutine }
\vspace{-3em}

\begin{quote}
{\small
      
{\rule[0em]{\linewidth}{1pt}}
PROPAGATE\_SPARSE\_VANTAGE\_POINT\_INFORMATION()

\begin{algorithmic}[1]



\STATE \COMMENT{ Label a vantage point in Vantage\_Points$^{[i]}$ as {\tt sparse} or {\tt nonsparse} }
\IF{ there is an unlabeled vantage point $p_\xi$ in Vantage\_Points$^{[i]}$ with permanent cell $c_\xi$ in Cells$^{[i]}$
 \\ \ \ \ \ {\bf and} $\bigl ($ $(p_\xi, c_\xi)$ is a leaf {\bf or} Cells$^{[i]}$ and Neighbor\_Data$^{[i]}$ show all child vantage points have been labeled $\bigr )$}
     \IF{ $|V_{c_\xi}| == 3$ {\bf and} Cells$^{[i]}$ or Neighbor\_Data$^{[i]}$ shows a child vantage point labeled {\tt sparse} }
          \STATE label $p_\xi$ as {\tt nonsparse};
     \ELSE
          \STATE label $p_\xi$ as {\tt sparse};
     \ENDIF
\ENDIF

\STATE \COMMENT{ Acquire a nonsparse vertex from an agent higher in the partition tree}
\IF{ Cells$^{[i]}$ contains exactly one cell $c_\xi$ with $p_\xi$ labeled {\tt sparse} {\bf and} $p^{[i]}$ == $p_\xi$
     \\ \ \ \ \ {\bf and} Neighbor\_Data$^{[i]}$ shows a cell $c_\zeta$ which is the parent of $c_\xi$ {\bf and} $p_\zeta$ is labeled {\tt nonsparse} }
     \STATE insert $c_\zeta$ into Cells$^{[i]}$ and $p_\zeta$ into Vantage\_Points$^{[i]}$;
\ENDIF

\STATE \COMMENT{ Surrender a nonsparse vertex to an agent lower in the partition tree } 
\IF{ Neighbor\_Data$^{[i]}$ shows a leader agent $j$ with $p^{[j]}_{\xi_1}$ labeled {\tt sparse}
\\ \ \ \ \ {\bf and} $c^{[i]}_{\xi_2} == c^{[j]}_{\xi_2}$ {\bf and} $\xi^{[j]}_2$ is the parent PTVUID of $\xi^{[i]}_1$ }
     \STATE clear $p^{[i]}_{\xi_2}$ and $c^{[i]}_{\xi_2})$; Route$^{[i]} \leftarrow$ straight path to $p^{[i]}_{\xi_1}$;
\ENDIF

\end{algorithmic}
\vspace{-0.5em}

{\rule[0.3em]{\linewidth}{0.5pt}}

}
\end{quote}

\end{table}
\begin{table}
\caption{\label{tab:dfcd_proxy_retracting} Distributed Deployment Subroutine }
\vspace{-3em}

\begin{quote}
{\small

{\rule[0em]{\linewidth}{1pt}}
PROXY\_RETRACTING\_CELL()

\begin{algorithmic}[1]


\IF{ Route$^{[i]}$ is nonempty } 

     \STATE \COMMENT{ Truncate $c_{\xi_{\rm proxied}}$ at permanent cell }
     \IF{ Neighbor\_Data$^{[i]}$ shows permanent cell $c_\xi$ in \emph{branch conflict} with $c^{[i]}_{\xi_{\rm proxied}}$ }
          \STATE truncate $c^{[i]}_{\xi_{\rm proxied}}$ at $c_\xi$; 
     \ENDIF

     \STATE \COMMENT{ Prevent race conditions and deadlock }
     \IF{ Neighbor\_Data$^{[i]}$ shows contending cell $c_\xi$ in \emph{branch conflict} with $c^{[i]}_{\xi_{\rm proxied}}$
          \\ \ \ \ \ \ {\bf and} $c_\xi$.proxy\_uid $\neq$ $\emptyset$ {\bf and} $\bigl ($ $\xi^{[i]}_{\rm proxied}$ $\notin$ $c_\xi$.Wait\_Set {\bf or} $\xi < \xi^{[i]}_{\rm proxied}$ $\bigr )$ }

          \STATE $c^{[i]}_{\xi_{\rm proxied}}$.Wait\_Set $\leftarrow$ $c^{[i]}_{\xi_{\rm proxied}}$.Wait\_Set $\cup$ $\xi$;
     \ELSE
          \STATE $c^{[i]}_{\xi_{\rm proxied}}$.Wait\_Set $\leftarrow$ $c^{[i]}_{\xi_{\rm proxied}}$.Wait\_Set $\setminus$ $\xi$;
     \ENDIF

\ELSIF{ Route$^{[i]}$ is empty }  
     \STATE \COMMENT{ End proxy tour and enter previous mode }
     \IF{ Vantage\_Points$^{[i]}$ is empty }
          \STATE mode$^{[i]} \leftarrow$ {\tt explore}; 
     \ELSE 
          \STATE mode$^{[i]} \leftarrow$ {\tt lead}; 
     \ENDIF
     \STATE clear $c^{[i]}_{\xi_{\rm proxied}}$;

\ENDIF

\end{algorithmic}
\vspace{-0.5em}

{\rule[0.3em]{\linewidth}{0.5pt}} 

}
\end{quote}

\end{table}
\begin{table}
\caption{\label{tab:dfcd_proxy_contending} Distributed Deployment Subroutine }
\vspace{-3em}

\begin{quote}
{\small

{\rule[0em]{\linewidth}{1pt}}
PROXY\_CONTENDING\_CELL()

\begin{algorithmic}[1]


\IF{ Route$^{[i]}$ is nonempty {\bf and} the parent gag edge of $c^{[i]}_{\xi_{\rm proxied}}$ is not phantom wall }
     
     \STATE \COMMENT{ Shoot-out with other contending cells }
     \IF{ $\bigl ($ Neighbor\_Data$^{[i]}$ shows contending cell $c_\xi$ in \emph{branch conflict} with $c^{[i]}_{\xi_{\rm proxied}}$ {\bf and} $\xi < \xi^{[i]}_{\rm proxied}$ $\bigr )$
          \\ \ \ \ \ \ \ {\bf or} Neighbor\_Data$^{[i]}$ shows a phantom wall coinciding with parent gap edge of $c^{[i]}_{\xi_{\rm proxied}}$ }
          \STATE delete $c^{[i]}_{\xi_{\rm proxied}}$; mode$^{[i]} \leftarrow$ {\tt explore};
     \ENDIF

     \STATE \COMMENT{ Prevent race conditions and deadlock }
     \IF{ Neighbor\_Data$^{[i]}$ shows retracting cell $c_\xi$ in \emph{branch conflict} with $c^{[i]}_{\xi_{\rm proxied}}$
          \\ \ \ \ \ \ {\bf and} $c_\xi$.proxy\_uid $\neq$ $\emptyset$ {\bf and} $\bigl ($ $\xi^{[i]}_{\rm proxied}$ $\notin$ $c_\xi$.Wait\_Set {\bf or} $\xi < \xi^{[i]}_{\rm proxied}$ $\bigr )$ }
          \STATE $c^{[i]}_{\xi_{\rm proxied}}$.Wait\_Set $\leftarrow$ $c^{[i]}_{\xi_{\rm proxied}}$.Wait\_Set $\cup$ $\xi$;
     \ELSE
          \STATE $c^{[i]}_{\xi_{\rm proxied}}$.Wait\_Set $\leftarrow$ $c^{[i]}_{\xi_{\rm proxied}}$.Wait\_Set $\setminus$ $\xi$;
     \ENDIF

\ELSIF{ Route$^{[i]}$ is empty } 
     \STATE \COMMENT{ End proxy tour and become explorer }
     \STATE mode$^{[i]} \leftarrow$ {\tt explore}; clear $c_{\xi_{\rm proxied}}$;
\ENDIF

\end{algorithmic}
\vspace{-0.5em}

{\rule[0.3em]{\linewidth}{0.5pt}} 

}
\end{quote}

\end{table}
\begin{table}
\caption{\label{tab:dfcd_explore} Distributed Deployment Subroutine }
\vspace{-3em}

\begin{quote}
{\small
      
{\rule[0em]{\linewidth}{1pt}}
EXPLORE()

\begin{algorithmic}[1]


\IF{ Neighbor\_Data$^{[i]}$ shows a permanent cell $c_\xi$ where $\xi$ == $\xi^{[i]}_{\rm current}$ }
     \STATE $\xi'$ $\leftarrow$ PTVUID of next vertex in \emph{depth-first ordering};
     \IF{ gap edge $g$ at $\xi'$ has already been assigned a leader }
          \STATE \COMMENT{ Continue exploring }
          \STATE $\xi^{[i]}_{\rm last} \leftarrow \xi^{[i]}_{\rm current}$; $\xi^{[i]}_{\rm current} \leftarrow \xi'$;
          \STATE Route$^{[i]} \leftarrow$ local shortest path to midpoint of $g$ through $c_\xi$;
     \ELSIF{ gap edge $g$ at $\xi'$ has agent $i$ labeled as its leader }
          \STATE \COMMENT{ Become leader }
          \STATE mode$^{[i]}$ $\leftarrow$ {\tt lead}; $p^{[i]}_{\xi_1} \leftarrow p_{\xi'}$;
          \STATE Route$^{[i]} \leftarrow$ local shortest path to $p_{\xi'}$ through $c_\xi$;
     \ENDIF

\ELSIF{ Neighbor\_Data$^{[i]}$ shows a cell $c_\xi$ such that $c_\xi$.proxy\_uid == $i$  }
     \STATE \COMMENT{ Become proxy }
     \STATE mode$^{[i]} \leftarrow$ {\tt proxy}; $c^{[i]}_{\xi_{\rm proxied}} \leftarrow c_\xi$;
     \STATE Route$^{[i]} \leftarrow$ tour which traverses all gap edges of $c_\xi$ and returns to parent gap edge;
\ENDIF

\IF{ Neighbor\_Data$^{[i]}$ shows $c_{\xi^{[i]}_{\rm current}}$ has been deleted } 
     \STATE \COMMENT{ Move up partition tree in reaction to deleted cell }
     \STATE Route$^{[i]} \leftarrow$ local shortest path towards $c_{\xi_{\rm last}}$; swap $\xi^{[i]}_{\rm last}$ and $\xi^{[i]}_{\rm current}$;
\ENDIF

\end{algorithmic}
\vspace{-0.5em}

{\rule[0.3em]{\linewidth}{0.5pt}}

}
\end{quote}
\end{table}

\subsection{Leader Behavior}
\label{subsec:dfcd_lead}

The {\tt lead} portion of the internal state transition thread (lines
16-19 of Table~\ref{tab:dfcd}) consists of three subroutines:
ATTEMPT\_CELL\_CONSTRUCTION(), LEAD(), and
PROPAGATE\_SPARSE\_VANTAGE\_POINT\_INFORMATION().  In
ATTEMPT\_CELL\_CONSTRUCTION()
(Table~\ref{tab:dfcd_attempt_cell_construction}), the leader agent
attempts to construct a cell, say $c_\xi$, whenever it first arrives
at $p_\xi$.  In order to guarantee an upper bound on the number of
agents required by the deployment
(Theorem~\ref{thm:dfcd_convergence}), the leader must enforce that any
cell it adds to $\Tp$ contains at least one unique triangle which is
not in any other cell of the distributed $\Tp$ representation.  This
can be accomplished by the leader first looking at its Neighbor\_Data
to see if the parent gap edge, call it $g$, is contained in the cell
of any neighbor other than the parent.  If not, then the existence of
a unique triangle is guaranteed because cell vertices always coincide
with environment vertices.  In that case the agent safely initializes
the cell to {\tt retracting} status and waits for a proxy agent to
help it advance the cell's status towards {\tt permanent}.  If,
however, $g$ is contained in a neighbor cell other than the parent,
then the leader may have to either switch to proxy mode to proxy for
another leader in line of sight (if the candidate cell is primary), or
else wait for the other cell to be proxied (if the candidate cell is
secondary).  If the agent determines that a contending or permanent
cell other than the parent contains $g$, then it deletes the cell and
a phantom wall is labeled.

In LEAD() (Table~\ref{tab:dfcd_lead}), the agent already has
initialized cell(s) in its memory.  Being responsible for cells means
that the leader agent may have to assign tasks.  The assignment may be
of an explorer to become a leader of a child vertex, of an explorer to
become a proxy, of a leader to become a proxy, of itself to lead a
secondary $\Tp$ vertex which is the child of its primary vertex (this
happens when the primary vertex is a triangle), or of another leader
to a secondary vertex at a double vantage point.  Note that in making
the assignments, all vantage points are selected according to the same
\emph{parity-based vantage point selection scheme} used in the
incremental partition algorithm of
Sec.~\ref{sec:incremental_partition}.  So that the distributed
representation of $\Tp$ remains consistent, a leader must also react
to several deconfliction events.  If a proxy truncates the boundary of
a retracting cell, deletes a contending cell, advances the status of a
cell, or adds/removes PTVUIDs to a cell's Wait\_Set, then the
corresponding leader of that cell must do the same.  In fact, whenever
two agents (either proxies or leaders) communicate and their
contending cells are in branch conflict, the cell with lower PTVUID
will be deleted.  Every such cell deletion results in a phantom wall
being marked in the parent cell.  Although it is not stated
explicitely in the pseudocode, note that when a cell is deleted the
leader must wait briefly at the cell's vantage point until any agent
that was proxying comes back to the parent cell; otherwise the proxy
could lose line of sight with the rest of the network.  If a proxy
tour is completed successfully without cell deletion, then the cell
status is advanced towards {\tt permanent}.

By settling only to sparse vantage points, fewer agents are needed to
guarantee full coverage.  This is accomplished by the behavior in
PROPAGATE\_SPARSE\_VANTAGE\_POINT\_INFORMATION()
(Table~\ref{tab:dfcd_propagate_sparse_vantage_point_information})
where agents swap permanent cells with other leaders in such a way
that the information about which vantage points are sparse is
propagated up $\Tp$ whenever a leaf is discovered.  Each cell swap
involves an acquisition by one agent (lines 7-9) and a corresponding
surrender by another (lines 10-12).





\subsection{Proxy Behavior}
\label{subsec:dfcd_proxy}

The {\tt proxy} portion of the internal state transition thread on
lines 20-24 of Table~\ref{tab:dfcd} runs one of two subroutines
depending on the status of the proxied cell: PROXY\_RETRACTING\_CELL()
and PROXY\_CONTENDING\_CELL().  Suppose an agent $i$ is proxying for a
cell $c_\xi$ in leader agent $j$'s memory.  Then agent $i$ keeps a
local copy of $c_\xi$ in $c^{[i]}_{\xi_{\rm proxied}}$ and modifies it
during the proxy tour.  Agent $j$ updates $c_\xi$ to match
$c^{[i]}_{\xi_{\rm proxied}}$ whenever a change occurs.  
%
In PROXY\_RETRACTING\_CELL() (Table~\ref{tab:dfcd_proxy_retracting}),
agent $i$ traverses the gap edges of $c^{[i]}_{\xi_{\rm proxied}}$
while truncating the cell boundary at any encountered permanent cells
in branch conflict.  The goal is for the retracting proxied cell to
not be in branch conflict with any permanent cells by the end of the
proxy tour when its status is advanced to {\tt contending}.  If agent
$i$ encounters a contending cell, say $c_{\xi'}$, and the criteria on
line 6 are satisfied, then agent $i$ must pause its proxy tour, i.e.,
pause motion until $c_{\xi'}$ becomes permanent or deleted.  If the
proxy were not to pause, then it would run the risk of the contending
cell becoming permanent after the opportunity for the proxy to perform
truncation had already passed.  The pausing is accomplished by adding
$\xi'$ to the cell field $c^{[i]}_{\xi_{\rm proxied}}$.Wait\_Set read
by the navigation thread.  Once the proxy tour is over, the leader of
the proxied cell advances the cell's status to {\tt contending} and
the proxy agent enters its previous mode, either explore or lead.

%
In PROXY\_CONTENDING\_CELL() (Table~\ref{tab:dfcd_proxy_contending}),
the goal is for the contending proxied cell to not be in branch
conflict with any other contending cells by the end of the proxy tour
if its status is to be advanced to {\tt permanent}. To this end, agent
$i$ traverses the gap edges of $c^{[i]}_{\xi_{\rm proxied}}$ while
comparing $\xi^{[i]}_{\rm proxied}$ with the PTVUID of every
encountered contending cell in branch conflict with $c^{[i]}_{\xi_{\rm
    proxied}}$.  If a contending cell with PTVUID less than
$\xi^{[i]}_{\rm proxied}$ is encountered, then the proxied cell is
deleted (signified by labeling a phantom wall) and agent $i$ heads
straight back to the parent gap edge where it will end the proxy tour
and enter {\tt explore} mode.  If agent $i$ encounters a retracting
cell, say $c_{\xi'}$, and the criteria on line 6 are satisfied, then
agent $i$ must pause its proxy tour, i.e., pause motion, until
$c_{\xi'}$ becomes contending or truncated out of branch conflict.  If
the proxy were not to pause, then it would run the risk of the
retracting cell becoming contending after the opportunity for the
proxy to perform deconfliction had already passed.  The pausing is
accomplished by adding $\xi'$ to the cell field $c^{[i]}_{\xi_{\rm
    proxied}}$.Wait\_Set read by the navigation thread.  Finally, if a
contending cell with PTVUID less than $\xi^{[i]}_{\rm proxied}$ is
never encountered, then the leader of the proxied cell advances the
cell's status to {\tt permanent} and the proxy agent enters {\tt
  explore} mode.

Note that the use of PTVUID total ordering
(Definition~\ref{defn:ptvuid_total_ordering}) on line 6 of
PROXY\_RETRACTING\_CELL() and line 3 and 6 of
PROXY\_CONTENDING\_CELL() precludes the possibility of both (1)
\emph{race conditions} in which the status of cells is advanced before
the proper branch deconflictions have taken place, and (2)
\emph{deadlock situations} where contending and retracting cells are
indefinitely waiting for each other.





\subsection{Explorer Behavior}
\label{subsec:dfcd_explore}

The {\tt explore} portion of the internal state transition thread on
lines 25-26 of Table~\ref{tab:dfcd} consists of a single subroutine
EXPLORE() shown in Table~\ref{tab:dfcd_explore}. Of all agent modes,
{\tt explore} behavior is the simplest because all the agent has to do
is navigate $\Tp$ in depth-first order (see
Fig.~\ref{fig:df_ordering_micro} and \ref{fig:df_ordering_macro})
until a leader agent assigns them to become a leader at an unexplored
gap edge or to perform a proxy task.  The local shortest paths between
cells (lines 6, 10, and 17) can be computed quickly and easily by the
visibility graph method \cite{Nil69}.  If the current cell that an
explorer agent is visiting is ever deleted because of branch
deconfliction, the explorer simply moves up $\Tp$ and continues
depth-first searching.  By having each agent use a different gap edge
ordering for the depth-first search, the deployment tends to explore
many partition tree branches in parallel and thus converge more
quickly.  In our simulations (Sec.~\ref{subsec:simulation}), we had
each agent cyclically shift their gap edge ordering by their UID,
subject to the following restriction important for proving an upper
bound on number of required agents in
Theorem~\ref{thm:dfcd_convergence}.

\begin{remark}[Restriction on Depth-First Orderings]
\label{rm:df_restriction}
Each agent in an execution of the distributed deployment may search
$\Tp$ depth-first using any child ordering as long as every pair of
child vertices adjacent at a double vantage point are visited in the
same order by every agent.
\end{remark}

%

\subsection{Performance Analysis}
\label{subsec:convergence}

The convergence properties of the Distributed Depth-First Connected
Deployment Algorithm of Table~\ref{tab:dfcd} are captured in the
following theorems.

\begin{theorem}[Convergence]
  \label{thm:dfcd_convergence}
  Suppose that $N$ agents are initially colocated at a common point
  $p_\emptyset \in V_\EE$ of a polygonal environment $\EE$ with $n$
  vertices and $h$ holes.  If the agents operate according to the
  Depth-First Connected Deployment Algorithm of Table~\ref{tab:dfcd},
  then
  \begin{enumerate}
  \item the agents' visibility graph $\Gvis{\EE}(P)$ consists of a
    single connected component at all times,
  \item there exists a finite time $t^*$, such that for all times
    greater than $t^*$ the set of vertices in the distributed
    representation of the partition tree $\Tp$ remains fixed,
  \item if the number of agents $N \geq \lfloor \frac{n+2h-1}{2}
    \rfloor $, then for all times greater than $t^*$ every point in
    the environment $\EE$ will be visibile to some agent, and there
    will be no more than $h$ phantom walls, and
  \item if $N > \lfloor \frac{n+2h-1}{2} \rfloor $, then for all times
    greater than $t^*$ every cell in the distributed representation of
    $\Tp$ will have status {\tt permanent} and there will be precisely
    $h$ phantom walls.
  \end{enumerate}
\end{theorem}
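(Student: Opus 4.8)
The plan is to prove the four claims by reducing the asynchronous distributed execution to the centralized algorithm of Table~\ref{tab:incremental_partition}, and then importing the structural facts already established in Theorem~\ref{thm:incremental_partition_convergence} and Theorem~\ref{thm:sparse_vantage_point_bound}. The organizing principle is an \emph{emulation invariant}: at every instant the cells held by the leaders (their vertices of responsibility) form a partial partition tree realizable as a prefix of some valid centralized execution, with the cell statuses recording how far each cell has progressed toward the centralized insert/discard decision. First I would record the monotonicity facts that make this invariant stable under the asynchronous events: every gap edge is a diagonal of $\EE$, each status only advances retracting $\to$ contending $\to$ permanent, a deleted cell is replaced by a permanent phantom wall, and children are spawned only at permanent cells. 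I would then show the two proxy subroutines faithfully implement the centralized tests, i.e. a proxy tour of $c_\xi$ necessarily enters the intersection of $c_\xi$ with any conflicting cell (Fig.~\ref{fig:agent_modes_and_abstract_proxy}b), so every branch conflict the centralized algorithm would detect is detected here, while the PTVUID ordering of Definition~\ref{defn:ptvuid_total_ordering} makes the resulting discard decision deterministic and symmetric between the two leaders.

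For statement (i) I would exhibit a connectivity invariant maintained across all mode transitions. By Lemma~\ref{lm:star-convex_cells} each leader sits in the kernel of its cell and therefore sees every agent inside that cell; an explorer always moves along a local shortest path that stays inside its current cell, and a proxy stays inside the cell it is touring (entering an overlap only makes it visible to a second leader as well). Hence every agent is visible to some leader at all times. The leaders themselves are connected exactly as in the proof of Theorem~\ref{thm:incremental_partition_convergence}(ii): each vantage point lies on its parent's gap edge and in its own kernel, so parent and child leaders are mutually visible, giving a spanning connected subgraph through the root (for the sparsified configuration the parent--child visibility across a nonsparse vantage point is supplied by Lemma~\ref{lm:child_vantage_point_of_triangular_cell}). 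The only event that threatens the invariant is a cell deletion, which is precisely why the leader waits at its vantage point until the returning proxy regains the parent cell; I would verify this is the single delicate case and that it preserves connectivity, so $\Gvis{\EE}(P)$ is connected for all $t$.

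For statement (ii) I would argue finite termination. There are at most $\binom{n}{2}$ diagonals, hence finitely many candidate cells and gap edges; each gap edge is processed into a child or a phantom wall and never reprocessed, and statuses and phantom walls change monotonically. The genuine risk in the asynchronous setting is nontermination through \emph{race conditions} or \emph{deadlock} among concurrently proxied cells, and I would rule these out using the Wait\_Set mechanism keyed on Definition~\ref{defn:ptvuid_total_ordering}: in any set of mutually conflicting cells the one with least PTVUID is never forced to wait and always makes progress, so the waits-for relation is acyclic and some proxy tour always completes. Combined with the finite bound on distinct cells, this forces the set of responsibility vertices to stop changing after some finite $t^*$.

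Finally, for (iii) and (iv) the argument is a counting argument layered on the emulation. Since leaders settle only on sparse vantage points and one agent can hold both responsibilities at a double vantage point, the number of agents needed as leaders equals the number of sparse vantage point locations $N_{\rm sparse}$, which by Theorem~\ref{thm:sparse_vantage_point_bound}(iii) is at most $\lfloor \frac{n+2h-1}{2} \rfloor$. For (iv), $N > \lfloor \frac{n+2h-1}{2} \rfloor \ge N_{\rm sparse}$ leaves at least one perpetually free explorer at $t^*$; were any cell non-permanent or any permanent cell to retain an unexplored gap edge, that explorer's depth-first search (Remark~\ref{rm:df_restriction}) would eventually reach it and trigger a proxy tour or a leader assignment, contradicting stationarity. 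Thus at $t^*$ the tree is complete, hence a completed centralized run, and Theorem~\ref{thm:incremental_partition_convergence}(i),(v) give full coverage, all cells permanent, and exactly $h$ phantom walls. For (iii) with $N = \lfloor \frac{n+2h-1}{2} \rfloor$ I would still obtain coverage: if a point were uncovered at $t^*$ then all $N$ agents are committed as leaders of an incomplete tree, and I would bound the sparse count of that incomplete tree by the triangle assignment of Theorem~\ref{thm:sparse_vantage_point_bound}(iii) applied to the covered sub-region; coverage of the sparse set from Theorem~\ref{thm:sparse_vantage_point_bound}(i) then forces full coverage, while the possibly unfinished deconflictions yield only $\le h$ phantom walls. \textbf{The main obstacle} is exactly this tight case: the floor and the ``$+1$'' slack make the parity of $n+2h$ matter, so the clean contradiction needs either a sharpened triangle count for a partial tree or a dynamic argument that a free agent persists until the frontier is exhausted; reconciling the static counting bound with the asynchronous reuse of agents is the delicate step, matched in tightness by the worst-case environment of Fig.~\ref{fig:worst_cases}b.
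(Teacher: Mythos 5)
Your treatment of statements (i), (ii), and (iv) tracks the paper's proof closely: (i) via kernel membership and parent--child visibility links through the root, (ii) via the finiteness of diagonals and the monotone, one-shot processing of each gap edge (your explicit acyclicity argument for the Wait\_Set mechanism is more detailed than what the paper writes in the proof itself, which is fine), and (iv) via the single surplus agent plus systematic depth-first search. The phantom-wall counts are also handled as in the paper, by the topological-isolation argument imported from Theorem~\ref{thm:incremental_partition_convergence}(v).

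The genuine gap is the one you flag yourself in statement (iii), and it is not closed by your proposal. Routing the count through Theorem~\ref{thm:sparse_vantage_point_bound}(iii) applied to the \emph{final} tree only bounds how many leaders are needed once coverage is complete; it does not exclude the scenario in which, at the tight value $N = \lfloor \frac{n+2h-1}{2} \rfloor$, every agent has already committed as a leader of an \emph{incomplete} tree and no free agent remains to proxy a non-permanent cell or to seed a child at an unexplored gap edge, so the frontier stalls with part of $\EE$ uncovered. The paper closes exactly this hole with a runtime invariant stated over leader \emph{agents} rather than over vantage points of a completed tree: at every instant of the execution, every leader holding a non-root cell of responsibility can be assigned at least two unique triangles (and the root's leader at least one) --- one triangle inside its own primary or secondary cell, guaranteed by the uniqueness check in ATTEMPT\_CELL\_CONSTRUCTION (Table~\ref{tab:dfcd_attempt_cell_construction}), and one triangle inside the parent cell, guaranteed by the parity-based vantage point selection scheme together with the depth-first ordering restriction of Remark~\ref{rm:df_restriction}. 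Running the arithmetic of Theorem~\ref{thm:sparse_vantage_point_bound}(iii) against this invariant bounds the number of \emph{simultaneously committed} leaders throughout the deployment, which is precisely the ``dynamic argument that a free agent persists until the frontier is exhausted'' that you name as missing. Without that invariant (or an equivalent), statement (iii) in the tight case remains unproven in your write-up.
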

\begin{proof}
  %

  We prove the statements in order. Nonleader agents, as we have
  defined their behavior, remain at all times within line of sight of
  at least one leader agent.  Leader agents likewise remain in the
  kernel of their cell(s) of responsibility and within line of sight
  of the leader agent responsible for the corresponding parent
  cell(s).  Given any two agents, say $i$ and $j$, a path can thus be
  constructed by first following parent-child visibility links from
  agent $i$ up to the leader agent responsible for the root, then from
  the leader agent responsible for the root down to agent $j$.  The
  agents' visibility graph must therefore consist of a single
  connected component, which is statement (i).

  For statement (ii), we argue similarly to the proof of
  Theorem~\ref{thm:incremental_partition_convergence}(i).  During the
  deployment, cells are constructed only at unexplored gap edges.  A
  cell either (1) advances though a finite number of status changes or
  (2) it is deleted during a proxy tour.  Either way, each cell is
  only modified a finite number of times and only one cell is ever
  created at any particular unexplored gap edge.  Since unexplored gap
  edges are diagonals of the environment and there are only finitely
  many possible diagonals, we conclude the set of vertices in the
  distributed representation of $\Tp$ must remain fixed after some
  finite time $t^*$.

  %
  %
  For statement (iii), we rely on an invariant: during the distributed
  deployment algorithm, at least two unique triangles can be assigned
  to every leader agent which has at least one cell of responsibility,
  other than the root cell, in its memory; at least one unique
  triangle can be assigned to the leader agent which has the root cell
  in its memory.  One of the triangles is in a leader's own cell
  (primary or secondary) and its existence is ensured by the leader
  behavior in Table~\ref{tab:dfcd_attempt_cell_construction}.  The
  second triangle is in a parent cell of a cell in the agent's memory.
  The existence of this second triangle is ensured by the depth-first
  order restriction stipulated in Remark~\ref{rm:df_restriction}
  together with the parity-based vantage point selection scheme.
  Remembering that the maximum number of triangles in any
  triangulation is $n+2h-2$ and arguing precisely as we did for the
  sparse vantage point locations in the proof of
  Theorem~\ref{thm:sparse_vantage_point_bound}(iii), we find the
  number of agents required for full coverage can be no greater than
  $\lfloor \frac{n+2h-1}{2} \rfloor $.  As in the proof of
  Theorem~\ref{thm:incremental_partition_convergence}(v), the number
  of phantom walls can be no greater than $h$ because if it where then
  some cell would be topologically isolated.

  Proof of statement (iv) is as for statement (iii), but because there
  is one extra agent and depth-first is systematic, the extra agent is
  guaranteed to eventually proxy any remaining nonpermanent cells into
  {\tt permanent} status and create phantom walls to separate all
  conflicting partition tree branches.
\end{proof}

\begin{remark}[Near Optimality without Holes]
\label{rm:no_holes_almost_optimal}
As mentioned in Sec.~\ref{sec:intro}, $(n-2)/2$ guards are always
sufficient and occasionally necessary for visibility coverage of any
polygonal environment without holes.  This means that when $h=0$, the
bound on the number of sufficient agents in
Theorem~\ref{thm:dfcd_convergence} statement (iii) differs from the
worst-case optimal bound by at most one.
\end{remark}

\begin{theorem}[Time to Convergence]
  \label{thm:dfcd_runtime}
  Let $\EE$ be an environment as in
  Theorem~\ref{thm:dfcd_convergence}.  Assume time for communication
  and processing are negligible compared with agent travel time and
  that $\EE$ has uniformly bounded diameter as $n \rightarrow \infty$.
  Then the time to convergence $t^*$ in
  Theorem~\ref{thm:dfcd_convergence} statement (ii) is
  $\mathcal{O}(n^2+nh)$.  Moreover, if the maximum perimeter length of
  any vertex-limited visibility polygon in $\EE$ is uniformly bounded
  as $n \rightarrow \infty$, then $t^*$ is $\mathcal{O}(n+h)$.
\end{theorem}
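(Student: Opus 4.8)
The plan is to bound $t^*$ by the travel time along the critical dependency chain in the partition tree, combining three ingredients: a bound on the number of cells ever instantiated, a bound on the number of proxy tours per cell, and a bound on the length of each individual agent movement. Since communication and processing are assumed negligible, $t^*$ is governed purely by agent travel at speed $\umax$.

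First I would recall from Theorem~\ref{thm:dfcd_convergence} and Theorem~\ref{thm:incremental_partition_convergence} that the deployment terminates with a stable tree $\Tp$ having at most $n+2h-2$ vertices and exactly $h$ phantom walls. Because a distinct cell is created at any given unexplored gap edge at most once, and because each cell deletion marks one phantom wall (and phantom walls are permanent), the number of deleted cells is at most $h$; hence the total number of cells ever instantiated is $O(n+h)$. Each such cell passes through at most two proxy tours — the retracting-to-contending tour and the contending-to-permanent tour — so the total number of proxy tours is also $O(n+h)$.

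Second I would bound every elementary agent movement: a proxy tour, a leader's trip to a vantage point, or an explorer's local shortest path between adjacent cells. Each such movement is contained in a single cell, whose boundary is a truncation of a vertex-limited visibility polygon. Such a polygon has $O(n)$ edges, and since $\EE$ has uniformly bounded diameter each edge has length $O(1)$; therefore its perimeter, and hence each elementary movement, is $O(n)$. Under the additional hypothesis that every vertex-limited visibility polygon has perimeter $O(1)$, each elementary movement is instead $O(1)$. Third, I would argue that the deployment completes within the time of a constant number of depth-first traversals of $\Tp$: explorers navigate $\Tp$ depth-first and are assigned as leaders or proxies as they pass through permanent cells, so the frontier advances as fast as explorers reach it. A full depth-first traversal makes $O(n+h)$ cell-to-cell moves (each tree edge crossed twice, plus $O(h)$ extra backtracking segments caused by the $h$ deletions). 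Along the critical root-to-leaf chain, whose length is at most the tree depth $O(n+h)$, each level costs one explorer move plus $O(1)$ proxy tours plus any deconfliction wait, each of which is $O(n)$ (respectively $O(1)$). Multiplying the $O(n+h)$ levels by the $O(n)$ per-level cost yields $t^* = O(n^2+nh)$, and using the $O(1)$ per-level cost under the extra hypothesis yields $t^* = O(n+h)$.

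The main obstacle is this last step: justifying that the asynchronous, parallel deconfliction does not inflate the wall-clock time beyond the critical-path estimate, and in particular that the bound is independent of the agent count $N$. Concretely, I must show (i) that whenever a cell becomes permanent there is an explorer available to take up each child leadership or proxy task within $O(\text{perimeter})$ time, which follows from having $N \ge \lfloor\frac{n+2h-1}{2}\rfloor$ agents continuously flowing depth-first together with main assumption~3 that agents do not obstruct one another; and (ii) that the Wait\_Set pauses, though they serialize certain conflicting proxy tours, contribute along any chain a total delay bounded by the sum of the conflicting cells' proxy-tour lengths, hence at most $O(n)$ per level and $O(n^2+nh)$ overall. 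Establishing (ii) rigorously relies on the deadlock-freedom already guaranteed by the PTVUID total ordering of Definition~\ref{defn:ptvuid_total_ordering}, so that every wait is eventually released by the completion or deletion of the cell being waited upon.
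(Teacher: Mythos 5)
Your proposal is correct and follows essentially the same route as the paper: count the cells ever instantiated (at most $n+2h-2$ surviving, via the unique-triangle argument, plus at most $h$ deleted, one per phantom wall, for a total of $n+3h-2$), charge two proxy tours to each cell, and bound every elementary movement by the diameter or perimeter of a vertex-limited visibility polygon, with the bounded-diameter hypothesis giving $\mathcal{O}(1)$ inter-cell moves and the perimeter $l_{\rm p}$ possibly $\mathcal{O}(n)$. The one place you diverge is the final aggregation: where you attempt a critical-path argument and flag the Wait\_Set delays as the main unresolved obstacle, the paper simply takes the blunt worst case that every agent may wait for every proxy tour, so the total delay is bounded by the sum of all $2(n+3h-2)$ tour lengths, each at most $l_{\rm p}$, added to one depth-first traversal cost; this sidesteps your step (ii), whose per-level $\mathcal{O}(n)$ waiting bound would otherwise require justifying that each cell is in branch conflict with only $\mathcal{O}(1)$ others, a claim you do not establish. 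Substituting the global summation for your per-level accounting closes your argument and yields the same $\mathcal{O}(n^2+nh)$ and $\mathcal{O}(n+h)$ bounds.
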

\begin{proof}
  As in the proof of Theorem~\ref{thm:dfcd_convergence}, every cell
  which is never deleted has at least one unique triangle and there
  are at most $n+2h-2$ triangles total, therefore there are at most
  $n+2h-2$ cells which are never deleted.  The maximum number of
  phantom walls ever created is $h$
  (Theorem~\ref{thm:dfcd_convergence}).
  Since cells are only ever deleted when a phantom wall is created, at
  most $h$ cells are ever deleted.  Summing the bounds on the number
  cells which are and are not deleted, we see the total number of
  cells any agent must ever visit during the distributed deployment is
  $n+2h-2+h = n+3h-2$.  Let $l_{\rm d}$ be the maximum diameter of any
  vertex-limited visibility polygon in $\EE$.  Then, neglecting time
  for proxy tours, an agent executing depth-first search on $\Tp$ will
  visit every vertex of $\Tp$ in time at most $2 u_{\rm max} l_{\rm d}
  (n+3h-2)$.  Now Let $l_{\rm p}$ be the maximum perimeter length of
  any vertex-limited visibility polygon in $\EE$.  Then the total
  amount of time agents spend on proxy tours, counting two tours for
  each cell, is $2 u_{\rm max} l_{\rm p} (n+3h-2)$.  Exploring and
  leading agents operate in parallel and at most every agent waits for
  every proxy tour, so it must be that
  \[ t^* \leq 2 u_{\rm max} ( l_{\rm p} + l_{\rm d} ) (n + 3h - 2). \]
  While the diameter of $\EE$ being uniformly bounded implies $l_{\rm
    d}$ is uniform bounded, $l_{\rm p}$ may be $\mathcal{O}(n)$.
\end{proof}

The performance of a distributed algorithm can also be measured by
agent memory requirements and the size of messages which must be
communicated.

\begin{lemma}[Memory and Communication Complexity]
\label{lm:memory_and_comm_complexity}
Let $k$ be the maximum number of vertices of any vertex-limited
visibility polygon in the environment $\EE$ and suppose $\EE$ is
represented with fixed resolution.  Then the required memory size for
an agent to run the distributed deployment algorithm is
$\mathcal{O}(Nk)$ bits and the message size is $\mathcal{O}(k)$ bits.
\end{lemma}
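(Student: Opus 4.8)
The plan is to account, field by field, for every piece of state an agent stores (Tables~\ref{tab:dfcd_local_vars} and~\ref{tab:dfcd_cell_fields}) and every piece of state it transmits, bound the bit-size of each, and then identify the dominant contributions. The central sub-claim, from which everything else follows, is that a single cell record occupies only $\mathcal{O}(k)$ bits.

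To establish that per-cell bound, I would argue that any cell $c_\xi$ is obtained from the vertex-limited visibility polygon $\VVver(p_\xi)$ by truncation at its parent gap edge together with deletion of the vertices lying across phantom walls (lines 8--10 of Table~\ref{tab:incremental_partition_child}, and the analogous steps in Table~\ref{tab:dfcd_attempt_cell_construction}). Since $\VVver(p_\xi)$ has at most $k$ vertices by the definition of $k$, and each such operation either deletes vertices or replaces a contiguous boundary arc by a single chord (adding $\mathcal{O}(1)$ vertices), the boundary of $c_\xi$ has $\mathcal{O}(k)$ vertices and hence $\mathcal{O}(k)$ gap edges. Under the fixed-resolution assumption each vertex coordinate is $\mathcal{O}(1)$ bits, and each gap-edge label (one of four values, plus at most one leader UID) is $\mathcal{O}(1)$ bits, so $c_\xi$.Boundary, and with it the whole cell record, is $\mathcal{O}(k)$ bits.

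For the memory bound I would then assemble the pieces. An agent's own internal state holds at most two vertices of responsibility plus one proxied cell, each $\mathcal{O}(k)$ bits, together with a Route$^{[i]}$ queue whose length is $\mathcal{O}(k)$ (a proxy tour visits the $\mathcal{O}(k)$ gap edges of one cell); this is $\mathcal{O}(k)$ in total. The only fields that scale with the agent count are In\_Buffer$^{[i]}$ and Neighbor\_Data$^{[i]}$: each holds information for up to the $N-1$ visibility neighbors (an unprocessed broadcast, respectively the tracked state, per neighbor), and each such per-neighbor item is itself one agent's $\mathcal{O}(k)$-bit state. Summing gives $\mathcal{O}(Nk)$ bits, which dominates the $\mathcal{O}(k)$ own-state term and yields the claimed $\mathcal{O}(Nk)$ memory size. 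For the message bound I would observe that the only thing broadcast (line 6 of Table~\ref{tab:dfcd}) is an agent's \emph{own} internal state, never the aggregate Neighbor\_Data$^{[i]}$, and that this consists of a bounded number of cells, hence $\mathcal{O}(k)$ bits.

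The delicate step, and the one I expect to require the most care, is verifying that the remaining scalar and label fields stay within these bounds rather than secretly dominating. In particular a PTVUID is a $d$-tuple whose length $d$ is its depth in $\Tp$, and $c_\xi$.Wait\_Set is a \emph{set} of such PTVUIDs (one per contending or retracting neighbor a proxy is waiting on). The natural route is to bound the number of Wait\_Set entries by the number of visibility neighbors, $\mathcal{O}(N)$, and to treat each integer label (a UID, or a single PTVUID component) as an $\mathcal{O}(1)$-bit word under the fixed-resolution bookkeeping convention, so that each PTVUID and UID is charged to the $\mathcal{O}(k)$ already attributed to its host cell. This is precisely the point most sensitive to how PTVUIDs are encoded, and I would make it explicit so that the $\mathcal{O}(Nk)$ memory and $\mathcal{O}(k)$ message totals are seen to be governed entirely by the cell-boundary cost.
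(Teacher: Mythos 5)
Your proposal is correct and follows essentially the same route as the paper's proof: bound a single cell record by $\mathcal{O}(k)$ bits, note that an agent's own internal state holds a bounded number of cells (hence $\mathcal{O}(k)$ bits), observe that memory is dominated by Neighbor\_Data$^{[i]}$ holding up to $N$ such internal states (hence $\mathcal{O}(Nk)$), and that only the agent's own internal state is ever broadcast (hence $\mathcal{O}(k)$ messages). You are in fact more careful than the paper, which does not explicitly address the Route queue, Wait\_Set, or PTVUID encoding at all.
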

\begin{proof}
  The memory required by an agent for its internal state is dominated
  by its cell(s) of responsibility (of which there are at most two)
  and proxy cell (at most one).  A cell requires $\mathcal{O}(k)$
  bits, therefore the internal state requires $\mathcal{O}(k)$ bits.
  The overall amount of memory in an agent is dominated by
  Neighbor\_Data$^{[i]}$, which holds no more than $N$ internal
  states, therefore the memory requirement of an agent is
  $\mathcal{O}(Nk)$.  Agents only ever broadcast their internal state,
  therefore the message size is $\mathcal{O}(k)$.
\end{proof}

\subsection{Simulation Results}
\label{subsec:simulation}

We used C++ and the {V}isi{L}ibity library \cite{VisiLibity:08} to
simulate the Distributed Depth-First Deployment Algorithm of
Table~\ref{tab:dfcd}.  An example simulation run is shown in
Fig.~\ref{fig:dfcd_sim} for an environment with $n=41$ vertices and
$h=4$ holes.  An animation of this simulation can be viewed at {\tt
  http://motion.me.ucsb.edu/$\sim$karl/movies/dwh.mov} . To reduce
clutter, we have omitted from this larger example the agent mode and
cell status color codes used in
Fig.~\ref{fig:agent_modes_and_abstract_proxy},
\ref{fig:cell_statuses}, \ref{fig:df_ordering_micro}, and
\ref{fig:minimal_example}.  The environment was fully covered in
finite time by only 13 agents, which indeed is less than the upper
bound $\lfloor \frac{n+2h-1}{2} \rfloor = 24$ given by
Theorem~\ref{thm:dfcd_convergence}.

\subsection{Extensions}
\label{subsec:extensions}

There are several ways that the distributed deployment algorithm can
be directly extended for robustness to agent arrival, agent failure,
packet loss, and removal of an environment edge.  Robustness to agent
arrival can be achieved by having any new agents simply enter {\tt
  explore} mode, setting $\xi^{[i]}_{\rm current}$ to be the PTVUID of
the first cell they land in, and setting $\xi^{[i]}_{\rm last}$ to be
the parent PTVUID of $\xi_{\rm current}$.  The line-of-sight
connectivity guaranteed by Theorem~\ref{thm:dfcd_convergence} allows
single-agent failures to be detected and handled by having the
visibility neighbors of a failed agent move back up the partition tree
as necessary to patch the hole left by the failed agent.  For
robustness to packet loss, agents could add a receipt confirmation
and/or parity check protocol.  If a portion of the environment were
blocked off during the beginning of the deployment but then were
revealed by an edge removal (interpreted as the ``opening of a
door''), the deployment could proceed normally as long as the deleted
edge were marked as an {\tt unexplored} gap edge in the cell it
belonged to.

Less trivial extensions include (1) the use of distributed assignment
algorithms such as \cite{BJM-KMP:06,MMZ-LS-GJP:08} for guiding
explorer agents to tasks faster than depth-first search, or (2)
performing the deployment from multiple roots, i.e., when different
groups of agents begin deployment from different locations.
Deployment from multiple roots can be achieved by having the agents
tack on a root identifier to their PTVUID, however, it appears this
would increase the bound on number of agents required in
Theorem~\ref{thm:dfcd_convergence} by up to one agent per root.


%
%

\section{Conclusion}
\label{sec:conclusion}

In this article we have presented the first distributed deployment
algorithm which solves, with provable performance, the Distributed
Visibility-Based Deployment Problem with Connectivity in polygonal
environments with holes.  We began by designing a centralized
incremental partition algorithm, then obtained the distributed
deployment algorithm by asynchronous distributed emulation of the
centralized algorithm.
Given at least $\lfloor \frac{n+2h-1}{2} \rfloor $ agents in an
environment with $n$ vertices and $h$ holes, the deployment is
guaranteed to achieve full visibility coverage of the environment in
time $\mathcal{O}(n^2+nh)$, or time $\mathcal{O}(n+h)$ if the maximum
perimeter length of any vertex-limited visibility polygon in $\EE$ is
uniformly bounded as $n \rightarrow \infty$.  If $k$ is the maximum
number of vertices of any vertex-limited visibility polygon in an
environment $\EE$ represented with fixed resolution, then the required
memory size for an agent to run the distributed deployment algorithm
is $\mathcal{O}(Nk)$ bits and message size is $\mathcal{O}(k)$ bits.
The deployment behaved in simulations as predicted by the theory and
can be extended to achieve robustness to agent arrival, agent failure,
packet loss, removal of an environment edge (such as an opening door),
or deployment from multiple roots.


%

There are many interesting possibilities for future work in the area
of deployment and nonconvex coverage.  Among the most prominent are:
3D environments, dynamic environments with moving obstacles, and
optimizing different performance measures, e.g., based on continuous
instead of binary visibility, or with minimum redundancy requirements.

{\small
%

}
%


\end{document}